\documentclass{article}

\usepackage{microtype}
\usepackage{graphicx}
\usepackage{subcaption}
\usepackage{booktabs} % for professional tables

\usepackage{hyperref}

\usepackage[accepted]{icml2026}

\usepackage{amsmath}
\usepackage{amssymb}
\usepackage{mathtools}
\usepackage{amsthm}
\usepackage{enumitem}

\usepackage{multirow}
\usepackage{multicol}
\usepackage{makecell}
\usepackage{bbm}
\usepackage{xcolor}
\usepackage{booktabs}
\usepackage{subcaption}  
\usepackage[capitalize,noabbrev]{cleveref}
\theoremstyle{plain}
\newtheorem{theorem}{Theorem}
\newtheorem{proposition}[theorem]{Proposition}

\theoremstyle{definition}

\theoremstyle{remark}

\usepackage{utfsym}
\newcommand{\crossmark}{\scalebox{0.75}{\usym{2613}}}

\makeatletter

\newcommand*{\shifttext}[2]{%
  \settowidth{\@tempdima}{#2}%
  \makebox[\@tempdima]{\hspace*{#1}#2}%
}
\makeatother

\usepackage[table]{xcolor}
\usepackage{colortbl}
\definecolor{mygray}{gray}{0.9}

\newcommand{\ind}[1]{\mathbf{1}}

\def\ourmodel{MuHL}

\usepackage[textsize=tiny]{todonotes}

\icmltitlerunning{Learning Multi-Scale Hypergraph for High-Order Brain Connectivity Analysis}

\begin{document}

\twocolumn[
  \icmltitle{Learning Multi-Scale Hypergraph for High-Order Brain Connectivity Analysis}
  % \icmltitle{Learning High-Order Brain Connectivity via Wavelet-Guided Adaptive Multi-Scale Hypergraphs}

  % It is OKAY to include author information, even for blind submissions: the
  % style file will automatically remove it for you unless you've provided
  % the [accepted] option to the icml2026 package.

  % List of affiliations: The first argument should be a (short) identifier you
  % will use later to specify author affiliations Academic affiliations
  % should list Department, University, City, Region, Country Industry
  % affiliations should list Company, City, Region, Country

  % You can specify symbols, otherwise they are numbered in order. Ideally, you
  % should not use this facility. Affiliations will be numbered in order of
  % appearance and this is the preferred way.
  \icmlsetsymbol{equal}{*}

  \begin{icmlauthorlist}
    \icmlauthor{Jaeyoon Sim}{a}
    \icmlauthor{Soojin Hwang}{a}
    \icmlauthor{Seunghun Baek}{a}
    \icmlauthor{Guorong Wu}{b}
    \icmlauthor{Won Hwa Kim}{a}
  \end{icmlauthorlist}

  % \icmlaffiliation{yyy}{Department of XXX, University of YYY, Location, Country}
  % \icmlaffiliation{comp}{Company Name, Location, Country}
  % \icmlaffiliation{sch}{School of ZZZ, Institute of WWW, Location, Country}
  \icmlaffiliation{a}{Pohang University of Science and Technology (POSTECH), Pohang, South Korea}
  \icmlaffiliation{b}{University of North Carolina at Chapel Hill, Chapel Hill, USA}
  
  \icmlcorrespondingauthor{Won Hwa Kim}{wonhwa@postech.ac.kr}
  % \icmlcorrespondingauthor{Firstname2 Lastname2}{first2.last2@www.uk}

  % You may provide any keywords that you find helpful for describing your
  % paper; these are used to populate the "keywords" metadata in the PDF but
  % will not be shown in the document
  \icmlkeywords{Machine Learning, ICML}

  \vskip 0.3in
]

% this must go after the closing bracket ] following \twocolumn[ ...

% This command actually creates the footnote in the first column listing the
% affiliations and the copyright notice. The command takes one argument, which
% is text to display at the start of the footnote. The \icmlEqualContribution
% command is standard text for equal contribution. Remove it (just {}) if you
% do not need this facility.

% Use ONE of the following lines. DO NOT remove the command.
% If you have no special notice, KEEP empty braces:
\printAffiliationsAndNotice{}  % no special notice (required even if empty)
% Or, if applicable, use the standard equal contribution text:
% \printAffiliationsAndNotice{\icmlEqualContribution}

\begin{abstract}

Understanding complex interactions between brain regions is critical for early neurodegenerative disease classification 
such as Alzheimer’s Disease (AD) and Parkinson's Disease (PD). 
While graph-based models are widely used to analyze brain networks, most existing approaches primarily focus on pairwise interactions between directly connected nodes, limiting their ability to capture higher-order dependencies across multiple regions.
Although hypergraph-based methods have been proposed to model higher-order relations, many rely on predefined hyperedges or restrict learning to hyperedge weights, reducing flexibility and limiting their capacity to capture multi-resolution structural patterns.
In this regard, we introduce an adaptive multi-scale hyperedge learning framework, i.e., \ourmodel{}, which constructs hierarchical node features and dynamically learns high-order interactions 
through continuous hyperedge construction over multi-resolution graph signals.
Extensive experiments on multiple brain network benchmarks demonstrate that \ourmodel{} consistently improves disease classification performance across different stages, and further identifies key regions of interest (ROIs) and their group-wise interactions from the learned hyperedges that are associated with disease progression,
highlighting its potential as a powerful tool for brain network analysis in neurodegenerative disorders. 
\end{abstract}

% \begin{figure*}[t!]
%     \centering
%     \renewcommand{\arraystretch}{1.0}
%     \renewcommand{\tabcolsep}{0.15cm}
%     \scalebox{1.0}{
%     \begin{tabular}{cccc}
%     \multicolumn{4}{c}{\includegraphics[width=0.97\linewidth]{FIG/concept.png}} \\
%     % \shifttext{50pt}{\raisebox{0\height}[0pt][0pt]{(a) Scale s=0.5} & 
%     % (b) Scale s=1.0 & 
%     % (c) Scale s=2.0 & 
%     % (d) Scale s=4.0 \\
%     \end{tabular}}
%     \caption{
%     Visualization of scale-aware high-order relation modeling via hypergraph.
%     Node features centered on a target node are smoothed across increasing scales, with colors indicating the similarity of features.
%     As the scale increases from (a) to (d), feature propagation gradually shifts from being highly localized to diffusing over broader regions, capturing global similarity with more distant nodes.
%     The red edges represents a sparsely constructed hyperedge at each scale, adaptively determined by the corresponding smoothed features.
%     }
%     \label{fig:concept}
% \end{figure*}

\section{Introduction}
\label{sec:introduction}
\vspace{-3pt}

% high-order relation 이 다양한 분야에서 중요
Capturing relationships of multiple variables beyond one-to-one interaction is an important problem in various domains such as social network analysis (e.g., modeling social groups) \cite{social1,social2,social3}, computer vision (e.g., scene graph generation) \cite{scene2,scene3} and systems biology (e.g., complexes of proteins) \cite{biology1,biology2,biology3}. 
%Characterizing the high-order relationship also plays a critical role in neuroimage analysis by modeling group-wise interactions between brain regions of interest (ROIs). 
Similarly, in neuroimaging, characterizing high-order interactions between brain regions of interest (ROIs) is crucial for early neurodegenerative disease detection \cite{neuro_high}. 
Typically, a complex wiring representation of the brain as a brain network, e.g., structural connectivity from Diffusion Tensor Image (DTI) or functional connectivity from functional MRI, is derived to investigate the interactions between ROIs.

For example, degenerative disorders such as Alzheimer’s disease (AD) and Parkinson’s disease (PD) are characterized by disruptions in structural brain connectivity \cite{str2,str1} as well as regional abnormalities, including cortical atrophy and reduced brain volume \cite{reg1,reg2}. 
These changes reflect spatially distributed neuronal degeneration and have been widely adopted as imaging biomarkers for early diagnosis and disease progression monitoring. Modern neuroimaging modalities provide complementary views of these disease-related alterations. 
For instance, Diffusion Weighted Imaging (DWI) reveals connectivity disruptions by enabling the construction of structural brain networks from white matter fiber pathways, while region-wise measures derived from Magnetic Resonance Imaging (MRI) and Positron Emission Tomography (PET) offer a comprehensive multimodal characterization of disease-specific structural and functional changes \cite{mm1,mm2}.

% 기존 그래프 기반 모델(GCN, GAT 등)은 주로 pairwise 관계만 고려
% 복잡한 뇌 연결 구조를 설명하기에 부족 (즉, group-wise / high-order 관계 표현이 어려움)
Given the complex structure of brain connectivity, graph-based models provide an effective way to analyze these multi-sensing features by representing both connectivity structures and regional characteristics \cite{braingb,ibgnn,brainnettf,sgcn,bna1,bna2,bna3}.
However, existing methods based on conventional graph convolution typically neglect higher-order relationships among brain regions \cite{gcn,gat,gcnii}, and stacking graph convolution layers indirectly consider these interactions at the cost of oversmoothing problem \cite{oversmoothing}. 
Therefore, standard graph models, relying on pairwise interactions denoted by adjacency, struggle to capture group-wise dependencies effectively and have difficulties in modeling the complex topological changes associated with neurodegenerative disorders.
Such limitations are especially critical for brain networks, where functional and structural interactions often arise from the coordinated activity of multiple regions.
Capturing these group-wise dependencies is essential for characterizing structural and functional patterns that reflect disease-
related alterations in brain organization.

In this regime, a more flexible framework is required to analyze complex brain networks and reveal higher-order relationships among regions of interest (ROIs) associated with specific disease patterns. 
We approach this problem from two complementary perspectives. 
First, we learn adaptive {\em multi-resolution} graph representations from a graph wavelet perspective that encode multi-order node-wise neighborhood information. 
Second, we construct a {\em hypergraph} in which hyperedges connect multiple nodes simultaneously, 
enabling explicit modeling of high-order group interactions that are not represented in the original graph.

Our multi-resolution representation captures ROI characteristics across multi-scale spatial resolutions through diffusion scales in spectral graph filtering. Different diffusion levels control the extent of node smoothing, forming coarse-to-fine neighborhoods within a single atlas and naturally expanding the effective localization range of each ROI without requiring multiple parcellations. Building upon these representations, the hypergraph allows us to capture disease-specific patterns arising from interactions among multiple ROIs. Unlike prior hypergraph methods that rely on predefined hyperedges or only adjust their weights, we learn continuous, scale-aware hyperedges directly from multi-resolution features. Based on these ideas, we propose 
% {\bf M}ulti-scale {\bf A}daptive {\bf S}tructural Learning in {\bf H}ypergraph (\ourmodel).
{\bf Mu}lti-scale {\bf H}yperedge {\bf L}earning (\ourmodel).

% As previous hypergraph-based brain network studies have been suboptimal to characterize connectivity patterns with rigid constraints to construct hypergraphs such as 
% thresholding similarity measures or applying anatomical priors \cite{pri1,pri2}, 
% we design our model to extract the higher-order relationship by flexibly learning the hyperedges. 

% Contribution
\textbf{The key contributions of our work} are: 
\begin{enumerate}[label=\textbf{\roman*)}, leftmargin=*, itemsep=0pt, topsep=1pt]
\item Proposing a novel framework which dynamically learns rich and implicit group-wise node dependencies (i.e., hyperedges),
\item Adaptively capturing structural variations at different scales for hypergraph construction to enhance downstream classification, 
\item Demonstrating theoretical insights that the adaptive hyperedges are derived by modulating node features in the graph wavelet space. 
\end{enumerate}
Extensive validation was performed on two independent public benchmarks: the Alzheimer’s Disease Neuroimaging Initiative (ADNI), using structural networks from DTI and ROI measurements from functional imaging, 
and the Parkinson’s Progression Markers Initiative (PPMI), using fMRI-derived Blood-Oxygen-Level-Dependent (BOLD) signals, which demonstrates the superiority of \ourmodel{} over state-of-the-art methods while providing biological interpretability of brain networks by identifying key sub-structure based on the learned hyperedges.

% \textbf{1)} proposing a novel framework to dynamically learn rich and implicit group-wise node dependencies (i.e., hyperedges),
% \textbf{2)} adaptively capturing structural variations at different scales for hypergraph construction to enhance downstream classification, and
% \textbf{3)} demonstrating theoretical insights that the adaptive hyperedges are derived by modulating node features in the graph wavelet space. 
% Extensive validation was performed on two independent public benchmarks: the Alzheimer’s Disease Neuroimaging Initiative (ADNI), using structural networks from DTI and ROI measurements from functional imaging, 
% and the Parkinson’s Progression Markers Initiative (PPMI), using fMRI-derived Blood-Oxygen-Level-Dependent (BOLD) signals, which demonstrates superiority of MASH over state-of-the-art methods while providing biological interpretability of brain networks by identifying key sub-structure based on the learned hyperedges.
% Experiments using structural brain networks from Diffusion Tensor Imaging (DTI) and ROI measures from functional imaging in the Alzheimer's Disease Neuroimaging Initiative (ADNI) study show that the developed framework achieves promising results for AD classification across multiple disease stages. 
 %, to evaluate the generalizability and effectiveness of our model. 

\section{Related Works}

% \subsection{High-order Graph Learning.}
\noindent\textbf{High-order Graph Learning.}
The vanilla GCN \cite{gcn} and variants of Graph Neural Networks (GNNs) utilize graph convolution to perform feature aggregation from neighbors.
GCNII \cite{gcnii} extends a vanilla GCN with residual connection and identity mapping, and
Graph Attention Network (GAT) \cite{gat} introduces an effective attention mechanism on graphs to assign relationships to neighboring nodes.
These works primarily rely on pairwise interactions and local message passing,
which makes it difficult to capture rich group-wise dependencies in complex graphs (e.g., structural/functional brain connectomes).
% characterized by rich group-wise and non-local dependencies.

% Recent studies have focused capturing multi-order information beyond fixed-scale structural modeling. 

% Several studies have explored incorporating higher-order structures into graphs to capture more detailed geometric information.
% To model complex high-order relationships (e.g., group-wise interactions) among nodes, 
% there are several previous works that utilize hypergraphs.
Several studies have incorporated higher-order structures into graphs to better capture complex geometric relationships. 
In particular, hypergraphs have been widely adopted to model group-wise interactions among multiple nodes.
For example, HGNN \cite{hgnn} captures high-order correlations using hypergraph representations, 
HNHN \cite{hnhn} introduces nonlinear activations on both nodes and hyperedges,
UniGCNII \cite{unigcnii} mitigates over-smoothing in hypergraph learning via residual connections and identity mapping,
HyperDrop \cite{hyperdrop} employs hypergraph-based edge pooling to remove task-irrelevant edges,
% for better graph classification,
and HyperGT \cite{hypergt} encodes both global and local structure by integrating a transformer architecture with a hypergraph. 
Recently, DHHNN \cite{dhhnn} proposes a dynamic hyperbolic hypergraph neural network that iteratively updates hyperedge weights through hyperbolic attention during training.

% Also, multi-resolution analysis has been proposed as a method to approximate signals at multiple scales, capturing multi-order information in a graph.
% Graph Wavelet Neural Network (GWNN) \cite{gwnn} defines a convolution operator that can handle neighborhood with different size using graph wavelets, and
% Multi-scale GNN with Implicit layers (MGNNI) \cite{mgnni} captures multi-scale structures on graphs at multiple resolutions.

% Also, recent studies have focused capturing multi-order information beyond fixed-scale structural modeling.
% Graph Wavelet Neural Network (GWNN) defined a convolution operator that can handle neighborhood with different size  

% \subsection{Brain Network Analysis.}
\noindent\textbf{Brain Network Analysis.}
Recent works have been designed to analyze the brain networks.
They adopted or developed GNN and its variants, as the nodes (i.e., ROIs) within the same brain template have distinct locations and unique identities. 
IBGNN \cite{ibgnn} proposes an interpretable framework to analyze disorder-specific ROIs and prominent connections, and
BrainGB \cite{braingb} standardizes the process by summarizing brain network construction pipelines and modularizing GNN designs, and
SGCN \cite{sgcn} proposes an interpretable framework by defining the importance probability of multiple brain imaging data.
Notably, BrainNetTF \cite{brainnettf} learns fully pairwise attentions with transformer-based models,
% designs an orthonormal projection scheme to obtain expressive embeddings.
ALTER \cite{alter} also adopts a transformer with a biased random walk to capture long-range dependencies, 
BioBGT \cite{biobgt} captures the small-world architecture of brain graphs using network entanglement,
and Brain Quadratic Network (BQN) \cite{bqn} adopts quadratic networks for brain network analysis.

\begin{figure*}[t!]
    \centering
    \includegraphics[width=0.94\linewidth]{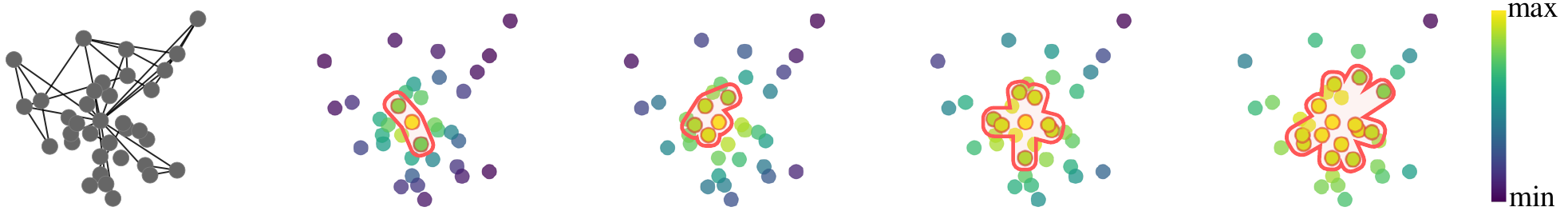}

    \renewcommand{\arraystretch}{1.0}
    \renewcommand{\tabcolsep}{1.5cm}
    \scalebox{0.80}{
    \begin{tabular}{ccccc}
    % \shifttext{-128pt}{(a)} & 
    % \shifttext{-79pt}{(b)} & 
    % \shifttext{-35pt}{(c)} & 
    % \shifttext{13pt}{(d)} & 
    % \shifttext{57pt}{(e)} \\
    \shifttext{-102pt}{(a)} & 
    \shifttext{-67pt}{(b)} & 
    \shifttext{-35pt}{(c)} & 
    \shifttext{-4pt}{(d)} & 
    \shifttext{26pt}{(e)} \\
    \end{tabular}}
    \vspace{-5pt}
    \caption{\small
    Visualization of multi-resolution high-order relation modeling via hypergraph on a random graph.
    % Node features centered on a target node are smoothed across increasing scales.
    % From fine scale (a) to coarse scale (d), feature propagation gradually shifts from being highly localized to diffusing over broader regions, capturing global similarity with more distant nodes.
    The original graph with pairwise edges is shown in (a), and from fine scale (b) to coarse scale (e), hyperedges adaptively expand to include more nodes, moving from localized connections to broader groupings that capture global structural similarity.
    Similar colors indicate potentially similar regions, while the red contour represents the sparsely constructed hyperedge at each scale, determined by the corresponding smoothed features.
    }
    \label{fig:concept}
    \vspace{-10pt}
\end{figure*}

Beyond conventional graph-based approaches, recent studies have explored hypergraph-based frameworks to better capture higher-order interactions in brain networks. 
Dynamic Weighted Hypergraph Convolutional Network (dwHGCN) \cite{dwhgcn} focuses on learning weights for predefined hyperedges to characterize high-order relations among brain regions, and  %by updating hyperedge weights for characterization of brain regions.
% Our method differs from these methods in that 
HyBRiD~\cite{hybrid} constructs discrete hyperedges by sampling binary masks and assigning a learnable weight to each hyperedge.
While previous works primarily optimize hyperedge weights, 
our method directly refines connectivity patterns through adaptive soft hyperedges derived from multi-scale spectral diffusion, 
providing a more flexible and hierarchically expressive representation of connectivity.

% Our method differs by
% dynamically refining the connectivity patterns among brain regions rather than merely updating hyperedge weights,
% enabling a more flexible and hierarchically expressive representation of brain connectivity.
%}
\vspace{-5pt}
\section{Preliminaries}
\label{sec:preliminary}
\vspace{-3pt}
% \subsection{Spectral Graph Wavelet Transform}

% \subsection{Hypergraph Convolution}

% \subsection{Preliminaries}

% \subsection{Hypergraph.} 
\noindent\textbf{Hypergraph.} 
An undirected graph $\mathcal{G}$$=$$\{\mathcal{V},\mathcal{E}\}$ with $N$ nodes comprises a node set $\mathcal{V}$ and an edge set $\mathcal{E}$.
A symmetric adjacency matrix $\mathcal{A}\in\mathbb{R}^{N\times N}$ and a diagonal matrix $\mathcal{D}\in\mathbb{R}^{N\times N}$ are constructed from $\mathcal{E}$, 
encoding connectivity among nodes and the volume of each node, respectively.
Here, a graph Laplacian $\mathcal{L}$$=$$\mathcal{D}$$-$$\mathcal{A}$ is real and positive semi-definite. 
% It has a complete set of orthonormal eigenvectors $U=[u_1,\dots,u_N]$ and corresponding real and non-negative eigenvalues $\Lambda=diag(\lambda_1,\dots,\lambda_N)$, so does the normalized Laplacian $\hat{\mathcal{L}}=\mathcal{D}^{-1/2}\mathcal{L}\mathcal{D}^{-1/2}$.
It has a complete set of orthonormal basis $U$$=$$[u_1|u_2|\dots|u_N]$ known as Laplacian eigenvectors and corresponding real and non-negative eigenvalues $0$$=$$\lambda_1$$\leq$$\lambda_2$$\dots$$\leq$$\lambda_N$, so does the normalized Laplacian $\hat{\mathcal{L}}$$=$$\mathcal{D}^{-1/2}\mathcal{L}\mathcal{D}^{-1/2}$.

% In particular, a
A hypergraph with $N$ nodes and $M$ hyperedges consists of $\mathcal{V}$ and a hyperedge set $\mathcal{E}^{'}$.
It is a generalization of a conventional graph where each hyperedge represents group-wise interactions by connecting a set of nodes simultaneously.
% edges are allowed to connect more than two nodes.
The structure of hypergraph can be represented as an incidence matrix $H\in\mathbb{R}^{N\times M}$, 
where each entry $H(v_n,e_m)$ is defined as 1 if node $v_n$ belongs to edge $e_m$, and 0 otherwise, for all $v_n\in\mathcal{V}$ and $e_m\in\mathcal{E}^{'}$.
% \begin{equation}
%     h(v,e)=\begin{cases} 
% 	1, & v\in e \\ 
%         0, & v\notin e 
%         \end{cases}    
% \end{equation}
% where a node $v\in V$ and an edge $e\in\mathcal{E}$. 
% Formally, the topology of hypergraph can be represented as an incidence matrix $H\in\mathbb{R}^{N\times M}$, with entries $h(v,e)$ defined as
% \begin{equation}
%     h(v,e)=\begin{cases} 
% 	1, & v\in e \\ 
%         0, & v\notin e 
%         \end{cases}    
% \end{equation}
% where a node $v\in V$ and an edge $e\in\mathcal{E}$. 
The degree of $n$-th node is defined as $d(v_n)$$=$$\sum_{m=1}^MH(v_n,e_m)$ and the degree of the $m$-th hyperedge is defined as $d(e_m)$$=$$\sum_{n=1}^NH(v_n,e_m)$.
Accordingly, the node and hyperedge degrees are represented in diagonal matrices $\mathcal{D}_v\in\mathbb{R}^{N\times N}$ and $\mathcal{D}_e\in\mathbb{R}^{M\times M}$, respectively. 

\noindent\textbf{Multi-scale via Graph Wavelet Transform.} 
Spectral Graph Wavelet Transform (SGWT) \cite{sgwt} extends traditional wavelet transform \cite{wt} to graphs via spectral graph theory \cite{sgt}.
Using SGWT, a graph signal $x$ is decomposed into multiple granularities $x_s$ %in the spectral space, facilitating 
to facilitate multi-resolution analysis. 
To project the signals into the spectral domain, SGWT uses wavelet basis $\psi_{s}=Ug(s\Lambda)U^T$, 
% Then, $X$ is projected into the spectral space by utilizing a wavelet basis 
% \begin{equation}
%     \psi_{s}=Ug(s\Lambda)U^T,   
% \end{equation}
where $g(\cdot)$ denotes a kernel
%which determines the graph characteristics, parameterized by the scale $s$.
%The scale $s$ controls the bandwidth of locality in the graph space and the dilation of the wavelet $\psi_s$.
with the scale $s$. 
The $s$ controls the bandwidth of locality in the graph Fourier space, 
%These bases are localized in the graph space, 
capturing either global structure or local details at specific resolutions.
Projecting the signal $x$ into the spectral domain via these bases yields the wavelet coefficient $W_x(s)=\psi_s\cdot x$.
Under the admissibility condition \cite{wt}, 
the inverse transform perfectly reconstructs original $x$ as
$\frac{1}{C_k}\int_0^\infty \psi_{s}\cdot W_x(s)\frac{ds}{s}$,
% original $x$ is perfectly reconstructed using Inverse Graph Wavelet Transform (IGWT) as
% {\footnotesize
% \begin{equation}
%     x=\frac{1}{C_k}\int_0^\infty \psi_{s}\cdot W_x(s)\frac{ds}{s},
%     \label{eq:igwt}
% \end{equation}
%}
with an admissibility constant $C_k=\int_0^\infty \frac{g(\lambda)^2}{\lambda}d\lambda <\infty$. 
% The Eq.~(\ref{eq:igwt}) 
It represents a superposition of multi-resolution representations of $x$ over scales $s\in[0,\infty)$.
Thus, a signal $x_{s}$ in the graph space filtered at the specific scale $s$ is defined as
\begin{equation}
    x_{s}=\psi_{s}\cdot W_x(s)=Ug^2(s\Lambda)U^Tx, 
    \label{eq:filtering}
\end{equation}
which allows extracting signals at a specific resolution. 
%enabling detailed multi-resolution analysis in the spatial domain. 
%In this work, 
In our work, we make these scales learnable to obtain local-to-global characteristics adaptively.

\vspace{-5pt}
\section{Adaptive Multi-Scale Hypergraph Learning}
\label{sec:methods}
\vspace{-3pt}
% \subsection{Architecture of \ourmodel}
% In Fig. \ref{fig:framework}, our framework, i.e., \ourmodel, has three major components for AD diagnosis: 1) multi-scale adaptive feature filtering, 2) multi-scale hyper graph learning, and 3) multi-scale transformer.

\begin{figure*}[t!]
    \centering
    \includegraphics[width=0.95\linewidth, height = 4.5cm]{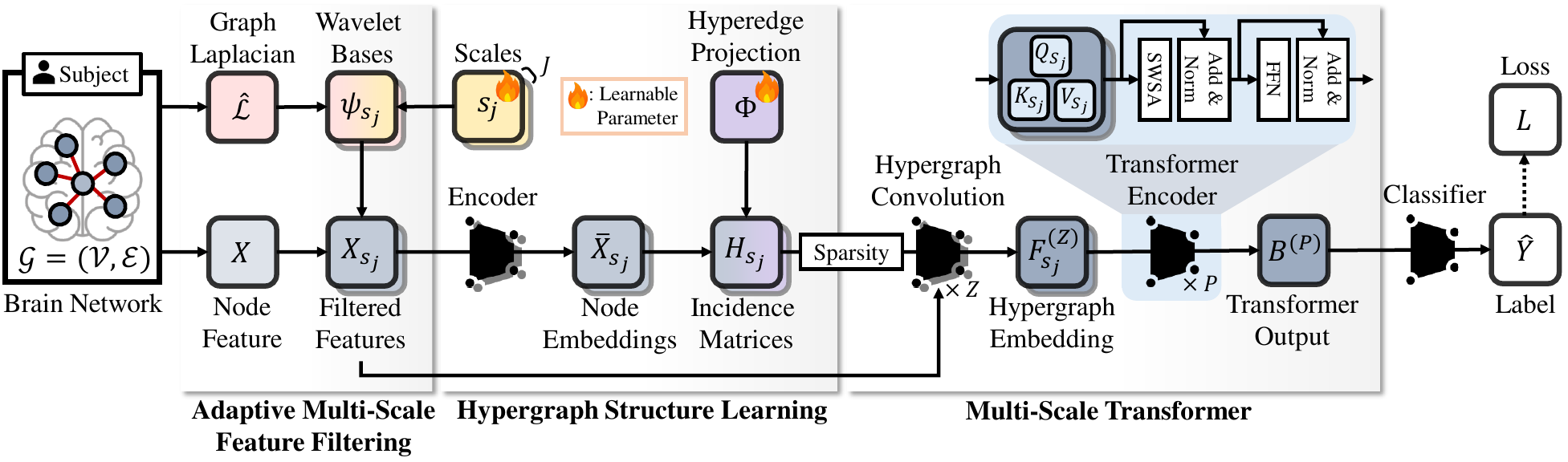}
    \vspace{-5pt}
    \caption{\small Illustration of \ourmodel.
    A brain network $\mathcal G$ with node features $X$ goes through 1) Adaptive multi-scale feature filtering to produce $X_{s}$, 2) Hypergraph structure learning to estimate incidence matrix of a hypergraph $H_{s}$, and 3) Multi-scale transformer to yield $B^{(P)}$, incorporating all scale-wise information from the hyperedges.
    Finally, a classifier predicts a disease-specific label for $\mathcal G$. Both the $s$ and $\Phi$ are optimized during the training. 
    }
    \label{fig:framework}
    \vspace{-10pt}
\end{figure*}

% \subsection{Overview of High-order Structure Learning.}
\noindent\textbf{Overview of High-order Structure Learning.}
We introduce \ourmodel, a unified framework that jointly performs multi-resolution graph wavelet filtering and hypergraph learning to capture high-order structural dependencies between the multi-scale features.  
Instead of treating multi-resolution filtering and hyperedge modeling as disjoint processes, 
\ourmodel{} tightly couples them to support learning expressive high-order structures by dynamically constructing scale-specific hyperedges.

% Fig.~\ref{fig:concept} illustrates a conceptual example of how wavelet filtering at different scales influences the similarity patterns and receptive fields of node interactions. 
% In Fig.\ref{fig:concept} (a), the wavelet kernel primarily focuses on the immediate neighborhood of the central node, leading to a highly localized propagation of information.
% As a result, only a few nearby nodes receive similar features, forming a compact high-order relation.
% Conversely, at higher scales, e.g., Fig.~\ref{fig:concept} (d), the wavelet-based diffusion spreads across wider graph regions, capturing global context and inducing similarity among more distant nodes compared to small scales.

% This progressive expansion of receptive fields across multiple scales enables \ourmodel{} to learn a hierarchy of high-order hyperedges, each reflecting context-dependent signal affinity at different resolutions. 
% As illustrated in Fig.~\ref{fig:concept}, the red edges denote sparsely constructed hyperedges, adaptively determined by scale-specific smoothed features. 
% By modeling interactions at both fine and coarse structural levels, \ourmodel{} builds a rich and expressive hypergraph that jointly encodes local variation and global consistency.

Fig.~\ref{fig:concept} illustrates how multi-resolution wavelet filtering influences the formation of scale-specific hyperedges by modulating node similarity through varying receptive fields.
Compared to the pair-wise graph in (a), the wavelet kernel in small scales operates over a narrow receptive field, resulting in localized and distinctive features, as in Fig.~\ref{fig:concept} (b). 
This leads to the formation of compact hyperedges that connect only a few strongly correlated nodes.
In contrast, Fig.~\ref{fig:concept} (e) depicts a larger receptive field induced by a higher-scale kernel, which smooths features across a broader region and increases similarity among distant nodes. 
Consequently, more nodes are grouped together into larger hyperedges, reflecting expanded high-order relationships.
This progressive expansion across multiple scales enables \ourmodel{} to learn a hierarchy of high-order hyperedges, each reflecting different affinities. 
By modeling interactions at both fine and coarse structural levels, \ourmodel{} builds an expressive hypergraph that jointly encodes local variation and global consistency.

% \subsection{Adaptive Multi-Scale Feature Filtering.}
\noindent \textbf{Adaptive Multi-Scale Feature Filtering.}
Consider a graph $\mathcal{G}$ as a normalized Laplacian $\hat{\mathcal{L}}$ %\in\mathbb{R}^{N\times N}$ 
and a set of features %(i.e., $Z$ imaging measures) 
$X\in\mathbb{R}^{N\times D}$ (e.g., $D$ imaging measures on $N$ ROIs), and a label $Y$.
%While the original node signal $X$ may contain irrelevant resolutions for the task, 
Given trainable scales $\{s_j\}_{j=0}^J$, % where $s_j\in\mathbb{R}$, 
\ourmodel{} extracts task-relevant graph information at specific resolutions. % via Eq.~\eqref{eq:filtering}.
% For this, as in Fig.~\ref{fig:framework},
Fig.~\ref{fig:framework} shows that
%Fig. \ref{fig:framework}, 
$X$ is decomposed into multi-resolution components $X_{s_j}$ across various scales using 
% the decomposition of the $X$ into multi-resolution components across various scales can be performed by the 
%graph wavelet bases $\{\psi_{s_j}\}_{j=0}^J$, %, where $\psi_{s_j}=U\mathcal{K}(s_j\Lambda)U^T$. 
%and 
Eq.~(\ref{eq:filtering}), % yields the corresponding multi-resolution representation $X_{s_j}$ 
which %is made trainable within 
is iteratively updated within \ourmodel. 
In this step, the input features $X$ are passed through one low-pass and several band-pass wavelet kernels, where each kernel extracts structural information at a different resolution. 
The resulting scale-wise representations $X_{s_j}$ capture global-to-local variations and are passed to the hypergraph learning framework.

% \subsection{Hypergraph Structure Learning.}
\noindent\textbf{Multi-Scale Hypergraph Structure Learning.}
The multi-resolution representations $\{X_{s_j}\}^J_{j=1}$ are used to generate incidence matrices to model group-wise node interactions across scales.
Given the $X_{s_j}$ at $j$-th scale, a node embedding $\bar{X}_{s_j}\in\mathbb{R}^{N\times d_h}$ is obtained via a linear transformation,
% to $X_{s_j}$
where $d_h$ is the hidden feature dimension.
To obtain a flexible hypergraph structure, we introduce a learnable hyperedge projection matrix 
% $E_{s}\in\mathbb{R}^{M\times d_h}$, where $M$ is the number of hyperedges.
$\Phi\in\mathbb{R}^{d_h\times M}$, 
% for each scale $j$, 
where $M$ is the number of hyperedges.
% To build hyperedges and 
Using $\Phi$, the scale-specific incidence matrix is computed as $H_{s_j}$$=$$\bar{X}_{s_j}\Phi$,
which encodes node-to-hyperedge affinities at scale $j$. This enables each scale to capture its own group-level ROI interactions.

% This definition can be interpreted in the spectral domain, 
% where the incidence structure reflects similarities between wavelet-filtered node features and hyperedges.

% These wavelet-filtered representations naturally induce a scale-wise incidence structure that characterizes hyperedge formation across resolutions.
% This definition can be interpreted in the spectral domain, 
% where the incidence structure reflects similarities between wavelet-filtered node features and hyperedges.
% projections.
% \begin{restatable}{prop}{MainProp}\label{prop:main}
% \begin{proposition}
\begin{proposition}\label{prop:spectral}
%\textbf{(Spectral Hypergraph )}
Given a graph signal $X$ and a hyperedge projector $\Phi$, computing incidence matrix $H_{s}$ is equivalent to 
projecting the graph wavelet representation of $X$, i.e., $W_X(s)$, with $\Phi$.
\end{proposition}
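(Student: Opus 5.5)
The plan is to unfold the definition of $H_s$ step by step until the wavelet coefficient $W_X(s)=\psi_s X$ appears explicitly, with everything else absorbed into a single projector acting on it.

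First, make the linear map that produces $\bar{X}_s$ explicit. Write $\bar{X}_s = X_s\Theta$ (possibly plus a bias term, dealt with at the end), with $\Theta\in\mathbb{R}^{D\times d_h}$. This gives
$$H_s=\bar X_s\Phi = X_s\Theta\Phi .$$

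Next, substitute the filtering identity Eq.~(\ref{eq:filtering}) column-wise, which yields $X_s=\psi_s W_X(s)$ with $\psi_s=Ug(s\Lambda)U^T$. Therefore
$$H_s=\psi_s\,W_X(s)\,\Theta\Phi=Ug(s\Lambda)U^T\,U g(s\Lambda)U^T X\,\Theta\Phi .$$

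Now move to the spectral domain. Since $U^TU=I$, multiplying on the left by $U^T$ gives
$$U^T H_s=g(s\Lambda)\,\bigl(U^T W_X(s)\bigr)\,\Theta\Phi .$$
This says that each column of $H_s$, i.e.\ node-to-hyperedge affinities for one hyperedge, is obtained by taking the graph-Fourier coefficients of the wavelet representation $W_X(s)$, reweighting them by the kernel $g(s\Lambda)$, and mixing the feature channels through $\tilde\Phi:=\Theta\Phi$.

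Read in the vertex domain, this is $H_s=\psi_s\bigl(W_X(s)\tilde\Phi\bigr)$. Because $\psi_s$ is a fixed, invertible-on-its-support linear operator shared by all hyperedges, computing $H_s$ amounts to projecting $W_X(s)$ with the (learned) projector $\tilde\Phi$, followed by synthesis $\psi_s$. Equivalently, using associativity, $H_s=\psi_s W_X(s)\tilde\Phi$.

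If the linear layer carries a bias $b$, it contributes a rank-one term $\mathbf{1}b^T\Phi$. This term is independent of $X$, so it does not affect the equivalence and is just an offset on each hyperedge column.

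The main obstacle is interpretive rather than computational: deciding in what sense ``equivalent to projecting $W_X(s)$ with $\Phi$'' should hold. Taken literally, $H_s$ also contains the extra factor $\psi_s$ and the feature map $\Theta$, so the claim is exact only after absorbing $\Theta$ into the projector and regarding $\psi_s$ as the synthesis step of the wavelet transform, i.e.\ the reconstruction $x_s=\psi_s W_x(s)$ from the preliminaries. I would state the proposition in that form and prove it by the chain of equalities above. I would also remark that when $g$ is nonvanishing on the spectrum, the map $W_X(s)\mapsto H_s$ is injective, so no information is lost.
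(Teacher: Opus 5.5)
Your argument is correct and is essentially the paper's proof: the paper likewise unfolds $H_s=X_s\Phi=Ug^2(s\Lambda)U^TX\Phi$ into $\psi_s W_X(s)\Phi$, reading ``equivalence'' with the synthesis factor $\psi_s$ kept, as you do, and it treats the embedding $\Theta$ in a post-proof remark using the same linearity you use to absorb it. Your spectral-domain reading and bias remark are harmless additions, with one correction to a side remark: injectivity of $W_X(s)\mapsto\psi_s W_X(s)\Theta\Phi$ needs $g(s\lambda_i)\neq 0$ on the spectrum and also $\Theta\Phi$ to have full row rank $D$, which fails e.g.\ whenever $\min(d_h,M)<D$.
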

% \end{restatable} 

%The proposition above 
The proposition \ref{prop:spectral} 
tells that the multi-resolution hypergraph structure can be characterized through the graph wavelet representation in the graph frequency space, 
whose concept is ambiguous in the original graph space due to the irregular graph structure. 
This further leads to the following proposition, 
i.e., dilation property of the learned hypergraph %across different scales that, 
that, as the wavelet scale increases, the smoothing effect on node features enlarges hyperedges by increasing the cardinality of the node set associated with the hyperedge. % as the following proposition. 
% The proposition above tells that the multi-resolution hypergraph structure can be characterized through the graph wavelet representation in the graph frequency space, 
% % whose concept is ambiguous in the original graph space due to the irregular graph structure. 
% which is difficult to express directly in the irregular original graph space.
% This perspective also explains the dilation property of the learned hypergraph across scales, i.e., as the wavelet scale increases, the smoothing effect on node features enlarges hyperedges by increasing the cardinality of the node set associated with a hyperedge as the following proposition. 

% \begin{restatable}{prop}{SubProp}\label{prop:sub}
\begin{proposition}\label{prop:scale}
% \textbf{(Scale-induced Hyperedge Expansion)}
% % Let $X_s$ denote the graph wavelet-filtered node features at scale $s$, where $g(\cdot)$ is a spectral kernel, and $\Phi$ be a hyperedge projection.
% Then, the scale-dependent incidence matrix is defined as 
% $H_s$$=$$X_s\Phi$.
% As $s\rightarrow\infty$, all node features converge to scaled multiples of a common low-frequency directions, which increases pairwise similarity among nodes.
% Consequently, 
In $H_s$, there exists at least one hyperedge $e_{m}^\ast$ such that the number of nodes within $e_m^{\ast}$, for which $e_{m}^{\ast}$ attains the highest weight, monotonically increases with respect to $s$, and this number converges to $N$ as $s\rightarrow\infty$
% becomes extremely large
, i.e.,  $\lim_{s\rightarrow\infty}|\{v_n:e_{m}^{{\ast}}=\operatorname{argmax}_{e_m}H_s(v_n,e_m)\}|\approx N$.
% \end{restatable}
\end{proposition}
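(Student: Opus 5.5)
The plan is to write $H_s$ in spectral form and let the lowest graph frequency take over as $s$ grows. By Proposition~\ref{prop:spectral} and Eq.~(\ref{eq:filtering}),
\[
H_s=\bar X_s\Phi = U g^2(s\Lambda)U^T X W\Phi=\sum_{k=1}^N g^2(s\lambda_k)\,u_k\,(u_k^T X W\Phi),
\]
where $W$ is the linear embedding map. Here we need the low-pass kernel, since the statement concerns smoothing. So $g$ is positive, monotonically decreasing, and has $g(0)>0$ and $g(\lambda)\to 0$ as $\lambda\to\infty$; a heat kernel $e^{-s\lambda}$ is a typical choice. For the normalized Laplacian $\hat{\mathcal L}$ of a connected graph, $\lambda_1=0$ with $u_1\propto \mathcal D^{1/2}\mathbf 1$, and $\lambda_k>0$ for $k\ge 2$. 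Hence
\[
H_s = g^2(0)\,u_1 c^T + R_s,\qquad c^T:=u_1^TXW\Phi,\qquad \|R_s\|\le \max_{k\ge2} g^2(s\lambda_k)\,\|XW\Phi\|\to 0 .
\]

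Next I pick the hyperedge. Row $n$ of the dominant term is $g^2(0)\,u_1(n)\,c^T$. Because every entry of $u_1$ is positive, the argmax over $m$ of each row is the same index, namely $m^\ast=\arg\max_m c_m$. I take $e_{m^\ast}$ as $e_m^\ast$, assuming the generic condition that $c$ has a unique maximizer with a gap $\delta=c_{m^\ast}-\max_{m\neq m^\ast}c_m>0$. Row by row, node $n$ selects $e_m^\ast$ as soon as $2\|R_s\|_\infty < g^2(0)\,u_1(n)\,\delta$. The threshold on the right is positive and independent of $s$, and $\|R_s\|_\infty\to 0$. So there is some $s_0$ beyond which every node selects $e_m^\ast$, and the count equals $N$. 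This proves the limit statement; the ``$\approx$'' only accounts for degenerate ties.

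Monotonicity is the main obstacle, since it does not hold pointwise for arbitrary $X$ and $\Phi$. I would handle it as follows. Node $n$ selects $e_m^\ast$ iff for every $m$
\[
f_{n,m}(s):=\sum_k g^2(s\lambda_k)\,u_k(n)\,(a_{k,m^\ast}-a_{k,m})>0, \qquad a_k:=u_k^TXW\Phi .
\]
I normalize by $g^2(s\lambda_2)$, as in the standard heat-kernel case. Then the $k=1$ term grows relative to the rest: for $k\ge2$ the ratio $g^2(s\lambda_k)/g^2(0)$ is nonincreasing in $s$. This requires the decay to be ordered, i.e.\ $g^2(s\lambda_k)/g^2(s\lambda_{k'})$ nonincreasing for $\lambda_k>\lambda_{k'}$. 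That holds for $e^{-s\lambda}$ and, more generally, for log-concave-type kernels.

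Under this ordering, the set of nodes selecting $e_m^\ast$ is monotone in the following sense. Once the dominant low-frequency coefficient exceeds the weighted high-frequency perturbation for a node, that node remains assigned, because the perturbation-to-signal ratio is nonincreasing. I would state this asymptotically, or in the regime where the sign of $f_{n,m}$ is governed by the leading two frequencies. I expect the honest version to be that the count is eventually monotone, becoming nondecreasing for $s\ge s_1$, and I would record the kernel-ordering assumption explicitly.
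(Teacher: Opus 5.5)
Your limit argument is the paper's low-pass case (i): $X_s$ collapses onto $u_1$, and positivity of $u_1$ forces a common row-wise argmax. You carry it out more carefully than the paper does. You make the gap $\delta$, the uniform residual bound and connectedness ($\lambda_2>0$) explicit, and you need only positivity of $u_1\propto\mathcal D^{1/2}\mathbf 1$, not the near-uniformity the paper invokes. Dropping the paper's band-pass case (ii) costs nothing; in fact the restriction is forced. When $g(0)=0$, the surviving modes are orthogonal to $u_1>0$ and therefore change sign across nodes. For $g(x)=xe^{-x}$, row $n$ of $H_s$ is asymptotically proportional to $u_2(n)\,a_2$, which splits the nodes between $\arg\max_m a_{2,m}$ and $\arg\min_m a_{2,m}$. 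For the power-law-tailed SGWT kernel, the ratios $g^2(s\lambda_k)/g^2(s\lambda_2)$ do not even decay. So the paper reaches ``$\approx N$'' in case (ii) only by assertion.

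The genuine gap is monotonicity, and the step you offer for it fails. The kernel-ordering assumption makes each ratio $g^2(s\lambda_k)/g^2(0)$ nonincreasing. But the perturbation $\sum_{k\ge 2} g^2(s\lambda_k)\,b_k$, with $b_k=u_k(n)(a_{k,m^\ast}-a_{k,m})$, has coefficients of both signs. Its size relative to the leading term $A=g^2(0)u_1(n)(c_{m^\ast}-c_m)>0$ is therefore not monotone: the ordering controls moduli, not cancellations. Concretely, take $g(x)=e^{-x}$, $\lambda_3>\lambda_2$, and $h(s)=b_2e^{-2s\lambda_2}+b_3e^{-2s\lambda_3}$ with $b_3>-b_2>0$. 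Then $h$ is positive at $s=0$ and negative for large $s$, so $f_{n,m}=A+h$ with small $A$ goes positive, then negative, then positive.

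No further argument can repair this, because the monotonicity claim is false in general. Take a single feature with $XW\Phi=x\,w^T$, $w^T=(1,0)$, and $u_1^Tx>0$. The first hyperedge is then the only one whose count tends to $N$, and its count equals the number of positive entries of $e^{-2s\hat{\mathcal L}}x$. On the five-node path with $x=(10,-1,\epsilon,-1,10)$ and small $\epsilon>0$, the middle entry turns negative at $s\approx\epsilon/2$, while its neighbours are still negative. The count therefore goes $3\to 2\to 5$.

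The paper's proof does not supply monotonicity either; it infers ``increases monotonically'' directly from the limiting rank-one collapse. What is actually provable is one of two things. The first is the limit together with ``the count equals $N$ for all $s\ge s_0$''; your ``eventually monotone'' adds nothing beyond this. The second is genuine monotonicity under an extra hypothesis. One such hypothesis is $b_k\le 0$ for all $k\ge 2$ and all $(n,m)$. Another is a single subdominant mode, where $f_{n,m}=A+B\,r(s)$ with $r\ge 0$ nonincreasing can cross zero only upward.
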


Consequently, the smoothness of node features at each scale directly drives the expansion of hyperedges,
% highlights the scale-wise smoothness of node features and its direct impact on hyperedge expansion. 
% thereby 
% Importantly, it 
providing the theoretical basis for the progressive enlargement of hyperedges observed in Fig. \ref{fig:concept}, 
where localized connections at fine scales gradually evolve into broader groupings at coarser scales.
The detailed proofs are provided in Appendix \ref{sec:proof}.

%This result clarifies how the incidence matrix encodes the similarity structure between wavelet-filtered node features and hyperedges, serving as a foundation for our hypergraph structure learning. 

% to build the hyperedges.
% Using these matrices, 
% From dense incidence information, we can further obtain stochastic incidence matrix $\tilde{H}_{s_j}\in\mathbb{R}^{N\times M}$ at scale $j$, where each entry represents a weighted activation, formulated as
To further refine $H_s$, we convert the dense incidence values into a normalized form, yielding a normalized incidence matrix at scale $s_j$ as
\begin{equation}
    \tilde{H}_{s_j}=\operatorname{SoftMax}(\operatorname{ReLU}(\bar{X}_{s_j}\Phi)),
    \label{eq:hypergraph_construction}
\end{equation}
where the $\operatorname{ReLU}$ activation eliminates weak connections among hyperedges and the $\operatorname{SoftMax}$ function is applied to normalize each row of $\tilde{H}_{s_j}$ into a probability distribution over hyperedges.

%To reduce subsequent computational costs, \textcolor{blue}
% Here, each entry encodes the weighted activation of a node within its hyperedge.
To prevent excessive connections, $\tilde{H}_{s_j}$ is made sparse by limiting the number of neighboring hyperedges per node to $\eta$, resulting in a truncated incidence matrix $\bar H_{s_j}$ as
\begin{equation}
    \bar H_{s_j}(n,m)=
    \begin{cases} 
        \tilde{H}_{s_j}, &\tilde{H}_{s_j}(n,m) \in \operatorname{TopK}(\tilde{H}_{s_j}(n,*),\eta) \\ 
        0, &\tilde{H}_{s_j}(n,m) \notin \operatorname{TopK}(\tilde{H}_{s_j}(n,*),\eta) 
    \end{cases}   
    \label{eq:hypergraph_truncated}
\end{equation}
%where $\eta$ is the threshold of 
where $\operatorname{TopK}$ is a top-$k$ function. 
%, defining the maximum number of neighboring hyperedges per node.
% connected to a node. 
% By adaptively acquiring these incidence matrices, we can effectively capture higher-order dependencies among nodes across scales and model a wide range of implicit interactions without the need for predefined rules.
Adaptively acquiring these incidence matrices enables capturing higher-order dependencies among nodes across scales and model a wide range of implicit interactions without any predefined rules.

% \subsection{Multi-Scale Transformer.}
\noindent\textbf{Multi-Scale Transformer.}
% To obtain representations from individual scale, 
% encoders from our transformer block take $X_{s_j}$ and $\bar H_{s_j}$ for $j\in\{1,\dots,J\}$ as inputs and perform hypergraph convolution across all scales.
Building on the constructed multi-scale hypergraphs, 
\ourmodel{} extracts scale-wise features by performing hypergraph convolution on the paired inputs $\{X_{s_j}\}_{j=1}^J$ and $\{\bar H_{s_j}\}_{j=1}^J$ through encoders in the transformer block,
capturing structural patterns unique to each scale.
Based on \citet{hgnn}, scale-wise hypergraph convolution operation at $z$-th layer is given as 
\begin{equation}
F_{s_j}^{(z)}=\sigma(\mathcal{D}_v^{-1/2}\bar H_{s_j}W_e\mathcal{D}_e^{-1}\bar H_{s_j}^T\mathcal{D}_v^{-1/2}F_{s_j}^{(z-1)}\Theta^{(z)}),
    \label{eq:hyper_conv}
\end{equation}
where $F_{s_j}^{(z)}$ is the output at $j$-th scale with $F_{s_j}^{(0)}$$=$$X_{s_j}$,
$W_e$ is a diagonal matrix representing the weights of hyperedges,
% the weight matrix for hyperedge, 
$\Theta^{(z)}$ is the learnable weight at $z$-th layer,
and $\sigma$ denotes an activation function.
Since Eq.~(\ref{eq:hyper_conv}) is an operation in a single convolution layer, 
we can stack $Z$ of them to achieve better representations across all resolutions.

Then, as in Fig. \ref{fig:framework}, the $\{F_{s_j}^{(Z)}\}_{j=1}^J$ are fed into an attention layer to compute node-wise attention scores.
We introduce the Scale-Wise Self-Attention (SWSA) module, 
where each head focuses on a specific scale
and operates over a graph structure to integrate information from other nodes.
% , integrating information from other nodes.
% with each head focuses on a specific scale, integrating extensive information from other nodes.
The input of SWSA consists of $d_k$-dimensional query $Q_{s_j} \in \mathbb{R}^{N\times d_k}$, key $K_{s_j} \in \mathbb{R}^{N\times d_k}$, and value $V_{s_j} \in \mathbb{R}^{N\times d_k}$ %of dimension $d_k$ 
from $F^{(Z)}_{s_j}$. 
% and $d_k$ is the dimension for hidden units.
Then, the SWSA scores
% , which represent the similarity between queries and keys, 
are computed as 
\begin{equation}
    A_{s_j}=\operatorname{Softmax}(\frac{Q_{s_j}{K_{s_j}^T}}{\sqrt{d_k}})\in\mathbb{R}^{N\times N},
\end{equation}
where $\operatorname{Softmax}$ function normalizes the similarity scores to obtain attention weights.
These weights indicate the relative influence of each node within the graph at scale $s_j$.
Using them, the self-attention value at the $j$-th scale is computed as
% These attention scores reflect the relative influence of each node within the graph under the scale $s_j$.
% Using these similarity scores, 
% a self-attention value at the $j$-th scale is computed as
$\phi(Q_{s_j},K_{s_j},V_{s_j})=A_{s_j}V_{s_j}.$
% \begin{equation}
%     \phi(Q_{s_j},K_{s_j},V_{s_j}) = A_{s_j}V_{s_j}\in\mathbb{R}^{N\times d_k}.
% \end{equation}
% \begin{equation}
%     \phi(Q_{s_j},K_{s_j},V_{s_j}) = SoftMax\bigg(\frac{Q_{s_j}{K_{s_j}^T}}{\sqrt{d_\text{attn}}}\bigg)V_{s_j}.
%     \label{lab:attention_values}
% \end{equation}
As each head is assigned to a single scale,
global features across scales are averaged with 
a multi-attention function as 
\begin{equation}
\begin{aligned}
    \text{SWSA}(Q,K,V) = [h^{(1)}|\dots|h^{(j)}|\dots|h^{(J)}]W_O, \\
    %\text{where } 
    h^{(j)} = \phi(Q_{s_j}W_{Q_{s_j}},K_{s_j}W_{K_{s_j}},V_{s_j}W_{V_{s_j}}),  
\end{aligned}
\end{equation}
where $W_{Q_{s_j}}$, $W_{K_{s_j}}$, $W_{V_{s_j}}$, and $W_O$ are projection matrices for $Q_{s_j}$, $V_{s_j}$, $K_{s_j}$, and SWSA respectively.
Thus, SWSA enables the model to jointly attend to information from different scales across various ROIs in long-range.
In addition to SWSA, each layer from the transformer encoder contains a fully connected feed-forward network (FFN).
% composed of multiple linear and non-linear transformations.
To stabilize the training of \ourmodel{},
residual connections \cite{residual} and layer normalization (LN) \cite{ln} are employed.
This configuration effectively captures comprehensive context across all nodes as
% Therefore, a comprehensive context across all nodes is captured as 
\begin{equation}
\begin{aligned}
    B^{(p)} &=\text{LN}(\tilde{B}^{(p)} + \text{FFN}(\tilde{B}^{(p)})), \\
    \tilde{B}^{(p)} &= \text{LN}(B^{(p-1)} + \text{SWSA}(B^{(p-1)})),
    \label{eq:mhsa}
\end{aligned}
\end{equation}
where $B^{(p)}$ is an output from $p$-th attention layer,
and multi-scale representations $\{F_{s_j}^{(Z)}\}_{j=1}^J$ are used as $Q$, $K$ and $V$ for $B^{(0)}$.
Also, we can stack $P$ attention layers to capture complex dependencies in the input features.

\begin{table*}[!t]
\caption{\small
% AD classification performance (CN/SMC/EMCI/LMCI/AD) on the ADNI dataset.
% \textcolor{red}{Red} and \textcolor{blue}{Blue} indicate the best and the second-best results, respectively.
% Classification performance on the ADNI (CN/SMC/EMCI/LMCI/AD) and PPMI (CN/Prodromal/PD) datasets.
Brain network classification performance on the ADNI and PPMI datasets evaluated using 5 fold cross-validation. 
% The ADNI dataset includes five classes (CN / SMC / EMCI / LMCI / AD), while the PPMI dataset includes three classes (CN / Prodromal / PD).
The best results are shown in \textbf{bold}, and the second-best results are \underline{underlined}.
($\dag$: Methods for brain network analysis)
% \textcolor{red}{Include a column with a check-mark whether the methods is a hypergraph method or not.}
}
\vspace{-5pt}
\centering
% \scriptsize
\renewcommand{\arraystretch}{1.0}
\renewcommand{\tabcolsep}{0.28cm}
% \scalebox{0.95}{
\scalebox{0.7}{
   \begin{tabular}{c|l||c|c|c|c|c|c|c|c}
    % \begin{tabular*}{\textwidth}{@{\extracolsep{\fill}} c|l||c|c|c|c|c|c|c|c}

    \Xhline{4\arrayrulewidth}
    % \multirow{2}{*}{Type} & 
    % \multirow{2}{*}{\shortstack{Graph\\Method}} &
    % \multirow{2}{*}{\shortstack{Hypergraph\\Method}} &
    \multirow{2}{*}{\shortstack{Category}} &
    \multirow{2}{*}{Method} & 
    \multicolumn{4}{c|}{ADNI (CN vs SMC vs EMCI vs LMCI vs AD)} & 
    \multicolumn{4}{c}{PPMI (CN vs Prodromal vs PD)} \\ \cline{3-10}
    
    % & 
    & & 
    Acc $\uparrow$ & Pre $\uparrow$ & Rec $\uparrow$ & F1s $\uparrow$ & 
    Acc $\uparrow$ & Pre $\uparrow$ & Rec $\uparrow$ & F1s $\uparrow$ \\
    \hline

    \multirow{11}{*}{\shortstack{Graph\\based\\method}}
    & GCN \cite{gcn} & 
    % &  GCN &
    $85.7 \pm 1.7$ & $88.1 \pm 3.0$ & $82.7 \pm 5.9$ & $83.7 \pm 3.4$ &
    $66.3 \pm 2.4$ & $46.2 \pm 5.4$ & $41.5 \pm 3.7$ & $38.9 \pm 4.4$ \\

    % &
    & GAT \cite{gat} &
    % & GAT &
    $89.2 \pm 2.1$ & $92.7 \pm 1.8$ & $86.6 \pm 4.6$ & $88.9 \pm 3.6$ &
    \underline{$72.9 \pm 2.5$} & $55.1 \pm 7.8$ & $52.0 \pm 3.3$ & $52.3 \pm 4.3$ \\

    % &
    & GCNII \cite{gcnii} &
    % &  GCNII &
    $86.3 \pm 4.0$ & $91.2 \pm 1.5$ & $85.8 \pm 6.6$ & $87.4 \pm 4.8$ &
    $68.5 \pm 2.3$ & $52.5 \pm 6.0$ & $42.8 \pm 4.5$ & $41.4 \pm 6.3$ \\

    & BrainGNN$^\dag$ \cite{braingnn} & 
    $43.4 \pm 4.7$ & $31.8 \pm 3.4$ & $28.3 \pm 1.7$ & $27.0 \pm 1.3$ &
    $68.1 \pm 2.0$ & $48.8 \pm 5.3$ & $40.8 \pm 4.4$ & $38.2 \pm 5.3$ \\

    % &
    % IBGNN$^\dag$ \cite{ibgnn} & 
    & IBGNN$^\dag$ \cite{ibgnn} & 
    $81.4 \pm 2.4$ & $85.8 \pm 0.7$ & $82.3 \pm 3.4$ & $83.3 \pm 2.3$ &
    $68.1 \pm 2.4$ & $51.1 \pm 6.0$ & $42.1 \pm 5.2$ & $38.7 \pm 5.4$ \\

    % &
    % BrainGB$^\dag$ \cite{braingb} &  
    & BrainGB$^\dag$ \cite{braingb} &  
    $83.5 \pm 1.6$ & $87.1 \pm 2.6$ & $80.7 \pm 3.2$ & $82.2 \pm 2.3$ &
    $68.8 \pm 3.3$ & $48.8 \pm 6.0$ & $42.8 \pm 3.2$ & $41.6 \pm 4.5$ \\

    & BrainNetTF$^\dag$ \cite{brainnettf} &
    $89.4 \pm 2.5$ & $88.9 \pm 5.7$ & $84.6 \pm 7.3$ & $85.8 \pm 6.0$ &
    $67.9 \pm 2.9$ & $55.6 \pm 8.8$ & $45.7 \pm 5.1$ & $45.0 \pm 7.2$ \\ 

    % &
    % SGCN$^\dag$ \cite{sgcn} &
    & SGCN$^\dag$ \cite{sgcn}& 
    $84.0 \pm 2.1$ & $87.6 \pm 3.3$ & $82.1 \pm 4.6$ & $83.8 \pm 3.5$ & 
    $67.6 \pm 3.2$ & $39.0 \pm 9.3$ & $40.6 \pm 4.4$ & $37.6 \pm 6.8$ \\ 

    % &
    % BrainNetTF$^\dag$ \cite{brainnettf} &

    & ALTER$^\dag$ \cite{alter}&
    \underline{$90.8 \pm 2.6$} & \underline{$93.3 \pm 2.4$} & \underline{$89.6 \pm 5.6$} & \underline{$90.9 \pm 4.6$} & 
    $69.6 \pm 2.3$ & $55.2 \pm 8.5$ & \underline{$55.8 \pm 5.3$} & $54.5 \pm 8.3$ \\

    &BioBGT$^\dag$ \cite{biobgt}&
    $80.3 \pm 1.7$ & 
    $79.7 \pm 7.1$ & 
    $77.3 \pm 6.7$ & 
    $77.2 \pm 6.1$ &
    $68.0 \pm 2.8$ & 
    $54.3 \pm 8.2$ & 
    $48.7 \pm 4.2$ & 
    $45.7 \pm 3.2$ \\ 

    &BQN$^\dag$ \cite{bqn} &
    $81.5 \pm 4.7$ & 
    $74.3 \pm 5.6$ & 
    $72.3 \pm 5.5$ & 
    $72.6 \pm 5.2$ &
    $71.9 \pm 2.9$ & 
    $\underline{63.2 \pm 6.2}$ & 
    $51.0 \pm 8.4$ & 
    $\underline{56.4 \pm 7.3}$ \\ 
    
    \cline{1-2}

    \multirow{9}{*}{\shortstack{Hypergraph\\based\\method}}
    % HGNN \cite{hgnn} &
    & HGNN \cite{hgnn} &
    % $68.5 \pm 2.4$ & $73.2 \pm 6.9$ & $64.9 \pm 8.5$ & $67.1 \pm 8.1$ &
    % $65.2 \pm 4.8$ & $43.3 \pm 9.0$ & $41.9 \pm 6.1$ & $40.3 \pm 7.1$ \\
    $83.2 \pm 2.7$ & $84.8 \pm 7.1$ & $80.7 \pm 6.8$ & $82.1 \pm 6.6$ &
    $65.2 \pm 4.8$ & $43.3 \pm 9.0$ & $41.9 \pm 6.1$ & $40.3 \pm 7.1$ \\
    % $66.3 \pm 5.6$ & $61.3 \pm 9.0$ & $52.8 \pm 6.4$ & $54.7 \pm 7.6$ \\

    & HNHN \cite{hnhn} &
    % $81.5 \pm 3.2$ & $81.4 \pm 3.5$ & $79.7 \pm 6.2$ & $79.7 \pm 4.5$ &
    % $69.1 \pm 2.9$ & $52.6 \pm 5.1$ & $48.5 \pm 8.4$ & $48.0 \pm 9.4$ \\
    $87.3 \pm 1.8$ & $89.6 \pm 1.0$ & $85.6 \pm 4.2$ & $87.0 \pm 3.0$ &
    $69.1 \pm 2.9$ & $52.6 \pm 5.1$ & $48.5 \pm 8.4$ & $48.0 \pm 9.4$ \\

    & UniGCNII \cite{unigcnii}&
    % $81.2 \pm 6.9$ & $82.0 \pm 1.7$ & $79.0 \pm 4.1$ & $79.5 \pm 4.6$ &
    $89.6 \pm 1.6$ & $91.1 \pm 1.8$ & $88.0 \pm 3.8$ & $89.9 \pm 2.0$ &
    % $60.7 \pm 4.4$ & $46.6 \pm 7.9$ & $46.1 \pm 7.3$ & $44.4 \pm 6.8$ \\
    $70.8 \pm 3.3$ & $59.6 \pm 9.4$ & $54.3 \pm 7.2$ & $55.2 \pm 7.3$ \\

    & HyperDrop \cite{hyperdrop}&
    $72.3 \pm 0.8$ & $74.4 \pm 3.1$ & $65.9 \pm 0.2$ & $68.6 \pm 2.2$ &
    $67.5 \pm 4.0$ & $41.5 \pm 8.9$ & $39.2 \pm 3.8$ & $36.0 \pm 6.4$ \\

    % &
    % dwHGCN$^\dag$ \cite{dwhgcn} &
    & dwHGCN$^\dag$  \cite{dwhgcn}&
    % $79.5 \pm 2.8$ & $80.0 \pm 8.9$ & $76.0 \pm 6.8$ & $77.0 \pm 7.1$ &
    % $69.0 \pm 2.9$ & $51.5 \pm 7.9$ & $44.2 \pm 6.1$ & $43.3 \pm 8.0$ \\
    $90.2 \pm 1.3$ & $87.2 \pm 7.1$ & $86.3 \pm 6.8$ & $86.2 \pm 6.4$ &
    $69.0 \pm 2.9$ & $51.5 \pm 7.9$ & $44.2 \pm 6.1$ & $43.3 \pm 8.0$ \\

    % xxx &
    % & & & &
    % & & & \\

    % xxx &
    % & & & &
    % & & & \\

    % xxx &
    % & & & &
    % & & & \\

    % &
    % HyperGT \cite{hypergt} &
    & HyperGT \cite{hypergt}&
    % $74.3 \pm 1.9$ & $64.9 \pm 2.2$ & $57.4 \pm 2.5$ & $59.2 \pm 1.9$ &
    $81.5 \pm 1.6$ & $92.1 \pm 1.8$ & $89.0 \pm 3.8$ & $89.9 \pm 2.0$ &
    $68.5 \pm 1.5$ & $51.7 \pm 4.4$ & $42.0 \pm 4.8$ & $39.6 \pm 6.1$ \\

    &HyBRiD$^\dag$ \cite{hybrid}&
    $86.6 \pm 4.2$ & 
    $87.5 \pm 3.9$ & 
    $84.3 \pm 6.4$ & 
    $84.6 \pm 4.7$ &
    $67.2 \pm 6.5$ & 
    $54.8 \pm 6.8$ & 
    $52.0 \pm 6.2$ & 
    $52.8 \pm 6.5$ \\

    &DHHNN \cite{dhhnn}&
    $81.1 \pm 2.5$ & 
    $89.0 \pm 4.5$ & 
    $84.5 \pm 3.2$ & 
    $86.7 \pm 3.4$ &
    $60.6 \pm 3.8$ & 
    $62.3 \pm 9.8$ & 
    $52.7 \pm 4.3$ & 
    $54.0 \pm 6.3$ \\

    & \cellcolor{mygray}\ourmodel{} (Ours) &
    \cellcolor{mygray}\textbf{93.2 $\pm$ 2.4} & 
    \cellcolor{mygray}\textbf{95.4 $\pm$ 1.3} & 
    \cellcolor{mygray}\textbf{94.2 $\pm$ 1.6} & 
    \cellcolor{mygray}\textbf{94.7 $\pm$ 1.5} & 
    \cellcolor{mygray}\textbf{76.8 $\pm$ 3.7} & 
    \cellcolor{mygray}\textbf{66.6 $\pm$ 7.6} & 
    \cellcolor{mygray}\textbf{60.7 $\pm$ 6.0} & 
    \cellcolor{mygray}\textbf{62.4 $\pm$ 6.6} \\

    \Xhline{4\arrayrulewidth}
    % \end{tabular*}
    \end{tabular}
    }
\label{tab:performance}
\vspace{-2pt}
\end{table*}

\begin{table*}[!t]
    \caption{\small
        Zero-shot classification performance across datasets and acquisition phases.
        \ourmodel{} and baselines are trained on a source dataset or phase and directly evaluated on unseen target datasets or phases without fine-tuning.
        Specifically, for PD, models are trained on PPMI and evaluated on TaoWu and Neurocon, while for AD, models are trained on ADNI-2 and evaluated on ADNI-1/3/GO.
        % The best results are shown in \textbf{bold}, and the second-best results are \underline{underlined}.
    }\vspace{-5pt} 
    \centering
    \renewcommand{\arraystretch}{1.0}
    \renewcommand{\tabcolsep}{0.17cm}
    \scalebox{0.7}{
        \begin{tabular}{l||c|c|c|c|c|c|c|c|c|c|c|c}
        \Xhline{4\arrayrulewidth}
        
        % \multirow{2}{*}{Type} & 
        \multirow{2}{*}{Method} & 
        \multicolumn{4}{c|}{ADNI-2$\rightarrow$ADNI-1/3/GO} & 
        \multicolumn{4}{c|}{PPMI$\rightarrow$TaoWu} & 
        \multicolumn{4}{c}{PPMI$\rightarrow$Neurocon}\\ \cline{2-13}
        
        & 
        Acc $\uparrow$ & Pre $\uparrow$ & Rec $\uparrow$ & F1s $\uparrow$ &
        Acc $\uparrow$ & Pre $\uparrow$ & Rec $\uparrow$ & F1s $\uparrow$ & 
        Acc $\uparrow$ & Pre $\uparrow$ & Rec $\uparrow$ & F1s $\uparrow$ \\
        \hline

        BrainGB & 
        $42.9 \pm 3.2$ & $48.4 \pm 8.1$ & $36.5 \pm 2.4$ & $34.4 \pm 4.3$ &
        $53.0 \pm 1.2$ & $64.1 \pm 6.8$ & $53.0 \pm 1.2$ & $43.5 \pm 4.0$ & 
        $62.9 \pm 6.4$ & $52.5 \pm 2.8$ & $51.6 \pm 7.3$ & $45.1 \pm 5.3$ \\ 

        BrainNetTF &
        $43.5 \pm 8.4$ & \underline{$48.9 \pm 2.4$} & $50.8 \pm 2.8$ & $44.8 \pm 1.6$ & 
        $50.0 \pm 1.6$ & $46.0 \pm 3.9$ & $50.0 \pm 1.6$ & $46.2 \pm 4.6$ &
        $62.9 \pm 4.4$ & $62.7 \pm 8.1$ & $55.0 \pm 5.6$ & $50.3 \pm 8.7$  \\

        ALTER & 
        $43.8 \pm 3.0$ & $46.8 \pm 2.0$ & $54.6 \pm 6.4$ & $44.3 \pm 2.3$ &
        $54.5 \pm 1.9$ & \underline{$65.1 \pm 6.0$} & $52.5 \pm 1.9$ & $45.3 \pm 4.4$ & 
        \underline{$63.8 \pm 3.2$} & \underline{$63.8 \pm 7.6$} & \underline{$55.3 \pm 4.4$} & \underline{$51.7 \pm 8.0$} \\

        BQN & 
        \underline{$52.3 \pm 2.9$} & $39.5 \pm 6.3$ & $ 32.8 \pm 7.7$ & $39.0 \pm 7.6$ &
        \underline{$57.0 \pm 4.0$} & $56.3 \pm 4.6$ & \underline{$55.0 \pm 4.0$} & \underline{$49.5 \pm 3.0$} & 
        $55.1 \pm 3.7$ & $47.8 \pm 3.2$ & $48.3 \pm 2.9$ & $46.8 \pm 2.9$ \\ 

        dwHGCN & 
        $52.8 \pm 2.9$ & $48.0 \pm 2.7$ & \underline{$57.0 \pm 2.8$} & \underline{$48.8 \pm 3.1$} &
        $50.5 \pm 2.9$ & $45.1 \pm 5.2$ & $50.5 \pm 2.9$ & $36.0 \pm 4.5$ & 
        $60.9 \pm 2.7$ & $59.3 \pm 3.4$ & $53.6 \pm 3.7$ & $46.1 \pm 7.0$ \\

        HyBRiD & 
        $47.6 \pm 5.1$ & $46.7 \pm 4.6$ & $53.3 \pm 2.9$ & $44.0 \pm 5.2$ &
        $56.2 \pm 3.1$ & $53.8 \pm 3.5$ & $49.8 \pm 4.7$ & $44.7 \pm 3.8$ & 
        $63.4 \pm 2.4$ & $51.7 \pm 2.5$ & $50.0 \pm 4.2$ & $48.8 \pm 3.6$ \\

        \rowcolor{mygray}
        \ourmodel{} & 
        \textbf{56.0 $\pm$ 1.6}& 
        \textbf{52.1 $\pm$ 6.1}& 
        \textbf{58.4 $\pm$ 8.3}& 
        \textbf{54.8 $\pm$ 7.7}&
        \textbf{60.5 $\pm$ 1.6} & 
        \textbf{72.8 $\pm$ 2.4} & 
        \textbf{60.5 $\pm$ 1.6} & 
        \textbf{58.9 $\pm$ 3.0} &
        \textbf{65.9 $\pm$ 3.4} & 
        \textbf{68.1 $\pm$ 3.3} & 
        \textbf{57.3 $\pm$ 3.2} & 
        \textbf{53.7 $\pm$ 3.6} \\
        
        \Xhline{4\arrayrulewidth}
        \end{tabular}}
\label{tab:zero_shot}
\vspace{-8pt}
\end{table*}

% \subsection{Training Objective.}
\noindent\textbf{Training Objective.}
Given a set of graphs $\{\mathcal{G}_t\}_{t=1}^T$ with corresponding labels $\{Y_t\}_{t=1}^T$, we train a classifier such that 
% a classification model finds a function 
$f(B^{(P)}_t)$$=$$Y_t$, which 
% The downstream classifier 
takes the $B^{(P)}$ as an input from Eq.~(\ref{eq:mhsa}) and returns a prediction $\hat{Y}_{tc}$ for the $c$-th class of sample $t$.
To update all parameters,
% including scales,
the objective function is defined by cross-entropy between the true value $Y_{tc}$ and $\hat{Y}_{tc}$.
With an $l_1$-norm regularization on a scale $s_j$ to ensure its positivity,
% make scale positive,
the overall objective function $L$ is defined as
\begin{equation} 
    L=-\frac{1}{T}\sum_{t=1}^T\sum_{c=1}^C Y_{tc}\text{log}(\hat{Y}_{tc})+\alpha\frac{1}{J}\sum_{j=1}^J \ind{1}_{s<0}|s_j|,
\end{equation}
where $\alpha$ is a user-parameter and $\ind{1}$ is an indicator function. 
The loss $L$ is minimized via backpropagation to optimize learnable parameters, 
% \textcolor{blue}{
including the scale $s_j$ and the hyperedge weight $E_{s_j}$ at $j$-th level, over the full scale range $J$.
% } 
% <- what are the learnable parameters? write notations. }
% for graph classification.
% \vspace{-5pt}
\section{Experiments}
\label{sec:experiments}
% \vspace{-3pt}
% \subsection{Dataset and Experimental Settings} 

% \subsection{Dataset.}
\noindent\textbf{Dataset.}
To evaluate our framework, we utilized two independent neurodegenerative brain network benchmarks: 1) Alzheimer's Disease Neuroimaging Initiative (ADNI) \cite{adni} and 2) Parkinson's Progression Markers Initiative (PPMI) \cite{ppmi}.

{\em ADNI. }
Neuroimages of 650 subjects 
in the ADNI study were processed using in-house pipeline.
Each brain was partitioned into 148 cortical and 12 sub-cortical regions by Destrieux atlas \cite{destrieux}, 
and Tractography on DWI was applied to %compute the number of 
derive white matter fibers connecting the 160 brain regions. 
For each parcellation, 
region-wise imaging features were defined such as Standard Uptake Value Ratio (SUVR)~\cite{suvr} of metabolism (FDG), 
$\beta$-amyloid protein (AMY), and tau protein (TAU) from PETs, 
along with cortical thickness (CT) from MRI scans, were measured.
Each subject was assigned to Control (CN, $T$=226), Significant Memory Concern (SMC, $T$=131), Early/Late Mild Cognitive Impairment (EMCI/LMCI, $T$=217/64), and AD ($T$=12) groups.

{\em PPMI. }
Functional MRI data from 181 subjects in the PPMI study were provided by \cite{ppmi}, and subsequently preprocessed into brain networks by \cite{neuro}. 
Each brain was parcellated into 116 regions using the AAL atlas \cite{aal}, from which region-wise Blood-Oxygen-Level-Dependent (BOLD) signals were extracted as node features.
% Due to varying time-series lengths, the mean BOLD signal per region was used as the node signal.
Functional connectivity matrices were constructed by computing pairwise correlations between regional BOLD signals and were used as edge features. 
Each subject was classified into three groups: Cognitively Normal (CN, $T$=15), Prodromal ($T$=53), and PD ($T$=113), reflecting a progression from CN to PD.
\noindent\textbf{Baselines.}
We categorized the baselines into two groups; 
1) Graph-based models such as GCN \cite{gcn}, GAT \cite{gat}, GCNII \cite{gcnii}, BrainGNN \cite{braingnn}, IBGNN \cite{ibgnn}, BrainGB \cite{braingb}, SGCN \cite{sgcn}, BrainNetTF \cite{brainnettf}, ALTER \cite{alter}, BioBGT \cite{biobgt}, and BQN \cite{bqn} and
2) Hypergraph-based models such as HGNN \cite{hgnn}, HNHN \cite{hnhn}, UniGCNII \cite{unigcnii}, HyperDrop, \cite{hyperdrop}, dwHGCN \cite{dwhgcn}, HyperGT \cite{hypergt}, HyBRiD \cite{hybrid}, and DHHNN \cite{dhhnn}.
Here, BrainGNN, IBGNN, SGCN, BrainGB, BrainNetTF, BioBGT, BQN, dwHGCN, and HyBRiD were specifically proposed to analyze the brain network.
% Notably, BrainNetTF, ALTER, \textcolor{blue}{BioBGT} and dwHGCN adopt Transformer-based architectures.

% \subsection{Setup.}
\noindent\textbf{Setup.}
% We designed the 5-way classification to classify the AD groups using multiple imaging biomarkers,
% 3-way classification.
We designed a 5-way classification for the ADNI dataset using multiple imaging biomarkers, and a 3-way classification for the PPMI dataset based on accumulated BOLD signals, 
both aiming to distinguish stages of neurodegenerative disease.
% To evaluate the performance of our framework, 
% each sample was given with a graph 
% % (i.e., brain network) 
% and multiple modalities (i.e. node features) simultaneously for each experiment.
For unbiased and stable results, 
5-fold {\em cross validation} was used, 
and accuracy (Acc), precision (Pre), recall (Rec) and F1-score (F1s) in their mean were computed for evaluation.
% For multi-scale analysis, we adopt wavelet kernels $g(s)$ based on \cite{sgwt}. 
We set up and trained each baseline to achieve feasible outcomes for fair comparisons. 
More details are given in the Appendix \ref{sec:implementation_details}.
% For this, we set the number of hidden units to 16 for all baselines and standardized key hyperparameters, with a learning rate of 1e-2 or 1e-3, a dropout rate of 0.5, and a weight decay of 5e-4.

\begin{figure*}[!t]
    \centering
    \vspace{-5pt}
    \includegraphics[width=0.98\linewidth]{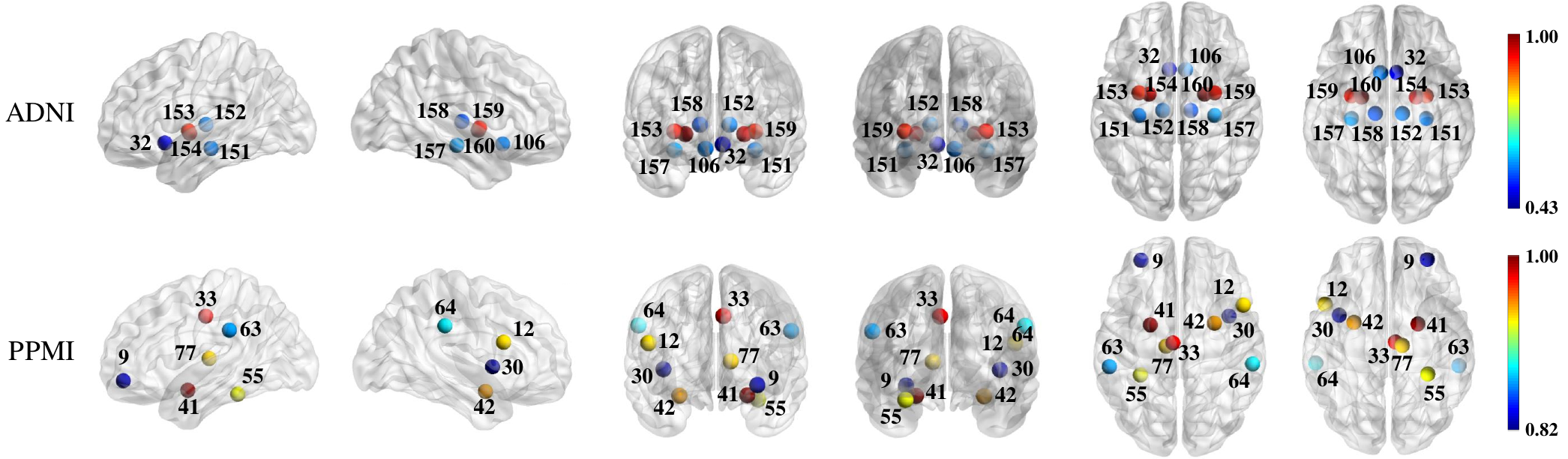}
    
    \vspace{-5pt}
    \scalebox{0.8}{
    \begin{tabular}{cccccc}
        \shifttext{-325pt}{(a)} &
        \shifttext{-175pt}{(b)} &
        \shifttext{-28pt}{(c)} &
        \shifttext{101pt}{(d)} &
        \shifttext{232pt}{(e)} & 
        \shifttext{346pt}{(f)}\\
    \end{tabular}}
    \vspace{-2pt}
    \caption{\small 
        Visualization of top-10 ROIs with the highest relative hyperedge importance on the ADNI (top) and PPMI (bottom) datasets.
        (a)/(b): outer views of left/right hemisphere,
        (c)/(d): front/rear views,
        (e)/(f): top/bottom views.
        Node color reflects the relative contribution of each ROI based on aggregated hyperedge weights,
        with node indices corresponding to the Destrieux atlas \cite{destrieux} for the ADNI dataset and the AAL atlas \cite{aal} for the PPMI dataset.
    } 
    \label{fig:top10_highest_aggregated_hyperedge_importance}
    \vspace{-5pt}
\end{figure*}

\begin{table*}[!t]
    \caption{\small
        ROIs with the 10 highest relative importance of aggregated hyperedge activation (Act.) for classification. (L) and (R) denote the left and right hemispheres, respectively.
        % \colorbox{yellow!25}{Sub-cortical} regions are highlighted.
    }
    \vspace{-5pt}
    \centering
    \renewcommand{\arraystretch}{1.0}
    \renewcommand{\tabcolsep}{0.27cm}
    \scalebox{0.69}{
    \begin{tabular}{c|l|c|c|l|c||c|l|c|c|l|c}
        \Xhline{4\arrayrulewidth}

       \multicolumn{6}{c||}{\textbf{ADNI}} & \multicolumn{6}{c}{\textbf{PPMI}}\\ \hline
       
       \multirow{1}{*}{\textbf{Idx}} & 
       \multirow{1}{*}{\textbf{ROI}} & 
       \multirow{1}{*}{\textbf{Act.}} & 
       \multirow{1}{*}{\textbf{Idx}} & 
       \multirow{1}{*}{\textbf{ROI}} & 
       \multirow{1}{*}{\textbf{Act.}} & 
       \multirow{1}{*}{\textbf{Idx}} & 
       \multirow{1}{*}{\textbf{ROI}} & 
       \multirow{1}{*}{\textbf{Act.}} & 
       \multirow{1}{*}{\textbf{Idx}} & 
       \multirow{1}{*}{\textbf{ROI}} & 
       % \multirow{2}{*}{\shortstack{\textbf{Relative}\\\textbf{Act.}}}
       \multirow{1}{*}{\textbf{Act.}}\\

       % & & & & \\
       \hline
        154 & (L) Globus.Pallidus & 1.000 & 160 & (R) Globus.Pallidus & 0.963 &
        41 & (L) Amygdala            & 1.000 & 33 & (L) G.Cingulum.Mid           & 0.974 \\

        159 & (R) Putamen         & 0.944         & 153 & (L) Putamen         & 0.941 &
        42 & (R) Amygdala            & 0.948 & 77 & (L) Thalamus            & 0.939 \\

        157 & (R) Hippocampus     & 0.551         & 151 & (L) Hippocampus     & 0.532         &
        12 & (R) G.Frontal.Inf.Oper      & 0.939 & 55 & (L) G.Fusiform             & 0.929 \\

        106 & (R) G.Subcallosal.Area   & 0.529    & 152 & (L) Thalamus        & 0.528         &
        64 & (R) G.Supramarginal        & 0.888 & 63 & (L) G.Supramarginal        & 0.870 \\

        158 & (R) Thalamus        & 0.499         & 32  & (L) G.Subcallosal.Area   & 0.432    &
        9  & (L) G.Frontal.Mid.Orb       & 0.829 & 30 & (R) Insula                  & 0.829 \\
      \Xhline{4\arrayrulewidth}
    \end{tabular}}
    \label{tab:top10_roi_name}
    \vspace{-10pt}
\end{table*}

\noindent\textbf{Results.} 
The classification performances from \ourmodel{} and 19 baselines are reported in Table \ref{tab:performance}.
As in Table \ref{tab:performance}, \ourmodel{} consistently outperformed all baselines on the ADNI dataset,
improving accuracy by 2.4\%p, precision by 2.1\%p, recall by 4.6\%p, and F1 score by 3.8\%p over the second-best method.
In addition to its superior performance, \ourmodel{} exhibits stability, as evidenced by low standard deviations across 5-fold evaluations.
To further assess effectiveness in multi-modal analysis and evaluate the contribution of each biomarker from the ADNI, 
we conducted additional experiments under various scenarios,
% excluding each of the four biomarkers individually,
which are in the Appendix \ref{sec:additional_quantitative_results}.

% In right side of Table \ref{tab:performance}, which shows the classification results on the PPMI dataset,
For the PPMI,
\ourmodel{} also brings improvements with average gain of 3.9\%p in accuracy, 3.4\%p in precision, 4.9\%p in recall, and 6.0\%p in F1-score,
outperforming all baselines in both precision and recall where most methods struggled.
Notably, the substantial gain in recall is particularly important in medical applications, 
as it stems from high-order interactions often overlooked by conventional methods.
In addition, \ourmodel{} outperformed recent approaches for brain network analysis, 
demonstrating its robustness and effectiveness in refining hypergraph structures based on multi-resolution filtered graph signals for brain network classification.

\noindent\textbf{Zero-Shot Learning.} 
% To evaluate the generalizability of \ourmodel{} across independently collected PD datasets, 
% we perform zero-shot evaluations on the TaoWu and Neurocon cohorts from \citet{neuro}, reported in Table~\ref{tab:zero_shot}.
% Both datasets contain fewer subjects than PPMI, making them unsuitable for training but ideal as challenging out-of-distribution test sets.
% The model is fully trained on PPMI and directly tested on both cohorts without fine-tuning, alongside representative brain-network baselines.
% In Table~\ref{tab:zero_shot}, the zero-shot evaluation reveals that baselines degrade substantially when transferred to TaoWu and Neurocon, 
% whereas \ourmodel{} maintains strong discriminative performance. 
% In particular, \ourmodel{} surpasses the second-best method by 7.7\%p in precision and 5.5\%p in recall on TaoWu, and by 4.3\%p in precision and 2.0\%p in recall on Neurocon. 
% These improvements demonstrate that \ourmodel{} effectively captures transferable PD-related structural signatures even with limited target-domain samples.
% Unlike pairwise-based or static-hyperedge approaches that struggle to retain discriminative patterns in low-sample conditions,
% multi-scale spectral filtering and adaptive soft-hyperedge construction enable robust cross-dataset generalization,
% highlighting its practical utility in real clinical scenarios where training and deployment domains differ.
To evaluate the generalizability of \ourmodel{} under heterogeneous distribution shifts, Table~\ref{tab:zero_shot} reports zero-shot classification performance across both acquisition phases and independently collected datasets. 
For AD, models are trained on ADNI-2 and evaluated on unseen target phases (ADNI-1/3/GO), 
where \ourmodel{} consistently outperforms baselines, achieving improvements of up to 3.7\%p in accuracy and 6.0\%p in F1-score over the second-best method despite substantial phase-wise differences. 
For PD, \ourmodel{} is trained on PPMI and evaluated on the TaoWu and Neurocon cohorts from \citet{neuro} without fine-tuning, and 
% While baselines exhibit performance degradation under cross-dataset transfer, 
\ourmodel{} surpasses the second-best method by 7.7\%p in precision and 5.5\%p in recall on TaoWu, and by 4.3\%p in precision and 2.0\%p in recall on Neurocon. 
% These results demonstrate that \ourmodel{} captures transferable disease-related structural representations that remain robust across cross-phase and cross-dataset shifts,
% enabled by multi-scale spectral filtering and adaptive soft-hyperedge construction, 
% thereby supporting reliable generalization in realistic clinical scenarios.
These results demonstrate that \ourmodel{} captures transferable disease-related structural representations that remain robust across cross-phase and cross-dataset shifts, enabled by its multi-scale and adaptive high-order structural modeling.

\vspace{-3pt}
%\section{Interpretation of Hypergraph Structure Learning from \ourmodel.}
\section{Discussions on the Learned Hypergraph}
\noindent\textbf{Identifying Key ROIs via Hyperedge Activations.}
Understanding the importance of ROIs in brain networks is essential for analyzing disease progression,
as these regions play a crucial role in characterizing disease-related alterations.
The aggregated hyperedge activations (i.e., degree) for each ROI serve as indicators of its network significance. 
As shown in Fig. \ref{fig:top10_highest_aggregated_hyperedge_importance}, ROIs with high cumulative activations act as network hubs, facilitating information flow and increasing sensitivity to pathological changes in both AD and PD.
Identifying these key ROIs highlights disease-relevant regions most affected by neurodegeneration, enhancing both model interpretability and clinical relevance.

% Top 10 ROIs with highest importance of cumulative hyperedge activations, listed at the right of Fig. \ref{fig:hyperedge}, are primarily subcortical regions, highlighting their importance in AD classification and brain network communication.
% Notably, 4 ROIs (globus pallidus, putamen, thalamus, and hippocampus) appear as left/right pairs, reflecting a symmetrical pattern across hemispheres. 
% These subcortical regions, serving as key hubs for sensory, motor, and cognitive integration \cite{sub1,sub2}, consistently demonstrate their significance across multiple modalities,
% revealing the crucial role of subcortical regions in neurodegenerative progression and disease-related alterations in brain networks.

Top-10 ROIs with highest importance of cumulative hyperedge activations, listed in Table \ref{tab:top10_roi_name}, are primarily subcortical regions in both ADNI and PPMI datasets,
highlighting their role in disease-stage classification and brain network communication.
In ADNI, 4 ROIs ({\em globus pallidus}, {\em putamen}, {\em thalamus}, and {\em hippocampus}) appear as left/right pairs, reflecting a symmetrical pattern across hemispheres. 
These subcortical regions, serving as key hubs for sensory, motor, and cognitive integration \cite{sub1,sub2}, consistently demonstrate their significance across multiple modalities,
revealing their crucial role in neurodegenerative progression.
Similarly in PPMI, the {\em amygdala} and {\em thalamus} stand out as major subcortical contributors, while the {\em amygdala} and {\em supramarginal gyrus} form symmetric left/right pairs,
suggesting a bilateral pattern linked to PD-specific alterations in emotion, motor, and cognitive regulation \cite{amyg1,amyg2}.
These findings demonstrate the ability of \ourmodel{} to identify disease-relevant and symmetric ROI groups, particularly in subcortical structures, across both AD and PD spectra.

\begin{figure}[!t]
\centering

    \begin{minipage}{0.57\linewidth}
      \centering
      \vspace{-5pt}
      \caption*{\small ADNI}
      \vspace{-2pt}
      \includegraphics[width=0.99\linewidth]{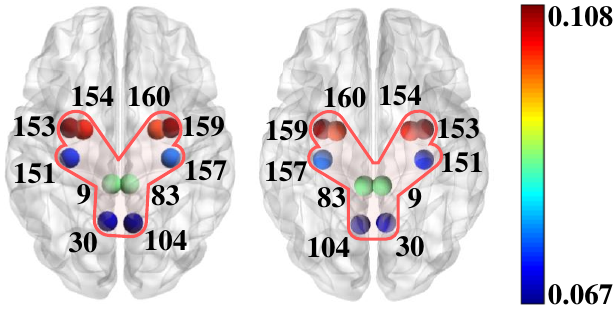}
    \end{minipage}%\hspace{40pt}
    \begin{minipage}{0.43\linewidth}
        \centering
        \renewcommand{\arraystretch}{1.0}
        \setlength{\tabcolsep}{5pt}
        \scalebox{0.6}{
        \begin{tabular}{c|l}
        \Xhline{4\arrayrulewidth}
        \textbf{Idx} & \textbf{ROI}  \\
        \hline
        9   & (L) G.Cingul.Post.Dorsal \\
        83  & (R) G.Cingul.Post.Dorsal \\
        30  & (L) G.Precuneus          \\
        104 & (R) G.Precuneus \\
        151 & (L) Hippocampus          \\ 
        157 & (R) Hippocampus \\
        153 & (L) Putamen              \\
        159 & (R) Putamen \\
        154 & (L) Pallidum             \\
        160 & (R) Pallidum \\
        % \begin{tabular}{c|l|c|l}
        % \Xhline{3\arrayrulewidth}
        % \textbf{Idx} & \textbf{ROI} & \textbf{Idx} & \textbf{ROI} \\
        % \hline
        % 9   & (L) G.Cingul.Post.Dorsal & 83  & (R) G.Cingul.Post.Dorsal \\
        % 30  & (L) G.Precuneus          & 104 & (R) G.Precuneus \\
        % 151 & (L) Hippocampus          & 157 & (R) Hippocampus \\
        % 153 & (L) Putamen              & 159 & (R) Putamen \\
        % 154 & (L) Pallidum             & 160 & (R) Pallidum \\
        \Xhline{4\arrayrulewidth}
        \end{tabular}}
    \end{minipage}%\hfill

    \begin{minipage}{0.57\linewidth}
      \centering
      \vspace{-5pt}
      \caption*{\small PPMI}
      \vspace{-2pt}
      \includegraphics[width=0.99\linewidth]{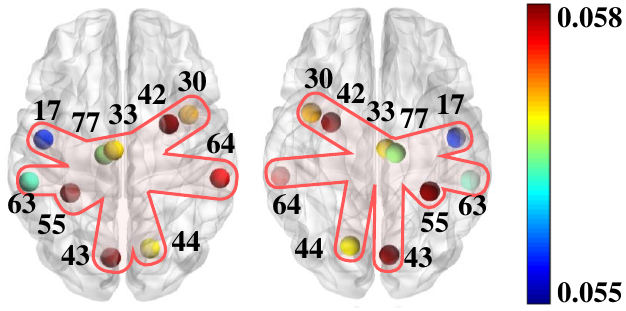}
    \end{minipage}%\hspace{40pt}
    \begin{minipage}{0.43\linewidth}
        \centering
        \renewcommand{\arraystretch}{1.0}
        \setlength{\tabcolsep}{10pt}
        \scalebox{0.6}{
        \begin{tabular}{c|l}
        \Xhline{4\arrayrulewidth}
        \textbf{Idx} & \textbf{ROI} \\
        \hline
        17 & (L) Rolandic.Oper  \\
        77 & (L) Thalamus \\
        33 & (L) Cingulum.Mid   \\
        30 & (R) Insula \\
        43 & (L) Calcarine      \\
        42 & (R) Amygdala \\
        55 & (L) Fusiform       \\ 
        44 & (R) Calcarine \\
        63 & (L) SupraMarginal  \\
        64 & (R) SupraMarginal \\
        \Xhline{4\arrayrulewidth}
        \end{tabular}}
    \end{minipage}

    % ---- Bottom: Tables ----

    \vspace{-5pt}
    \caption{\small
        Left: Visualization of the top-10 ROIs associated with the most important hyperedge for top/bottom views in the ADNI and PPMI studies, 
        based on the sum of node activations for each hyperedge.
        Node color indicates activation value (i.e., weight) per node within the hyperedge.
        Right: Corresponding ROI labels.
    }
\label{fig:top10_most_important_hyperedge}
\vspace{-10pt}
\end{figure}

% \subsection{Interpreting the Most Salient Hyperedge.}
\noindent\textbf{Interpreting the Most Salient Hyperedge.}
To further investigate the sub-structure of the learned hyperedges, we analyzed the most prominent hyperedge identified by aggregating the activation values of all nodes within each hyperedge. 
In Fig.~\ref{fig:top10_most_important_hyperedge}, a subset of ROIs from the ADNI dataset, e.g., {\em hippocampus} and {\em putamen}, are not only co-activated within the same hyperedge but also overlap with those highlighted in Fig.~\ref{fig:top10_highest_aggregated_hyperedge_importance} as prominent network hubs,
indicating that these regions are functionally interrelated across both global and localized contexts \cite{ad02,ad03}.
Notably, five of the top ROIs in this representative hyperedge appear as symmetric left–right pairs, suggesting a bilateral organization in the disease-associated brain network.
For PPMI dataset, the most salient hyperedge was characterized by integrating node activations across hyperedges.
As shown in the right of Fig.~\ref{fig:top10_most_important_hyperedge}, top-10 ROIs include important regions for PD such as {\em thalamus} and {\em amygdala}, which are critical for executive functioning and emotional processing 
% and many of which are strongly implicated in PD pathology 
\cite{pd1,pd0}.
These observations suggest that \ourmodel{} captures both individual regional importance and inter-regional relationships through hyperedges,
supporting functional connectivity and their role in neurodegenerative pathology.

\vspace{-5pt}
\section{Effectiveness of Model Components}
\vspace{-2pt}

% \subsection{Hyper-parameter Studies.} 
\noindent\textbf{Hyper-parameter Studies.} 
To explore sensitivity of our model to hyper-parameters,
% how the performance of \ourmodels varies with different hyper-parameter configurations,
% the impact of each component,
we examine the impact of the number of scales $J$, % in the filtering stage, 
the number of hyperedges $M$, the maximum number of neighboring hyperedges $\eta$, 
and the embedding dimension $d_h$ in the hypergraph structure learning. % stage. 
% While changing each of these factors, 
% we kept the other hyper-parameters fixed such as $J=3$, $M=16$, $\eta=3$ and $d_\text{hyper}=16$.
%we varied each factor independently while keeping the others fixed at
Default values were set as $J$=3, $M$=16, $\eta$=3, and $d_h$=16 with the best performance, and 
we varied each factor independently. 
% For example, regarding the number of scales (i.e. red), the performance increased up to 3 but declined from 4 onward, indicating that splitting the node signal into furture resolutions is unnecessary.
As seen in Fig. \ref{fig:sensitivity_study}, for the $M$ (i.e. orange), accuracy increased up to 16 but declined beyond 20, likely due to noisy or redundant hyperedges. 
% information from excessive hyperedges.
% indicating that this trend can be attributed to the potential capture of noise or aggregation of redundant information from hyperedges as the number of hyperedges grows
Regarding the $d_h$ 
%embedding dimension of hypergraph 
(i.e. green), performance improved up to 16 and then plateaued,
suggesting that while increasing $d_h$ initially enhances representational capacity, further growth may lead to overfitting.
% it may lead to overfitting past a certain threshold.

\begin{figure}[!t]
\centering
    \renewcommand{\arraystretch}{1.0}
    \renewcommand{\tabcolsep}{0.01cm}
    \scalebox{0.95}{
    % \begin{tabular}{cc}
        \includegraphics[width=\linewidth]{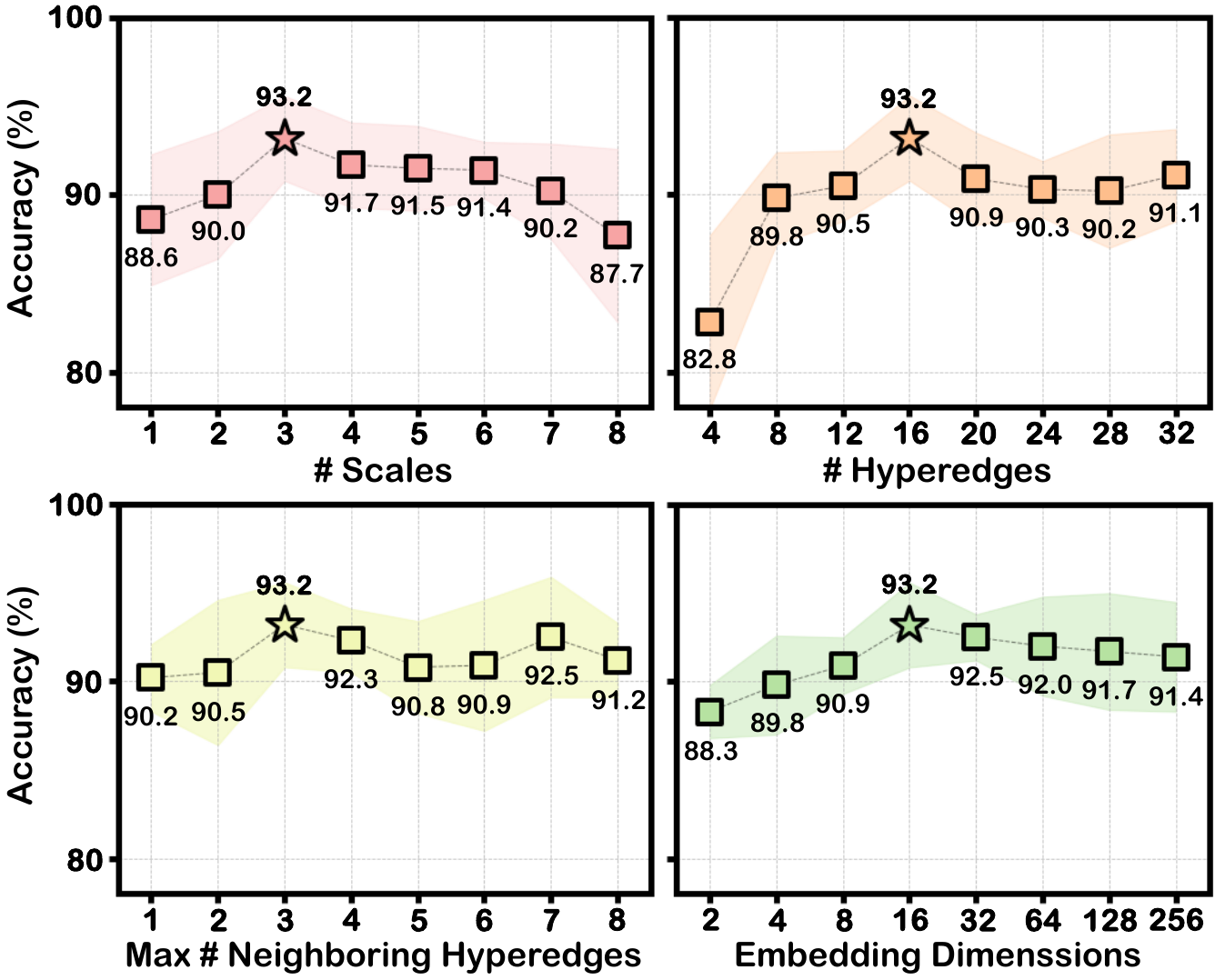}
    % \end{tabular}
    }\vspace{-3pt}
    \caption{\small
        Sensitivity studies of crucial hyperparameters in \ourmodel{} on the ADNI dataset.
        Boxes denote the mean, with shading indicating standard deviation across folds.
    }
\label{fig:sensitivity_study}
\vspace{-10pt}
\end{figure}

% \begin{figure*}[!t]
% \centering
%     \renewcommand{\arraystretch}{1.0}
%     \renewcommand{\tabcolsep}{0.15cm}
%     \scalebox{0.98}{
%     \begin{tabular}{ccc}
%       \includegraphics[width=0.32\linewidth]{FIG/scale2.png}
%     & \includegraphics[width=0.32\linewidth]{FIG/scale1.png}
%     & \includegraphics[width=0.32\linewidth]{FIG/scale0.png}\\
%     (a) Fine Scale  & (b) Intermediate Scale & (c) Coarse Scale \\
%     \end{tabular}}
%     \caption{
%         Visualization of multi-resolution hypergraph construction on ADNI dataset.
%         A representative hyperedge is constructed at each scale, where the scale parameter controls the granularity of high-order relations by adaptively adjusting the number of nodes included in the hyperedge.
%         % , reflecting broader spatial coverage at coarser resolutions. 
%         Node colors indicate the relative importance of each node within the corresponding hyperedge.
%     }
% \label{fig:scalewise_hyperedges}
% \end{figure*}

% \subsection{Ablation Studies.}
\noindent\textbf{Ablation Studies.}
To assess the effect of each component in \ourmodel, 
we conduct an ablation study on the ADNI and PPMI datasets, as summarized in Table~\ref{tab:ablation_study}.
\ourmodel{} integrates three key components: adaptive multi-scale feature filtering (MSF), hypergraph structure learning (HSL), and multi-scale transformer (MST).
Removing any of these components leads to a clear performance drop, confirming their complementary contributions to modeling high-order brain structure.
Specifically, MSF provides scale-aware representations through graph wavelet filtering, HSL learns data-driven hyperedges that reflect regional groupings, and MST enables context-aware information aggregation across resolutions.
Among the variants, removing HSL results in the most significant degradation, highlighting the importance of flexible hyperedge modeling in capturing the irregular topology of brain networks.
These results demonstrate that all three modules are crucial in combination, and jointly contribute to the strong predictive performance of \ourmodel{}.

\begin{table}[!h]
    % \vspace{-10pt}
\centering
    \caption{\small
        Ablation study of \ourmodel{} components on both datasets.
    }
    \vspace{-5pt}
    \renewcommand{\arraystretch}{1.0}
    \renewcommand{\tabcolsep}{0.07cm}
    \scalebox{0.7}{
    \begin{tabular}{c|c|c||c|c|c|c|c|c}
    \Xhline{4\arrayrulewidth}

    \multicolumn{3}{c||}{Component} & \multicolumn{3}{c|}{ADNI} & \multicolumn{3}{c}{PPMI} \\ \hline
    MSF & HSL & MST & Acc & Pre & Rec & Acc & Pre & Rec\\ \hline
    \crossmark & \checkmark & \checkmark & 
    $90.0_{\pm2.8}$ & $91.4_{\pm3.7}$ & $89.9_{\pm4.4}$ & 
    $72.9_{\pm4.6}$ & $62.1_{\pm9.8}$ & $55.7_{\pm5.7}$ \\
    \checkmark & \crossmark & \checkmark & 
    $76.8_{\pm1.9}$ & $76.9_{\pm7.0}$ & $72.4_{\pm6.6}$ & 
    $67.4_{\pm5.9}$ & $45.5_{\pm9.4}$ & $41.6_{\pm4.1}$ \\
    \checkmark & \checkmark & \crossmark & 
    $86.9_{\pm3.4}$ & $90.4_{\pm2.4}$ & $84.4_{\pm5.0}$ & 
    $64.1_{\pm1.8}$ & $53.0_{\pm8.6}$ & $44.5_{\pm6.9}$ \\
    \checkmark & \checkmark & \checkmark & 
    $\mathbf{93.2_{\pm2.4}}$ & 
    $\mathbf{95.4_{\pm1.3}}$ & 
    $\mathbf{94.2_{\pm1.6}}$ & 
    $\mathbf{76.8_{\pm3.7}}$ & 
    $\mathbf{66.6_{\pm7.6}}$ & 
    $\mathbf{60.7_{\pm6.0}}$\\

    \Xhline{4\arrayrulewidth}
    \end{tabular}
    }
\label{tab:ablation_study}
\vspace{-8pt}
\end{table}

\vspace{-5pt}
\section{Conclusion}
\label{sec:conclusion}
\vspace{-2pt}

In this work, we proposed \ourmodel{}, a novel hypergraph framework to adaptively model high-order relationships in brain networks. 
By leveraging multi-resolution graph representations and dynamically constructing hyperedges, our method effectively captures complex inter-regional dependencies beyond pairwise connections. 
Extensive experiments on the ADNI and PPMI datasets show that \ourmodel{} demonstrates superiority as evidenced by improved performance in AD- and PD-specific stages classification. 
Furthermore, the trained hypergraph identifies hub ROIs linked to disease progression across both AD and PD, 
showcasing the potential of adaptive hypergraph learning as a powerful tool for analyzing diverse neurodegenerative disorders.

\section*{Acknowledgements}
This research was supported by 
RS-2025-02216257 (70\%), 
RS-2026-25494850 (20\%), and 
RS-2019-II1091906 (AI Graduate Program at POSTECH, 10\%).

\section*{Impact Statement}
% This work targets a clinically consequential task, stage classification of neurodegenerative disorders, where improved sensitivity and stability can support earlier detection and better patient stratification for treatment and clinical trials. 
% By identifying hub ROIs and group-wise interactions through learned hyperedges, the framework provides interpretable markers that complement conventional imaging endpoints and facilitate hypothesis generation for network-level pathology. 
% While the proposed model may introduce additional computation, scale-wise pre-computation and reuse keep inference practical for real-world settings. The method is intended to support, rather than replace, clinical decision-making, and its predictions should be interpreted in conjunction with established diagnostic criteria.
This work targets stage classification of neurodegenerative disorders, where improved sensitivity and stability may support earlier detection and patient stratification. 
The framework identifies salient ROIs and group-wise interactions through learned hyperedges, 
providing interpretable signals that complement conventional imaging biomarkers.
However, the performance may be less reliable for underrepresented disease groups due to dataset imbalance, potentially leading to biased predictions. 
In addition, the identified ROIs and hyperedges should not be interpreted as causal biomarkers, but rather as correlational patterns learned from data.
The method is intended to support, not replace, clinical decision-making, and should be used with established diagnostic criteria with appropriate human oversight.

\bibliography{example_paper}
\bibliographystyle{icml2026}

\newpage
\appendix
\onecolumn
% \section{Appendix}
This material presents the supplementary paper from the main manuscript due to the space limitations.
In Section \ref{sec:proof}, the proof of the propositions on the foundation for our hypergraph structure learning is provided.
In Section \ref{sec:dataset_description}, a detailed description of the ADNI and PPMI datasets used in our experiments is provided.
Section \ref{sec:implementation_details} outlines the full implementation details and hyperparameters of \ourmodel{} and baseline methods for AD classification.
% In Section \ref{sec:additional_quantitative_results}, we include an ablation study where one imaging biomarker is excluded at a time, along with quantitative results.
Section \ref{sec:additional_quantitative_results} presents tri-modal leave-one-out ablations and dual-modality fusion results, quantifying marginal effect of each biomarker and the synergy of pairwise combinations.
We additionally conducted experiments on other brain disorder dataset to further validate the generality of our approach.
In Section \ref{sec:add_discuss_hypergraph}, we discuss additional behavior of the optimized hypergraph.
In Section \ref{sec:clinical}, we additionally conduct regression experiments to elucidate the relationship between structural/functional brain alterations and clinical scores.
Finally, we conducted additional ablation studies in Section \ref{sec:add_ablation}, detailed model complexity is explained in Section \ref{sec:model_complexity}, and we also discuss our framework in Section \ref{sec:limitations}.

\section{Proof of Propositions}
\label{sec:proof}

\theoremstyle{plain}
\newtheorem*{propositionrep1}{Proposition~\ref{prop:spectral}}

\begin{propositionrep1}
Given a graph signal $X$ and a hyperedge projector $\Phi$, computing incidence matrix $H_{s}$ is equivalent to 
projecting the graph wavelet representation of $X$, i.e., $W_X(s)$, with $\Phi$.
\end{propositionrep1}
\begin{proof}
    Let $X\in \mathbb{R}^{N\times D}$ be the node feature, where $N$ is the number of nodes and $D$ is the feature dimension of $X$.
    From \citet{sgwt}, the graph signal $X$ is decomposed into various levels of granularity in the spectral space for multi-resolution graph analysis.
    To project the signals into the spectral space, the spectral graph wavelet basis at scale $s$ is defined as
    \begin{equation}
        \psi_s = Ug(s\Lambda)U^T,
        \label{eq:wavelet_basis}
    \end{equation}
    % where $U=[u_1|u_2|\dots|u_N]$ and $\Lambda=[\lambda_1|\lambda_2|\dots|\lambda_N]$ are the eigenvectors and eigenvalues of the normalized Laplacian $\hat{\mathcal{L}}$,
    where $U$ and $\Lambda$ are the eigenvectors and eigenvalues of the normalized Laplacian $\hat{\mathcal{L}}$,
    and $g(\cdot)$ is the wavelet kernel. 
    Since $U$ is orthonormal and $g(s\Lambda)$ is diagonal, $\psi_s^T$ can be replaced by $\psi_s$.
    Now, the wavelet coefficients of the node signals at scale $s$ is computed as
    \begin{equation}
        \begin{aligned}
            W_X(s)&=\langle\psi_s,\,X\rangle\\
            &=\psi_s\,X.\\
        \end{aligned}
        \label{eq:wavelet_coefficient}
    \end{equation}
    Using \eqref{eq:wavelet_basis}, the filtered node representation in the graph space at the scale $s$ is defined as
    \begin{equation}
        \begin{aligned}
            X_s&=\langle\psi_s,\,W_X(s)\rangle\\
            &=\psi_sW_X(s)\in\mathbb{R}^{N\times D}\\
            &=\psi_s\psi_sX\\
            &=\psi_sUg(s\Lambda)U^TX\\
            &=Ug(s\Lambda)U^TUg(s\Lambda)U^TX\\
            &=Ug^2(s\Lambda)U^TX.
        \end{aligned}
        \label{eq:filtered_node_feature}
    \end{equation}
    Let $\Phi\in\mathbb{R}^{D\times M}$ be a hyperedge projector with $M$ hyperedges that maps node features into the hyperedge space. 
    Using $X_s$ and $\Phi$, 
    the scale-dependent incidence matrix is constructed as
    \begin{equation}
        H_s = X_s\,\Phi.
        \label{eq:scale_dependent_incidence_matrix}
    \end{equation}
    Substituting Eq.~(\ref{eq:scale_dependent_incidence_matrix}) into Eq.~(\ref{eq:filtered_node_feature}) results in
    \begin{equation}
        \begin{aligned}
            H_s&=X_s\Phi \in\mathbb{R}^{N\times M}\\
            &=Ug^2(s\Lambda)U^TX\Phi\\
            &=Ug(s\Lambda)^Tg(s\Lambda)U^TX\Phi\\
            &=Ug(s\Lambda)^TU^TUg(s\Lambda)U^TX\Phi\\
            &=Ug(s\Lambda)^TU^T\psi_sX\Phi\\
            &=Ug(s\Lambda)^TU^TW_X(s)\Phi\\
            % &=Ug(s\Lambda)U^TW_X(s)\Phi \qquad(\because g(s\Lambda)=g(s\Lambda)^T)\\
            &=Ug(s\Lambda)U^TW_X(s)\Phi\\
            &=\psi_sW_X(s)\Phi.\\
        \end{aligned}
    \end{equation}
    % Hence, the similarity in the wavelet space is $S_s := W_X(s)\,\Phi$, whose entry is computed as
    % \begin{equation}
    %     S_s(v_n,e_m)=\sum_{d=1}^{D} {W_{X_s,d}}(v_n)\,\Phi_d(e_m).
    % \end{equation}
    % It compares the wavelet-filtered representation at node $v_n$ (i.e., the $n$-th row of $W_X(s)$) with the $m$-th hyperedge (i.e., the $m$-th row of $E$).
    % The hypergraph structure in the original graph domain is then obtained by mapping back this similarity through the wavelet basis, i.e., $H_s$$=$$\psi_s\,S_s$, whose entry is defined as
    % \begin{equation}
    %     H_s(v_n,e_m)=\sum_{i=1}^{N}\psi_{s,i}(v_n)\,S_{s,i}(e_m),
    % \end{equation}
    % which shows that node–to–hyperedge incidence weight at scale $s$ arises from a spectral mixing of wavelet space similarities.
    % For node $v_n$, it aggregates $S_{s,i}(e_m)$ across all nodes $v_i$, weighting each term by $\psi_{s,i}(v_n)$ that encodes scale-dependent influence along the graph.
    % Consequently, $H_s(v_n,e_m)$ represents the back-projected similarity in the original graph domain at the scale $s$.
    Thus, Eq. (14) shows that the incidence matrix $H_s$ is ultimately constructed from the wavelet coefficients $W_X(s)$ combined with the hyperedge projector $\Phi$, providing a spectral formulation of node-to-hyperedge relations.
\end{proof}

In practice, we apply a linear embedding and use $\bar{X}_s$$=$$X_s\,\Theta$ due to the low input dimensionality, where $\Theta\in\mathbb{R}^{D\times d_h}$.
All subsequent computations are the same as $H_s$$=$$\psi_sW_{\bar{X}_s}{\Phi}$.
This parameterization ensures that the statement and proof of Proposition \ref{prop:spectral} remain unchanged.
Finally, $H_s$ is normalized to obtain $\tilde{H}_s$ and truncated via TopK to produce $\bar{H}_s$, as in Eq.~(\ref{eq:hypergraph_construction}) and Eq.~(\ref{eq:hypergraph_truncated}).

% \subsection{Proof of Proposition 2}
% \SubProp*

\theoremstyle{plain}
\newtheorem*{propositionrep2}{Proposition~\ref{prop:scale}}

\newtheorem*{proposition2}{Proposition~\ref{prop:scale}}
\begin{proposition2}
% \textbf{(Scale-induced Hyperedge Expansion)}
% % Let $X_s$ denote the graph wavelet-filtered node features at scale $s$, where $g(\cdot)$ is a spectral kernel, and $\Phi$ be a hyperedge projection.
% Then, the scale-dependent incidence matrix is defined as 
% $H_s$$=$$X_s\Phi$.
% As $s\rightarrow\infty$, all node features converge to scaled multiples of a common low-frequency directions, which increases pairwise similarity among nodes.
% Consequently, 
In $H_s$, there exists at least one hyperedge $e_{m}^\ast$ such that the number of nodes within $e_m^{\ast}$, for which $e_{m}^{\ast}$ attains the highest weight, monotonically increases with respect to $s$, and this number converges to $N$ as $s\rightarrow\infty$
% becomes extremely large
, i.e.,  $\lim_{s\rightarrow\infty}|\{v_n:e_{m}^{{\ast}}=\operatorname{argmax}_{e_m}H_s(v_n,e_m)\}|\approx N$.
% \end{restatable}
\end{proposition2}
\begin{proof}
    We prove the Proposition \ref{prop:scale} by considering separately the cases of low-pass and band-pass kernels.
    Although their limiting behaviors differ, both cases exhibit the same scale-dependent trend that hyperedges expand as the scale increases.
    \paragraph{(i) Low-pass kernels.} Suppose that $g(\cdot)$ is a low-pass kernel with $g(0)$$=$$1$ and $g(s\lambda)\rightarrow0$ as $s\rightarrow\infty$ for all $\lambda>0$.
    In this case, as the scale grows, all high-frequency coefficients corresponding to eigenvalues $\lambda_k$ with $k\geq2$ vanish.
    Consequently, from Eq.~(\ref{eq:filtered_node_feature}), the filtered node features converge to the projection on the the first eigenvector as
    \begin{equation}
        \begin{aligned}
            X_s&=\sum_{i=1}^Ng^2(s\lambda_i)u_iu_i^TX\\
            &=\underbrace{g^2(s\lambda_1)}_{=1}u_1u_1^TX+\underbrace{g^2(s\lambda_2)u_2u_2^TX+\dots +g^2(s\lambda_N)u_Nu_N^TX}_{\approx 0}\\
            &\approx u_1(u_1^TX).
        \end{aligned}
        \label{eq:u1u1X}
    \end{equation}
    This shows that all node features eventually collapse onto the one-dimensional subspace spanned by $u_1$, so that the variability across nodes disappears and only the common low-frequency mode remains. 
    When Eq.~(\ref{eq:u1u1X}) is substituted into the Eq.~(\ref{eq:scale_dependent_incidence_matrix}), which is computed as 
    \begin{equation}
        \begin{aligned}
            H_s&=X_s\Phi\\
            &\approx u_1(u_1^TX)\Phi.
        \end{aligned}
    \end{equation}
    Here, each row of $H_s$ can be written as
    \begin{equation}
        H_s(v_n,:)\approx u_{1,v_n}(u_1^TX)\Phi,
    \end{equation}
    where $u_{1,v_n}$ denotes the $n$-th entry of the first eigenvector.
    Since all $u_{1,v_n}$ are positive and nearly uniform across nodes,
    the weight scores from $H_s(v_n,:)$ differ only by a scalar multiple.
    Thus, the relative ordering of hyperedge scores is the same for every node. 
    The argmax over columns (i.e, hyperedges) selects the same hyperedge projection $e_m^\ast$ for all $n$.
    As a result, the degree of this hyperedge increases monotonically with the scale and converges to the total number of nodes as
    \begin{equation}
        \lim_{s\rightarrow\infty}|\{v_n:e_{m}^{{\ast}}=\operatorname{argmax}_{e_m}H_s(v_n,e_m)\}| = N.
    \end{equation}
    \paragraph{(ii) Band-pass kernels.} Consider a band-pass kernel with $g(0)$$=$$0$ and maximum response at positive eigenvalues.
    The peak response shifts toward the smallest nonzero eigenvalues at the scale $s$ increases.
    Thus, there exists a small eigenvalue interval $\mathcal{I}$$=$$[\lambda_2,\lambda_n]$ around $\lambda_2$ such that only components with $\lambda_k\in\mathcal{I}$ remain significant, and
    we denote the corresponding index set as $\mathcal{K^{\ast}}$$=$$\{k:\lambda_k\in\mathcal{I}\}$.
    In this interval, the filtered representations can be approximated as
    \begin{equation}
        \begin{aligned}
            X_s&=\sum_{i=1}^Ng^2(s\lambda_i)u_iu_i^TX\\
            &=\underbrace{g^2(s\lambda_1)}_{=0}u_1u_1^TX+\underbrace{\dots+g^2(s\lambda_n)u_nu_n^TX}_{\lambda_k\in\mathcal{I}}+\dots+g^2(s\lambda_N)u_Nu_N^TX\\
            &\approx\sum_{k\in\mathcal{K}^{\ast}}g^2(s\lambda_k)u_ku_k^TX,
        \end{aligned}
        \label{eq:band_approx}
    \end{equation}
    so that the node representations concentrate in the low-frequency subspace spanned by the eigenvectors $\{u_k:k\in\mathcal{K}^{\ast}\}$.
    Contributions outside this subspace decay relative to the dominant components as $s$ increases.
    Plugging Eq.~(\ref{eq:band_approx}) into the Eq.~(\ref{eq:scale_dependent_incidence_matrix}), as
    \begin{equation}
        \begin{aligned}
            H_s&=X_s\Phi\\
            &\approx (\sum_{k\in\mathcal{K}^{\ast}}u_iu_i^TX)\Phi
        \end{aligned}
    \end{equation}
    In this form, the variation among nodes is confined to a shared low-dimensional subspace. If one hyperedge projection $e_m^{\ast}$ is better aligned with this subspace than all others,
    then its score advantage becomes increasingly dominant as the residual vanishes. 
    Consequently, the number of nodes for which $e_m^{\ast}$ yields the highest weight grows monotonically with the scale.
    In the asymptotic regime, this number converges to the size of the node set aligned with the dominant low-frequency subspace, which can be close to the full node set as
    % across $\mathcal{I}$.
    % We conclude that hyperedge expansion occurs monotonically over the scale ranges where the dominant spectral band is stable, as
    \begin{equation}
        \lim_{s\rightarrow\infty}|\{v_n:e_{m}^{{\ast}}=\operatorname{argmax}_{e_m}H_s(v_n,e_m)\}|\approx N.
    \end{equation}
     % proportional to the same vector of $(u_1^TX)\Phi$.
    % Thus, every node observes the same similarity profile with respect to the hyperedge projections, and the argmax over hyperedgs is identical for all rows.
    These two cases demonstrate that low-pass kernels drive all nodes into the same $u_1$ direction so that a specific hyperedge eventually covers all nodes, while band-pass kernels concentrate the features in a fixed low-frequency subspace on certain scale intervals so that the best-aligned hyperedge expands its degree within those intervals. 
    In either case, increasing the wavelet scale smooths node features and enlarges hyperedges, consistent with the patterns observed in Fig.~(\ref{fig:concept}).
\end{proof}

\section{Dataset Description}
\label{sec:dataset_description}

To understand our experiments, we describe the datasets and preprocessing steps of our experiments in detail.
We used two neurodegenerative brain network benchmarks: the Alzheimer's Disease Neuroimaging Initiative (ADNI) and the Parkinson's Progression Markers Initiative (PPMI).

% \noindent\textbf{ADNI dataset.}
\noindent\textbf{ADNI dataset.}
The ADNI study~\citep{adni} is a publicly available dataset designed for Alzheimer's Disease (AD) research,
providing multi-modal imaging and biomarker data across various stages of cognitive decline.
The structural brain graphs in the ADNI dataset are constructed through a multi-step imaging pipeline.
The image preprocessing steps include 1) performing skull stripping and tissue segmentation on T1-MRI; 2) performing brain parcellation using the Destrieux atlas; 3) obtaining diffusion tensor information from DWI data; 4) constructing cortical surface using free-surfer; 5) applying probabilistic tractography to construct structural brain network; 6) measuring region-wise imaging features such as Standard Uptake Value Ratio (SUVR) of tau protein from Tau-PET (TAU), $\beta$-amyloid protein from Amyloid-PET (AMY), metabolism level from FDG-PET (FDG) and cortical thickness from MRI (CT),
where the cerebellum was used as the reference for the SUVR normalization.

\begin{table}[!b]
    \centering
    \caption{\small
    Demographics of the ADNI dataset.
    }\vspace{-5pt} 
    \renewcommand{\arraystretch}{1.0} 
    \renewcommand{\tabcolsep}{0.18cm}
    \scalebox{0.8}{
    \begin{tabular}{c||c|c|c|c|c}
        \Xhline{4\arrayrulewidth}
        \textbf{Category} & \textbf{CN} & \textbf{SMC} & \textbf{EMCI} & \textbf{LMCI} & \textbf{AD} \\
        \hline
        \# of subjects & 226 & 131 & 217 & 64 & 12 \\
        \hline
        Gender (M / F) & 114 / 112 & 42 / 89 & 121 / 96 & 43 / 21 & 9 / 3 \\
        
        \hline
        Age (Mean $\pm$ Std) & 71.27 $\pm$ 5.92& 71.42 $\pm$ 4.76 & 68.58 $\pm$ 7.21 & 69.66 $\pm$ 5.25 & 73.46 $\pm$ 10.34 \\
        % (Mean $\pm$ Std) & $\pm$ 5.92 & $\pm$ 4.76 & $\pm$ 7.21 & $\pm$ 5.25 & $\pm$ 10.34 \\
        \hline
        
        CDR-SOB (Mean $\pm$ Std) & 0.02 $\pm$ 0.12 & 0.16 $\pm$ 0.40 & 1.17 $\pm$ 0.89 & 1.27 $\pm$ 0.98 & 3.54 $\pm$ 2.14 \\
        % (Mean $\pm$ Std) & $\pm$ 0.12 & $\pm$ 0.40 & $\pm$ 0.89 & $\pm$ 0.98 & $\pm$ 2.14 \\
        \hline
        
        ADAS11 (Mean $\pm$ Std) & 4.70 $\pm$ 2.51 & 4.39 $\pm$ 2.45 & 7.24 $\pm$ 3.30 & 8.99 $\pm$ 3.79 & 16.74 $\pm$ 4.66 \\
        % (Mean $\pm$ Std) & $\pm$ 2.51 & $\pm$ 2.45 & $\pm$ 3.30 & $\pm$ 3.79 & $\pm$ 4.66 \\
        \hline
        
        ADAS13 (Mean $\pm$ Std) & 7.14 $\pm$ 3.87 & 6.52 $\pm$ 3.72 & 11.28 $\pm$ 5.16 & 14.76 $\pm$ 5.85 & 27.03 $\pm$ 5.22 \\
        % (Mean $\pm$ Std) & $\pm$ 3.87 & $\pm$ 3.72 & $\pm$ 5.16 & $\pm$ 5.85 & $\pm$ 5.22 \\
        \hline
        
        MMSE (Mean $\pm$ Std) & 29.07 $\pm$ 1.24 & 29.08 $\pm$ 1.13 & 28.48 $\pm$ 1.58 & 28.23 $\pm$ 1.64 & 22.75 $\pm$ 4.05 \\
        % (Mean $\pm$ Std) & $\pm$ 1.24 & $\pm$ 1.13 & $\pm$ 1.58 & $\pm$ 1.64 & $\pm$ 4.05 \\
        
        \Xhline{4\arrayrulewidth}
    \end{tabular}}
\label{tab:adni_dataset}
\end{table}

Following the clinical outcomes, the diagnostic labels for each subject were defined as Control (CN), Significant Memory Concern (SMC), Early-stage Mild Cognitive Impairment (EMCI), Late-stage Mild Cognitive Impairment (LMCI), and Alzheimer's Disease (AD).
The demographics of the ADNI dataset is summarized in Table~\ref{tab:adni_dataset}.
It includes a total of 650 subjects, with 226 CN, 131 SMC, 217 EMCI, 64 LMCI, and 12 AD cases.
Table~\ref{tab:adni_dataset} contains group-wise details including gender, age, and multiple AD-related clinical symptoms such as CDR-SOB, ADAS11, ADAS13, and MMSE.
The average age within these groups ranges from 68.58 to 73.46 years, with standard deviations between 4.76 and 10.34 years.
Gender distribution varies across groups, with CN comprising 114 males and 112 females, while AD includes 9 males and 3 females.

\noindent\textbf{PPMI dataset.}
The PPMI dataset is designed to identify biological markers related to the risk, onset, and progression of Parkinson's disease (PD), a neurodegenerative disorder primarily affecting movement. For brain network extraction, resting-state fMRI is parcellated into 116 regions based on the AAL atlas to obtain Blood-Oxygen-Level-Dependent (BOLD) signals, which are then used to compute the connectivity matrix; this preprocessing was conducted by \cite{ppmi}.
Node features are extracted from BOLD signals capturing changes in blood flow, where the mean BOLD signal per region is used to account for different lengths.

Based on clinical outcomes, disease labels were assigned to each subject as CN, Prodromal, or PD. 
Table \ref{tab:ppmi_dataset} summarizes the demographics of the PPMI dataset, which consists of 181 subjects: 15 CN, 53 Prodromal, and 113 PD. 
The average ages of these groups range from 62.03 to 64.28 years, with standard deviations between 7.77 and 10.03 years.

\begin{table}[!h]
    \centering
    \caption{\small
    Demographics of the PPMI dataset.
    }\vspace{-5pt} 
    \renewcommand{\arraystretch}{1.0} 
    \renewcommand{\tabcolsep}{0.18cm}
    \scalebox{0.8}{
    \begin{tabular}{c||c|c|c}
        \Xhline{4\arrayrulewidth}
        \textbf{Category} & \textbf{Control} & \textbf{Prodromal} & \textbf{PD} \\
        \hline
        \# of subjects & 15 & 53 & 113 \\
        \hline
        Gender (M / F) & 12 / 3 & 30 / 23 & 77 / 36 \\
        \hline
        Age (Mean $\pm$ Std) & 62.03 $\pm$ 10.03 & 64.00 $\pm$ 9.54 & 64.28 $\pm$ 7.77 \\
        \Xhline{4\arrayrulewidth}
    \end{tabular}}
\label{tab:ppmi_dataset}
\end{table}

\section{Implementation Details}
\label{sec:implementation_details}

\noindent\textbf{Experimental Settings.}
Our method and all baselines were implemented using PyTorch and trained on a single NVIDIA RTX 6000 Ada Generation.
All experiments were conducted with a fixed random seed to ensure reproducibility.
We used Adam optimizer and trained each model for 1000 epochs.
Due to data sparsity, we employed a 5-fold cross-validation strategy with fixed hyperparameters to evaluate the model's performance.
A grid search was conducted over the following common hyperparameter ranges:
the number of hidden units in \{2, 4, 8, 16, 32, 64\}, dropout rates in \{0.2, 0.5, 0.8\}, learning rates in $\{10^{-1}, 10^{-2}, 10^{-3}, 10^{-4}\}$, weight decay in \{5e-2, 5e-3, 5e-4\}.
The batch size was set to include all samples in each dataset.
We set up and trained each baseline to achieve feasible outcomes for fair comparisons.
For this, we set the number of hidden units to 16 for the ADNI dataset and 8 for the PPMI dataset across all baselines, and standardized key hyperparameters, with a dropout rate of 0.5, a learning rate of 1e-3, and a weight decay of 5e-4.

\noindent\textbf{Our Details.}
For multi-scale analysis from \ourmodel, we set two wavelet kernels: one is a low-pass filter $g(s)$$=$$e^{-sx}$ that captures node signals in the low-frequency and the other $g(s)$ based on \citep{sgwt} is a band-pass filter which is 0 at the origin.
The combined effect of low-pass and band-pass filters is obtained by applying a low-pass kernel at one scale and band-pass filters at the remaining $J-1$ scales.
This approach enables selective extraction of spectral components across multiple resolutions.
Here, the number of scales $J$ was empirically determined by searching over \{1, 2, 3, 4, 5, 6, 7, 8\}, and $J$=3 was finally selected for the ADNI dataset, while $J$=2 was chosen for the PPMI dataset, as they respectively yielded the best performance.
The initial scale for the low-pass filter was set to 2, and the $J-1$ scales for the band-pass filter were initialized within a range (0,2].
These scale values are treated as learnable parameters and are adaptively updated during training to appropriate positive values.
The number of hyperedges $M$ was empirically determined by finding over \{4, 8, 12, 16, 20, 24, 28, 32\}, and $M$=16 for ADNI and $M$=12 for PPMI were selected to make the best performance.

We stacked $Z$=2 hypergraph convolution layers with rectified linear unit (ReLU) as the activation function $\sigma$ and $P$=1 multi-scale attention layer with softmax function at the output to predict the graph class.
Unlike previous works that assign learnable weights to hyperedges via $W_e$, we fix $W_e$=1 and instead focus on learning the incidence matrix $H$ directly.
This design enables more flexible and fine-grained modeling of node–hyperedge relationships, as it allows the model to selectively control hyperedge connectivity patterns rather than merely adjusting their scalar strengths.
Also, we set the loss weight $\alpha$=1 to ensure a balanced contribution between the cross-entropy and the scale-based regularization in the objective function.

\begin{table*}[!t]
    \caption{\small
        AD classification performance (CN/SMC/EMCI/LMCI/AD) on the ADNI dataset under various scenarios.
        % The best results are shown in \textbf{bold}, and the second-best results are \underline{underlined}.
        The best results are shown in \textbf{bold}, and the second-best results are \underline{underlined}.
    ($\dag$: Methods for brain network analysis)
    }
    \centering
    \renewcommand{\arraystretch}{1.0}
    \renewcommand{\tabcolsep}{0.24cm}
    \scalebox{0.75}{
        \begin{tabular}{c|l||cccc|cccc}
        \Xhline{4\arrayrulewidth}
        
        \multirow{2}{*}{Category} & 
        \multirow{2}{*}{Method} & 
        \multicolumn{4}{c|}{CT+AMY+FDG} & 
        \multicolumn{4}{c}{CT+AMY+TAU}\\ \cline{3-10}
        
        && 
        Acc $\uparrow$ & Pre $\uparrow$ & Rec $\uparrow$ & F1s $\uparrow$ & 
        Acc $\uparrow$ & Pre $\uparrow$ & Rec $\uparrow$ & F1s $\uparrow$ \\
        \hline

        \multirow{9}{*}{\shortstack{Graph\\based\\method}} &
        GCN & 
        $84.9 \pm 1.7$ & $85.3 \pm 7.6 $ & $80.3 \pm 7.1 $ & $81.9 \pm 7.5 $ &
        $85.5 \pm 1.6$ & $87.3 \pm 6.3 $ & $81.7 \pm 3.9 $ & $83.6 \pm 4.5 $ \\

        &GAT&
        $86.8 \pm 2.3$ & $89.4 \pm 1.4$ & $85.6 \pm 2.5$ & $86.9 \pm 2.1$ &
        \underline{$88.3 \pm 1.8$} & \underline{$91.3 \pm 1.8$} & $84.0 \pm 2.3$ & $86.4 \pm 1.9$ \\

        &GCNII  &
        $78.8 \pm 4.0$ & $86.9 \pm 2.8$ & $77.1 \pm 6.3$ & $79.9 \pm 5.4 $ &
        $81.8 \pm 4.3$ & $87.1 \pm 2.5$ & $80.1 \pm 7.4$ & $85.2 \pm 5.3 $ \\
    
        &IBGNN$^\dag$ & 
        $75.4 \pm 5.1$ & $81.2 \pm 3.5$ & $77.6 \pm 6.1$ & $78.3 \pm 5.4$ &
        $74.2 \pm 4.4$ & $78.7 \pm 4.3$ & $74.0 \pm 3.5$ & $75.2 \pm 1.9$ \\
    
        &BrainGB$^\dag$ & 
        $78.8 \pm 2.2$ & $84.0 \pm 3.2$ & $77.7 \pm 3.2$ & $79.5 \pm 2.7$ &
        $80.5 \pm 2.5$ & $86.1 \pm 2.9$ & $77.3 \pm 3.5$ & $80.2 \pm 2.9$ \\
    
        &SGCN$^\dag$ &
        $81.2 \pm 1.8$ & $85.2 \pm 3.2$ & $81.8 \pm 2.3$ & $82.8 \pm 1.9$ &
        $82.2 \pm 1.1$ & $86.5 \pm 3.0$ & $83.7 \pm 2.1$ & $84.3 \pm 1.8$\\
    
        &BrainNetTF$^\dag$ &
        \underline{$89.2 \pm 1.8$} & \underline{$91.4 \pm 2.1$} & $86.6 \pm 7.1$ & $87.8 \pm 5.5$ &
        $86.8 \pm 2.0$ & $90.1 \pm 1.7$ & $84.5 \pm 6.1$ & $86.2 \pm 4.8$\\ 

        &ALTER$^\dag$ &
        $89.1 \pm 2.1$ & $87.6 \pm 8.1$ & $85.6 \pm 8.9$ & $86.0 \pm 9.4$ &
        $87.5 \pm 2.3$ & $90.6 \pm 1.4$ & $85.1 \pm 5.9$ & \underline{$86.7 \pm 4.3$} \\ 

        &BioBGT$^\dag$ &
        $79.2 \pm 2.3$ & 
        $78.5 \pm 3.2$& 
        $77.3 \pm 5.8$& 
        $77.1 \pm 4.3$&
        $77.8 \pm 3.1$& 
        $76.9 \pm 4.4$& 
        $76.3 \pm 2.1$& 
        $76.7 \pm 4.2$\\

        \hline
    
        \multirow{7}{*}{\shortstack{Hypergraph\\based\\model}} &
        HGNN &
        % $66.0 \pm 3.3$ & $70.2 \pm 9.1$ & $62.6 \pm 7.4$ & $64.4 \pm 7.8$ &
        % $66.2 \pm 2.5$ & $67.8 \pm 1.5$ & $56.1 \pm 6.3$ & $59.2 \pm 8.0$\\
        % $ \pm $ & $ \pm $ & $ \pm $ & $ \pm $ &
        % $ \pm $ & $ \pm $ & $ \pm $ & $ \pm $\\
        $84.3 \pm 2.5$ & $88.3 \pm 1.9$ & $83.3 \pm 4.9$ & $84.9 \pm 3.5$ &
        $80.2 \pm 1.1$ & $82.1 \pm 7.2$ & $76.0 \pm 6.7$ & $78.1 \pm 6.6$\\

        &HNHN &
        $86.8 \pm 1.6$ & $89.2 \pm 1.7$ & $85.9 \pm 5.0$ & $86.8 \pm 4.0$ &
        $84.3 \pm 2.0$ & $86.9 \pm 1.8$ & $84.8 \pm 4.9$ & $85.4 \pm 3.7$\\

        &UniGCNII &
        $88.6 \pm 1.9$ & $89.4 \pm 2.0$ & $86.6 \pm 4.7$ & $77.5 \pm 3.7$ &
        \underline{$88.3 \pm 2.7$} & $87.2 \pm 4.1$ & \underline{$86.3 \pm 4.9$} & $86.5 \pm 4.4$\\

        &dwHGCN$^\dag$ &
        % $74.5 \pm 2.1$ & $77.8 \pm 7.3$ & $68.3 \pm 5.0$ & $71.3 \pm 5.5$ &
        % $73.1 \pm 3.1$ & $73.9 \pm 9.5$ & $67.2 \pm 7.3$ & $69.5 \pm 7.9$ \\
        $88.8 \pm 2.7$ & \underline{$91.4 \pm 2.7$} & \underline{$86.9 \pm 3.7$} & \underline{$88.5 \pm 3.1$} &
        $85.5 \pm 1.5$ & $88.6 \pm 2.3$ & $85.6 \pm 4.4$ & $86.5 \pm 2.9$ \\
    
        &HyperGT &
        % $67.7 \pm 4.2$ & $62.1 \pm 1.4$ & $49.9 \pm 4.7$ & $51.4 \pm 5.4$ &
        % $64.3 \pm 5.0$ & $61.8 \pm 1.2$ & $45.8 \pm 5.7$ & $47.7 \pm 6.3$\\
        $78.3 \pm 2.9$ & $82.2 \pm 9.2$ & $69.4 \pm 7.4$ & $73.4 \pm 7.3$ &
        $76.6 \pm 3.5$ & $74.2 \pm 9.5$ & $63.0 \pm 5.2$ & $65.8 \pm 5.7$\\

        &HyBRiD$^\dag$ &
        $86.5 \pm 1.5$ & 
        $84.7 \pm 3.8$ & 
        $86.8 \pm 4.6$ & 
        $85.2 \pm 3.8$ &
        $79.2 \pm 4.2$ & 
        $80.0 \pm 6.3$ & 
        $79.8 \pm 5.9$ & 
        $79.4 \pm 6.0$ \\

        &
        \cellcolor{mygray}\ourmodel{} &
        \cellcolor{mygray}\textbf{91.0 $\pm$ 2.2} & 
        \cellcolor{mygray}\textbf{92.9 $\pm$ 1.9} & 
        \cellcolor{mygray}\textbf{90.1 $\pm$ 3.3} & 
        \cellcolor{mygray}\textbf{91.0 $\pm$ 3.3} &  
        \cellcolor{mygray}\textbf{88.9 $\pm$ 1.9} & 
        \cellcolor{mygray}\textbf{92.9 $\pm$ 2.0} & 
        \cellcolor{mygray}\textbf{87.4 $\pm$ 5.0} & 
        \cellcolor{mygray}\textbf{89.4 $\pm$ 3.9} \\
    
        \Xhline{4\arrayrulewidth}
    
        \multirow{2}{*}{Category} & 
        \multirow{2}{*}{Method} & 
        \multicolumn{4}{c|}{CT+FDG+TAU} & 
        \multicolumn{4}{c}{AMY+FDG+TAU} \\ \cline{3-10}
    
        & &
        Acc $\uparrow$ & Pre $\uparrow$ & Rec $\uparrow$ & F1s $\uparrow$ & 
        Acc $\uparrow$ & Pre $\uparrow$ & Rec $\uparrow$ & F1s $\uparrow$ \\
        \hline

        \multirow{9}{*}{\shortstack{Graph\\based\\method}} &
        GCN & 
        $83.5 \pm 2.7$ & $87.0 \pm 4.7$ & $81.0 \pm 4.8$ & $82.5 \pm 3.7$ &
        $86.6 \pm 2.7$ & $85.9 \pm 3.8$ & $83.2 \pm 5.8$ & $83.4 \pm 4.3$ \\
        
        &GAT &
        $87.4 \pm 3.0$ & \underline{$91.5 \pm 1.8$} & $86.2 \pm 5.1$ & $88.1 \pm 3.9$ & 
        $88.9 \pm 3.2$ & \underline{$91.3 \pm 3.5$} & $86.6 \pm 4.7$ & $88.0 \pm 3.8$ \\
    
        &GCNII &
        $81.1 \pm 4.8$ & $85.2 \pm 7.6$ & $77.0 \pm 7.8$ & $79.3 \pm 7.5$ &
        $84.2 \pm 4.4$ & $85.3 \pm 7.6$ & $81.2 \pm 9.3$ & $82.3 \pm 8.2$ \\
    
        &IBGNN$^\dag$ & 
        $82.9 \pm 2.5$ & $83.8 \pm 3.5$ & $79.1 \pm 2.6$ & $80.5 \pm 2.7$ &
        $81.7 \pm 2.5$ & $84.6 \pm 5.2$ & $80.4 \pm 3.7$ & $81.2 \pm 3.6$ \\
    
        &BrainGB$^\dag$ &
        $81.8 \pm 4.0$ & $82.5 \pm 7.9$ & $81.1 \pm 6.6$ & $80.3 \pm 6.7$ &
        $82.8 \pm 2.5$ & $86.3 \pm 4.0$ & $82.7 \pm 5.4$ & $83.6 \pm 4.1$ \\
    
        &SGCN$^\dag$ &
        $81.7 \pm 2.1$ & $87.9 \pm 2.4$ & $78.2 \pm 4.7$ & $81.3 \pm 4.0$ &
        $84.2 \pm 2.1$ & $89.2 \pm 0.8$ & $80.7 \pm 4.8$ & $83.5 \pm 3.7$ \\
    
        &BrainNetTF$^\dag$ &
        $88.6 \pm 3.5$ & $87.2 \pm 6.3$ & $86.2 \pm 7.0$ & $85.9 \pm 6.2$ &
        $90.0 \pm 2.6$ & $90.0 \pm 6.6$ & $89.9 \pm 5.2$ & $88.9 \pm 6.2$ \\ 

        &ALTER$^\dag$ &
        $88.9 \pm 3.5$ & $89.4 \pm 4.6$ & \underline{$88.0 \pm 5.6$} & $88.0 \pm 5.0$ &
        \underline{$90.9 \pm 2.8$} & $90.9 \pm 4.6$ & \underline{$90.1 \pm 5.2$} & \underline{$89.9 \pm 4.4$} \\ 

        &BioBGT$^\dag$ &
        $81.5 \pm 4.3$ & 
        $84.6 \pm 3.6$ & 
        $82.1 \pm 6.1$ & 
        $82.9 \pm 3.7$ &
        $83.2 \pm 2.9$ & 
        $85.6 \pm 7.3$ & 
        $82.9 \pm 4.6$ & 
        $83.0 \pm 7.1$ \\

        \hline
        
        \multirow{7}{*}{\shortstack{Hypergraph\\based\\model}} &
        HGNN &
        % $64.6 \pm 2.5$ & $67.7 \pm 8.3$ & $57.8 \pm 4.5$ & $60.3 \pm 5.1$ &
        % $65.7 \pm 2.0$ & $66.2 \pm 1.4$ & $59.0 \pm 8.8$ & $61.1 \pm 9.6$ \\
        $83.7 \pm 3.4$ & $87.0 \pm 2.6$ & $80.0 \pm 5.8$ & $82.1 \pm 4.1$ &
        $85.1 \pm 1.3$ & $88.0 \pm 1.9$ & $84.5 \pm 3.4$ & $85.7 \pm 2.5$\\

        &HNHN &
        $87.6 \pm 2.0$ & $90.2 \pm 2.0$ & $85.1 \pm 5.7$ & $86.5 \pm 4.3$ &
        $86.9 \pm 1.7$ & $87.6 \pm 2.7$ & $86.5 \pm 4.2$ & $86.4 \pm 3.0$\\

        &UniGCNII &
        \underline{$89.4 \pm 2.1$} & $91.3 \pm 1.6$ & $87.1 \pm 3.7$ & \underline{$88.6 \pm 2.5$} &
        $88.3 \pm 2.0$ & $89.4 \pm 2.6$ & $86.3 \pm 4.9$ & $77.3 \pm 4.1$\\

        &dwHGCN$^\dag$ &
        % $72.6 \pm 2.1$ & $80.1 \pm 2.1$ & $67.9 \pm 3.3$ & $71.8 \pm 3.0$ &
        % $73.7 \pm 1.0$ & $81.7 \pm 1.3$ & $72.8 \pm 6.0$ & $75.7 \pm 4.2$ \\
        $88.5 \pm 3.0$ & $90.6 \pm 2.8$ & $86.7 \pm 4.3$ & $88.1 \pm 3.3$ &
        $88.5 \pm 2.1$ & \underline{$91.3 \pm 1.9$} & $86.4 \pm 4.8$ & $88.2 \pm 3.5$ \\
    
        &HyperGT &
        % $65.5 \pm 3.1$ & $55.7 \pm 7.3$ & $47.5 \pm 5.2$ & $47.3 \pm 5.4$ &
        % $67.2 \pm 4.0$ & $58.1 \pm 8.9$ & $49.5 \pm 5.8$ & $50.4 \pm 6.7$ \\
        $82.5 \pm 2.5$ & $81.8 \pm 9.8$ & $70.5 \pm 6.2$ & $74.1 \pm 7.4$ &
        $81.2 \pm 2.1$ & $85.7 \pm 7.1$ & $69.8 \pm 2.4$ & $74.6 \pm 3.6$\\

        &HyBRiD$^\dag$ &
        $85.5 \pm 5.4$ & 
        $87.3 \pm 5.4$ & 
        $85.2 \pm 6.9$ & 
        $85.5 \pm 5.9$ &
        $83.8 \pm 4.4$ & 
        $83.5 \pm 6.7$ & 
        $81.3 \pm 6.1$ & 
        $82.1 \pm 6.4$ \\ 

        &
        \cellcolor{mygray}\ourmodel{} &
        \cellcolor{mygray}\textbf{90.0 $\pm$ 2.1} & 
        \cellcolor{mygray}\textbf{93.4 $\pm$ 1.4} & 
        \cellcolor{mygray}\textbf{89.5 $\pm$ 6.4} & 
        \cellcolor{mygray}\textbf{89.2 $\pm$ 4.5} & 
        \cellcolor{mygray}\textbf{91.3 $\pm$ 1.9} & 
        \cellcolor{mygray}\textbf{93.2 $\pm$ 0.9} & 
        \cellcolor{mygray}\textbf{90.4 $\pm$ 4.1} & 
        \cellcolor{mygray}\textbf{90.8 $\pm$ 3.0} \\
        
        \Xhline{4\arrayrulewidth}
        \end{tabular}}
\label{tab:performance_tri}
\end{table*}

\noindent\textbf{Baseline-specific Details.}
To ensure both fairness and reproducibility, we utilized the official open-source implementation for all baseline methods.
In addition to the shared grid search, we performed further tuning of key hyperparameters specific to each model to better adapt them to our domain-specific datasets.
\begin{itemize}
    \item \textbf{GAT}~\citep{gat}: We tuned the number of transformer head \{2, 4, 8\}.
    \item \textbf{BrainGNN}~\citep{braingnn}: We adjusted the number of ROI communities $\{2, 4, 6, 8\}$ and the pooling ratio for R-pool layer $\{0.1, 0.2, 0.3, 0.4, 0.5\}$.
    \item \textbf{BrainGB}~\citep{braingb}: We selected the number of buckets $T\in\{5,10,15,20\}$ to cluster region connections with similar strengths.
    \item \textbf{SGCN}~\citep{sgcn}: We tuned hyper-parameters $\lambda_1\in\{1, 0.1, 0.01\}$, $\lambda_2\in\{1, 0.1, 0.01\}$, and $\lambda_3\in\{1, 0.1, 0.01\}$ as penalty coefficients for different regularization.
    \item \textbf{BrainNetTF}~\citep{brainnettf}: We tuned the number of transformer heads \{2, 4, 8\} and the number of clustering centers $\{2, 3, 4, 5, 10, 50, 100\}$ in the OCREAD readout.
    \item \textbf{ALTER}~\citep{alter}: We tuned the hop counts $k$ ranging from 2 to 16.
    \item \textbf{BioBGT}~\citep{biobgt}:  We tuned the number of transformer heads \{2, 4, 8\}.
    \item \textbf{HyperDrop}~\citep{hyperdrop}: We set the hidden dimension of edges as 16, and randomly search for the edge drop ratio by increasing the drop ratio from 20\% to 80\%.
    \item \textbf{HyperGT}~\citep{hypergt}: We tuned the number of transformer heads \{2, 4, 8\}, and coefficient of balancing between classification and hyperedge structure losses \{0.1, 0.3, 0.5, 0.7, 0.9\}.
    \item \textbf{HyBRiD}~\citep{hybrid}: We mainly tuned the key hyper-parameter such as the number of hyperedges $K\in\{4, 8, 12, 16, 20, 24, 28, 32\}$.
    \item \textbf{DHHNN}~\citep{dhhnn}: We adjusted the hyperbolic curvature $c\in\{0.01, 0.1, 1\}$ and the hyperedge construction parameter $k\in\{8, 16, 32, 64, 128\}$.
\end{itemize}
For each baseline, the best configuration was selected based on validation performance.
For hypergraph methods that are not inherently designed for brain networks (e.g., HGNN, HNHN, and HyperGT), the incidence matrix is not provided.
Accordingly, the brain network hypergraph was constructed following the approach described in the HGNN paper and subsequently used for our experiments.
In contrast, our method naturally learns the hypergraph structure in a data-driven manner, without relying on predefined construction rules.
% To facilitate reproducibility, we will release the full source code and detailed hyperparameter configuration upon acceptance.

\begin{figure*}[t!]
    \centering
    \includegraphics[width=0.65\linewidth]{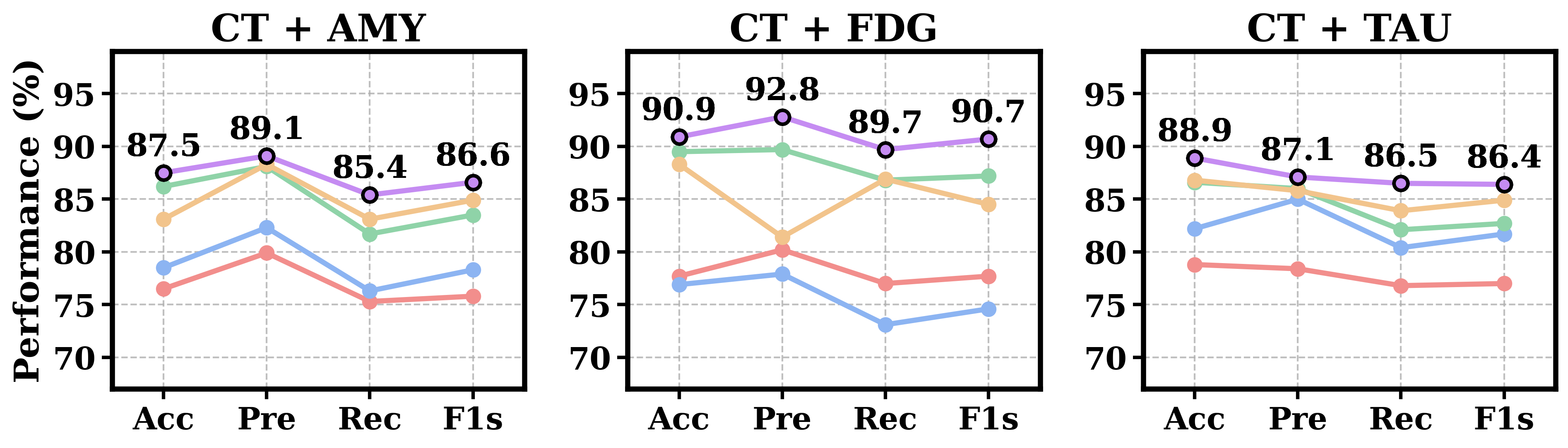}
    \includegraphics[width=0.65\linewidth]{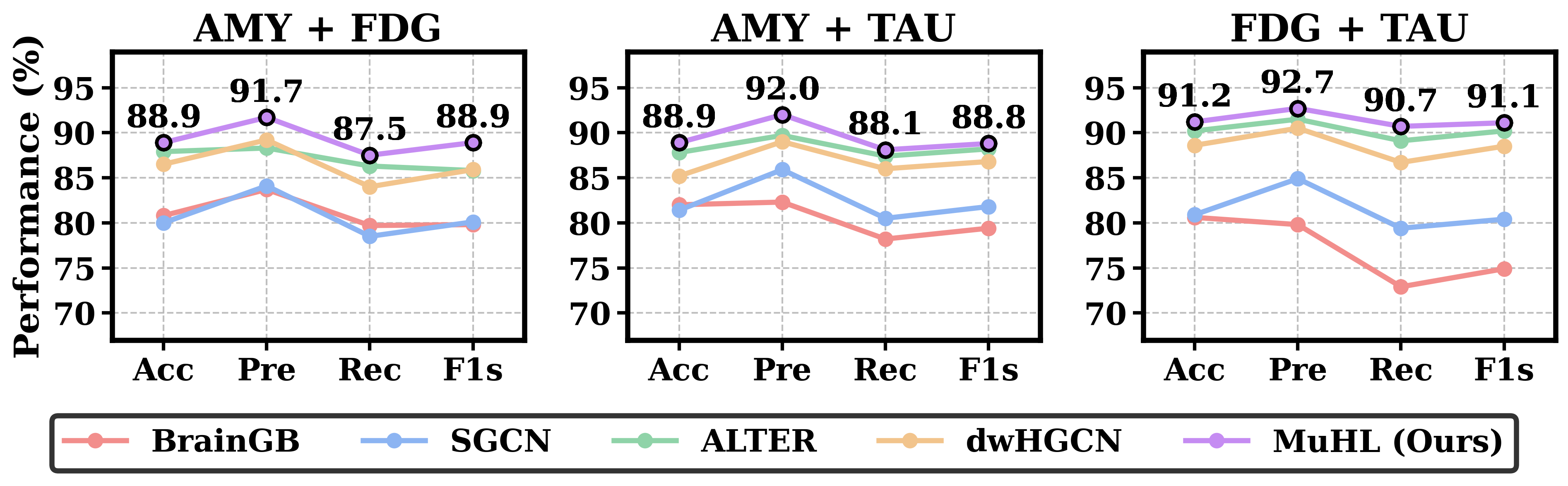}
    \caption{\small
        AD classification performance (CN/SMC/EMCI/LMCI/AD) on the ADNI dataset under various scenarios, especially for the combination of two modalities.
        The results compare our proposed \ourmodel{} with representative brain network analysis–based methods, highlighting the superior performance of \ourmodel.
    } 
    \label{fig:perforamcne_bi}
\end{figure*}

\begin{table}[!t]
\caption{\small
        Left: Demographics of the ABIDE dataset.
        Right: Classification performance across hypergraph-based methods.
    }
\centering
    \begin{minipage}{0.45\linewidth}
    \centering
    \renewcommand{\arraystretch}{1.0} 
    \renewcommand{\tabcolsep}{0.2cm}
    \scalebox{0.75}{
    \begin{tabular}{c||c|c}
        \Xhline{4\arrayrulewidth}
        \textbf{Category} & \textbf{Control} & \textbf{Autism} \\
        \hline
        \# of subjects & 534 & 455 \\
        \hline
        Gender (M / F) & 441 / 93 & 399 / 56 \\
        \hline
        Age (Mean $\pm$ Std) & 16.89 $\pm$ 7.40 & 16.29 $\pm$ 7.58 \\
        \Xhline{4\arrayrulewidth}
    \end{tabular}}
    \end{minipage}
    \begin{minipage}{0.5\linewidth}
    \centering
    \renewcommand{\arraystretch}{1.0}
    \renewcommand{\tabcolsep}{0.2cm}
    \scalebox{0.75}{
    \begin{tabular}{l|c|c|c|c}
    \Xhline{4\arrayrulewidth}
    \multirow{2}{*}{Method} & \multicolumn{4}{c}{ABIDE} \\
    \cline{2-5}
    &Acc $\uparrow$ & Pre $\uparrow$ & Rec $\uparrow$ & F1s $\uparrow$ \\
    \hline
    HGNN & 
    56.5 $\pm$ 2.4& 
    56.5 $\pm$ 2.1& 
    56.4 $\pm$ 2.1& 
    56.2 $\pm$ 2.3\\
    
    HNHN &
    58.7 $\pm$ 0.8& 
    58.5 $\pm$ 1.0& 
    58.5 $\pm$ 1.1& 
    58.4 $\pm$ 1.1\\
    
    UniGCNII &
    60.0 $\pm$ 1.7& 
    60.3 $\pm$ 2.2& 
    59.0 $\pm$ 1.8& 
    58.2 $\pm$ 2.6\\

    HyperDrop &
    52.1 $\pm$ 4.1& 
    49.0 $\pm$ 6.7& 
    47.7 $\pm$ 8.3& 
    46.9 $\pm$ 6.8\\
    
    dwHGCN &
    55.2 $\pm$ 1.8& 
    59.7 $\pm$ 9.4& 
    53.1 $\pm$ 2.4& 
    48.2 $\pm$ 7.0\\
    
    HyperGT &
    56.5 $\pm$ 1.1& 
    60.8 $\pm$ 5.5& 
    54.8 $\pm$ 2.5& 
    51.1 $\pm$ 7.2\\

    HyBRiD &
    57.3 $\pm$ 2.6& 
    60.1 $\pm$ 4.2& 
    55.2 $\pm$ 3.1& 
    57.5 $\pm$ 4.5\\

    DHHNN &
    55.2 $\pm$ 3.5& 
    58.8 $\pm$ 6.2& 
    53.1 $\pm$ 4.6& 
    54.6 $\pm$ 4.1\\

    \rowcolor{mygray}
    \ourmodel{} &
    \textbf{63.7 $\pm$ 3.6}& 
    \textbf{63.5 $\pm$ 3.7}& 
    \textbf{63.3 $\pm$ 3.7}& 
    \textbf{63.3 $\pm$ 3.7}\\ 
    \Xhline{4\arrayrulewidth}
    \end{tabular}}
    \end{minipage}
\label{tab:abide}
\end{table}

\section{Additional Quantitative Results}
\label{sec:additional_quantitative_results}

\noindent\textbf{Effectiveness under Reduced-Modality and Dual-Modality Settings.}
To further assess effectiveness in multi-modal analysis and evaluate the contribution of each imaging biomarker in the ADNI dataset, we conducted additional experiments by excluding one modality at a time from the complete combination of cortical thickness (CT), $\beta$-amyloid protein (AMY), Fluorodeoxyglucose (FDG), and tau protein (TAU). 
In Table \ref{tab:performance_tri}, we compared performance between our method and some representative baselines.
Across all four reduced-modality settings, \ourmodel{} consistently outperformed existing baselines in terms of classification performance. 
Despite the absence of one modality, our proposed method achieved the highest accuracy, precision, recall, and F1 score across every scenario, demonstrating its robustness under incomplete input conditions.

Additionally, we designed dual-modality fusion experiments on ADNI, covering six modality pairs, and the results are reported in Fig.~\ref{fig:perforamcne_bi}.
Across all combinations, \ourmodel{} consistently achieves the best performance on all evaluations when compared with representative brain-network baselines, i.e., BrainGB, SGCN, ALTER and dwHGCN. 
The gains are notable in Recall and F1-score, reflecting fewer false negatives while maintaining precision, 
which is particularly important for clinical screening. 
These improvements are attributed to multi-scale wavelet filtering that balances local–global cues and its data-driven hyperedge construction that models higher-order ROI interactions across complementary biomarkers. 
Importantly, the performance advantages are maintained even for biologically synergistic modality pairs (e.g., CT+FDG and AMY+TAU), highlighting that the benefits stem from the generality of the learned hypergraph structure rather than modality-specific effects.

% Among the tested settings, the exclusion of FDG led to the most noticeable performance degradation, indicating that metabolic activity plays a crucial role in identifying AD-related changes. 
% In contrast, the model achieved its highest performance when amyloid was excluded, showing that cortical thickness, FDG, and tau together provide sufficiently strong signals. Interestingly, the removal of tau resulted in the largest performance margin over the second-best method, with \ourmodel{} achieving an accuracy improvement of 0.8\%p and an F1 score gain of 3.2\%p. 
% These findings indicate that although each biomarker contributes uniquely to the task, the model is particularly effective at adapting to the absence of individual modalities through its multi-resolution hypergraph learning mechanism.

The ability of \ourmodel{} to retain high classification performance under varying modality combinations highlights its practicality in real-world clinical settings. 
In many clinical environments, it is often difficult to obtain all imaging biomarkers for every patient due to constraints such as scan time, cost, or modality-specific availability.
In such scenarios, relying on a model that performs well even with partial input becomes essential for broader deployment and equitable patient care.
\ourmodel{} addresses this challenge by leveraging its multi-resolution design and data-driven hypergraph construction to extract meaningful patterns from whatever biomarker data are available. 
Rather than relying on rigid or predefined connectivity assumptions, the model adaptively learns higher-order relationships that are robust to missing modalities. 
This flexibility not only enhances its diagnostic utility but also aligns well with the variability and imperfection inherent in real-world medical data. 
Consequently, \ourmodel{} provides a generalizable framework for clinical decision support in neurodegenerative disease analysis, especially where comprehensive imaging protocols are infeasible.

\noindent\textbf{Robustness for Heterogeneous Disease.}
To assess cross-domain robustness beyond neurodegenerative disorders such as AD and PD, 
we additionally evaluated our framework on an autism cohort. 
Autism Brain Imaging Data Exchange (ABIDE) \citep{abide} is used for our additional experiments, and the demographics is written in the left of Table~\ref{tab:abide}.
The ABIDE initiative aggregated functional brain imaging data collected from laboratories around the world to support the research on Autism Spectrum Disorder (ASD),
which has stereotyped behaviors such as irritability, hyperactivity, depression, and anxiety.
As in the right of Table~\ref{tab:abide}, the classification results across hypergraph-based baselines.
Our model achieves the best performance on every metric, and UniGCNII records the second-best accuracy and recall.
These results indicate that \ourmodel{} generalizes across heterogeneous brain disorders, spanning neurodegenerative (i.e., AD/PD) and neurodevelopmental (ASD) conditions, and provides meaningful cross-disease representations.

\begin{figure*}[!t]
\centering
\includegraphics[width=1.0\linewidth]{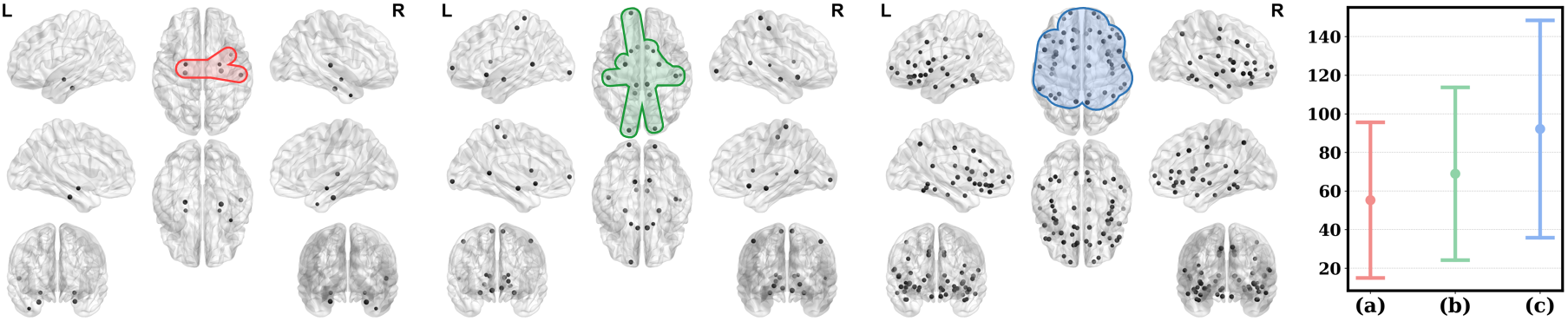}
    \renewcommand{\arraystretch}{1.0}
    \renewcommand{\tabcolsep}{0.8cm}
    \scalebox{0.95}{
    \begin{tabular}{cccc}
    %   \includegraphics[width=0.24\linewidth]{FIG/scale_fine.png}
    % & \includegraphics[width=0.24\linewidth]{FIG/scale_inter.png}
    % & \includegraphics[width=0.24\linewidth]{FIG/scale_coarse.png}
    % & \includegraphics[width=0.16\linewidth]{FIG/multiscale.png}\\
    \shifttext{-70pt}{(a) Fine Scale}  & 
    \shifttext{-20pt}{(b) Intermediate Scale} & 
    \shifttext{15pt}{(c) Coarse Scale} & 
    \shifttext{90pt}{(d)} \\
    \end{tabular}}
    \caption{\small
        {\bf (a)-(c):} 
        Visualization of a multi-resolution hypergraph on the ADNI dataset.
        At each scale, a representative hyperedge is shown, 
        where the scale controls the granularity of high-order relations by adaptively adjusting how many nodes are included in the hyperedge.
        {\bf (d):} 
        The mean and standard deviation of the number of nodes per hyperedge at each scale,
        showing that node counts increase as the scale becomes coarser.
    }
\label{fig:scalewise_hyperedges}
\end{figure*}

\begin{figure*}[!t]
\centering
\includegraphics[width=1.0\linewidth]{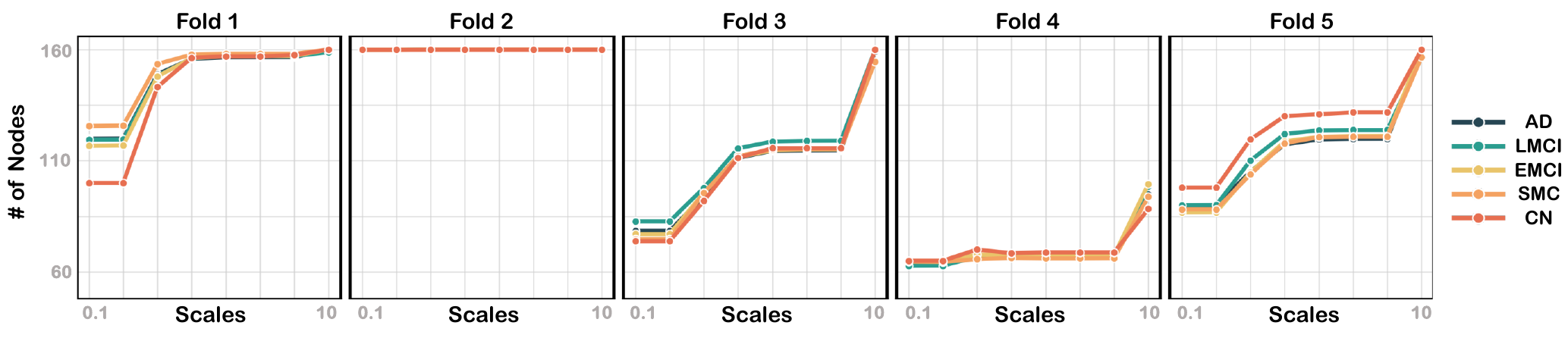}
    \caption{\small
        Class-wise salient hyperedge expansion across scales on the ADNI dataset. 
        For each fold and diagnostic class, the most activated hyperedge is selected, and the number of nodes assigned to it is counted at each scale. 
        The increasing node count from fine to coarse scales shows that MuHL progressively expands hyperedges from local ROI groups to broader high-order structures.
    }
\label{fig:prop2}
\end{figure*}

\section{Additional Discussions on the Learned Hypergraph}
\label{sec:add_discuss_hypergraph}
\noindent\textbf{Multi-Scale Hyperedge Generation.}
To clarify the effect of multi-scale optimization on hypergraph structure, 
as shown in Fig.~\ref{fig:scalewise_hyperedges},
a representative hyperedge is constructed at different scales on the ADNI dataset.
After training, each scale-wise hyperedge represents how the scope of connectivity and node participation in high-order structures change across scales.
To visualize of multi-resolution hyperedge construction in Fig.~\ref{fig:scalewise_hyperedges}, since each node–hyperedge pair is associated with a continuous assignment score, we apply a uniform threshold across all scales to discretize these scores into binary connections, which allows us to characterize connectivity.
As the scale increases from (a) to (c), the number of nodes assigned to the hyperedge grows, indicating that higher scales capture broader and more global interactions.

Also, (d) summarizes hyperedge size statistics for each scale by aggregating across all hyperedges and the five folds. We report the mean and standard deviation of the number of nodes per hyperedge, revealing a clear monotonic increase from the fine to the coarse scale. This upward shift indicates that larger scales capture broader group-wise interactions, consistent with the qualitative examples in from (a) to (c).
% Moreover, the node-specific weights, which are reflected by the color intensities, are adaptively learned, showing how the model assigns different levels of importance to individual nodes within the specific hyperedge.
Therefore, these results highlight the ability of \ourmodel{} to flexibly capture group-wise interactions across scales, 
enabling the construction of expressive and biologically meaningful high-order structures from brain networks.

\noindent\textbf{Class-wise Expansion of Salient Hyperedges across Scales.}
To further examine whether the learned hyperedges consistently expand across scales for each disease stage, we analyze the most salient hyperedge in a class-wise manner. 
For each fold and diagnostic class, we select the hyperedge with the largest aggregated node activation and measure the number of nodes assigned to that hyperedge as the scale increases. 
As shown in Fig.~\ref{fig:prop2}, the number of included nodes generally increases from fine to coarse scales across folds and classes. 
This indicates that fine-scale hyperedges capture localized ROI groups, whereas coarse-scale hyperedges progressively cover broader brain regions. 
The trend is consistently observed for CN, SMC, EMCI, LMCI, and AD, suggesting that MuHL learns scale-dependent high-order structures in a robust manner rather than producing class- or fold-specific artifacts. 
These results further support the dilation behavior of learned hyperedges described in Proposition 2.

\begin{figure*}[t!]
    \centering
    \includegraphics[width=0.99\linewidth]{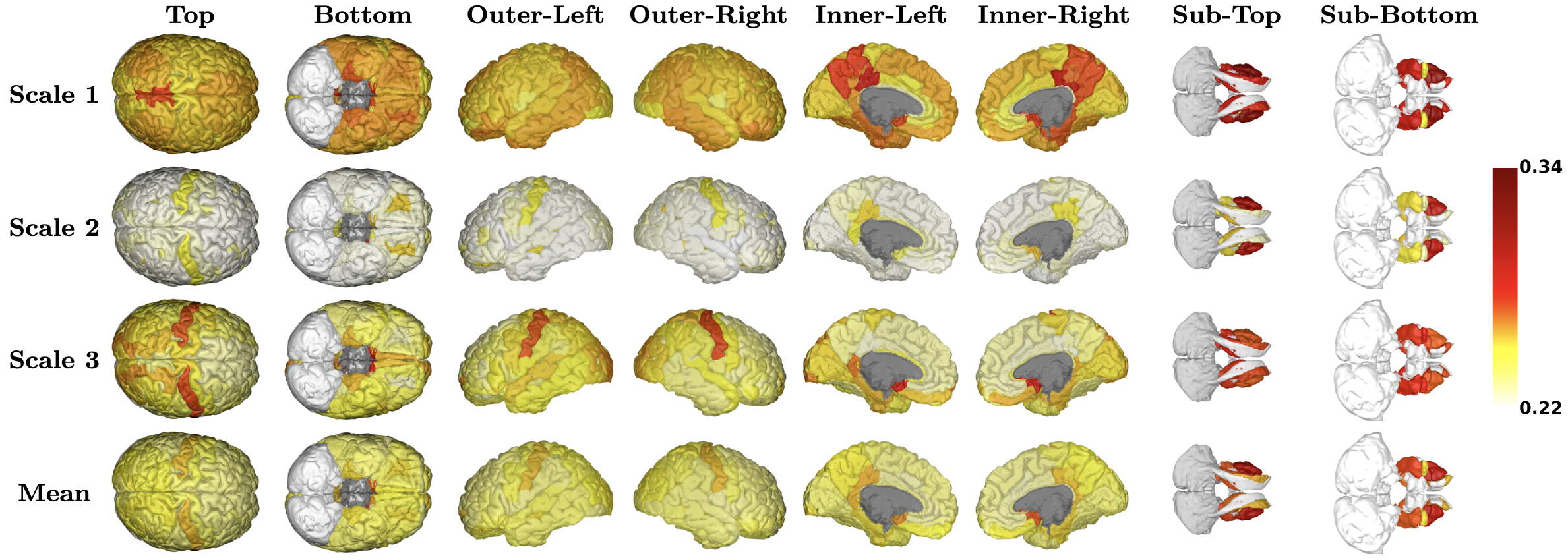}
    \caption{\small
        Visualization of cumulative hyperedge activations on cortical/subcortical regions from ADNI dataset using all biomarkers.
        The top three rows represent hyperedge activations at each scale, while the bottom row shows their mean.
    } 
    \label{fig:all_rois_aggregated_hyperedge}
\end{figure*}

\begin{figure}[!t]
\centering
    \begin{minipage}{0.47\linewidth}
      \centering
      \includegraphics[width=\linewidth]{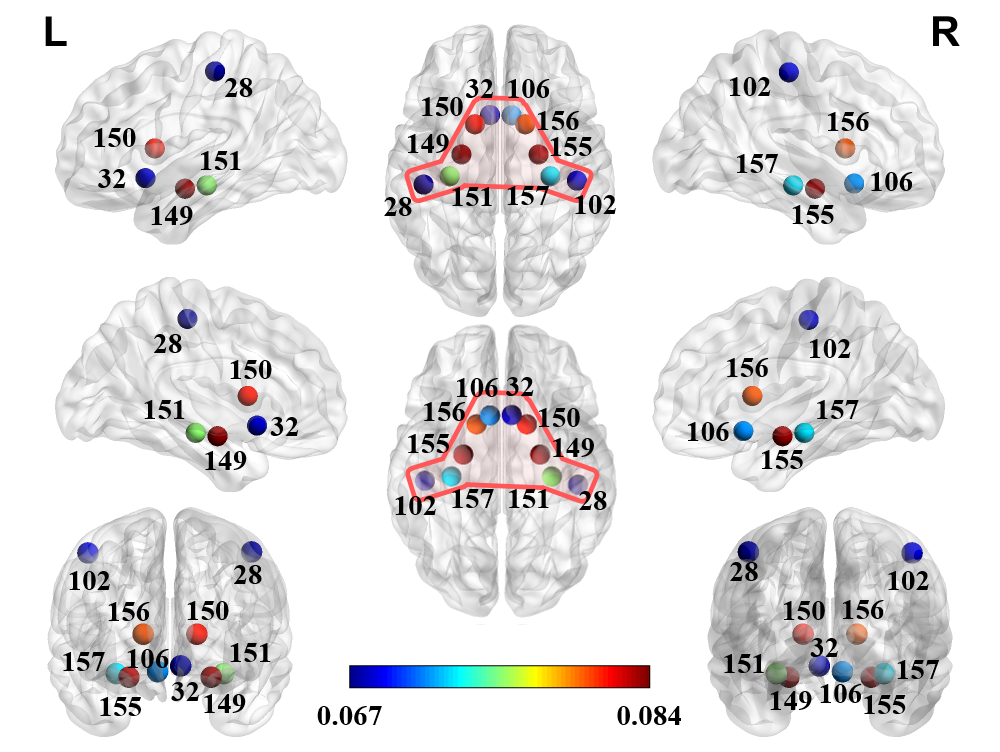}
    \end{minipage}%\hspace{40pt}
    \begin{minipage}{0.48\linewidth}
      \centering
      \includegraphics[width=\linewidth]{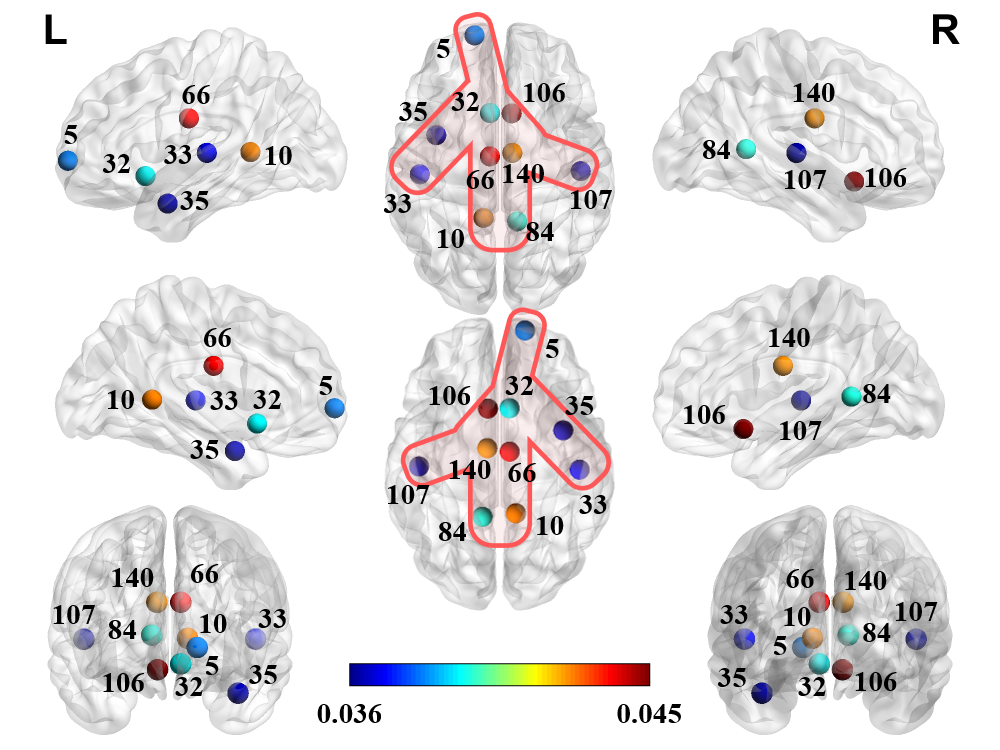}
    \end{minipage}
    
    \vspace{6pt}
    
    % ---- Bottom: Tables ----
    \begin{minipage}{0.48\linewidth}
    \centering
    \renewcommand{\arraystretch}{1.0}
    \setlength{\tabcolsep}{6.0pt}
    \scalebox{0.70}{
    \begin{tabular}{c|l|c|l}
    \Xhline{4\arrayrulewidth}
    \textbf{Idx} & \textbf{ROI} & \textbf{Idx} & \textbf{ROI} \\
    \hline
    28  & (L) G.Postcentral         & 102 & (R) G.Postcentral \\
    32  & (L) G.Subcallosal         & 106 & (R) G.Subcallosal \\
    149 & (L) Amygdala              & 155 & (R) Amygdala \\
    150 & (L) Caudate               & 156 & (R) Caudate \\
    151 & (L) Hippocampus           & 157 & (R) Hippocampus \\
    \Xhline{4\arrayrulewidth}
    \end{tabular}}
    \end{minipage}%\hfill
    \begin{minipage}{0.48\linewidth}
    \centering
    \renewcommand{\arraystretch}{1.0}
    \setlength{\tabcolsep}{3.0pt}
    \scalebox{0.70}{
    \begin{tabular}{c|l|c|l}
    \Xhline{4\arrayrulewidth}
    \textbf{Idx} & \textbf{ROI} & \textbf{Idx} & \textbf{ROI} \\
    \hline
    5  & (L) G\&S.Transv.Frontopol   & 35 & (L) G.Temp.Sup.Plan.Polar \\ 
    10 & (L) G.Cingul.Post.Ventral   & 84  & (R) G.Cingul.Post.Ventral \\
    32 & (L) G.Subcallosal           & 106 & (R) G.Subcallosal  \\
    33 & (L) G.Temp.Sup.G.T.Transv   & 107 & (R) G.Temp.Sup.G.T.Transv \\
    66  & (L) S.Pericallosal         & 140 & (R) S.Pericallosal \\
    \Xhline{4\arrayrulewidth}
    \end{tabular}}
    \end{minipage}
    \caption{\small
        Top: Visualization of the top-10 ROIs associated with additional important hyperedges in the ADNI studies, based on the sum of node activations for each hyperedge.
        Node color indicates activation value (i.e., weight) per node within the hyperedge.
        Bottom: Corresponding ROI labels.}
\label{fig:top10_most_important_hyperedge_casestudy}
\vspace{-8pt}
\end{figure}

\noindent\textbf{Key ROIs via Hyperedge Activations.}
Understanding the importance of ROIs in brain networks is essential for AD progression analysis, as these regions play a crucial role in characterizing disease-related alterations. 
The aggregated hyperedge activations (i.e., degree) for each ROI serve as an indicator of its network significance. As shown in Fig. \ref{fig:all_rois_aggregated_hyperedge}, ROIs with high cumulative weights act as network hubs, facilitating information flow and exhibiting greater sensitivity to pathological changes in AD. 
While the main paper highlights the top 10 ROIs, Fig. \ref{fig:all_rois_aggregated_hyperedge} presents the cumulative hyperedge activations of all ROIs across multiple perspectives, providing a comprehensive view of their importance. 
Identifying these key ROIs highlights disease-relevant regions most affected by neurodegeneration, enhancing both model interpretability and clinical relevance.

\noindent\textbf{Additional Case Studies for Key Hyperedges.}
To further examine the biological relevance of the learned hyperedges, we analyzed additional highly activated hyperedges obtained from \ourmodel{} trained on the ADNI dataset. 
The hyperedges with the strongest aggregated activation were selected for in-depth inspection. 
As shown in Fig.~\ref{fig:top10_most_important_hyperedge_casestudy}, the two representative hyperedges exhibit coherent spatial patterns: several ROIs repeatedly emerge across different hyperedges (e.g., hippocampus, amygdala, postcentral gyrus, and subcallosal regions), and many appear as symmetric left–right pairs, reflecting well-known bilateral organization in AD-related atrophy and functional disruption. 
These patterns also form anatomically plausible groupings, such as limbic–striatal circuits and cingulo-parietal pathways, 
which have been consistently implicated in episodic memory decline, emotional processing deficits, and large-scale network disintegration in AD.

In addition, the configuration of these hyperedges captures both localized clusters of ROIs and long-range inter-regional co-activations, indicating that \ourmodel{} successfully identifies higher-order group interactions beyond simple pairwise connectivity. 
Taken together, these case studies demonstrate that the most salient hyperedges discovered by \ourmodel{} are not only statistically stable across runs but also biologically meaningful, 
aligning with established AD-related neural circuits reported in prior neuropathological and neuroimaging studies.

\begin{table*}[!t]
    \caption{\small
        Statistical consistency of ROI importance across independent runs on the ADNI and PPMI datasets.
        The mean values of Kendall’s $\tau$, p-values, and Jaccard similarity for varying numbers of runs are reported.
    }
    \centering
    \renewcommand{\arraystretch}{1.0}
    \renewcommand{\tabcolsep}{0.135cm}
    \scalebox{0.8}{
        \begin{tabular}{c|c||c|c|c|c|c|c|c|c}
        \Xhline{4\arrayrulewidth}
        
        % \multirow{2}{*}{Type} & 
        \multicolumn{2}{c||}{Dataset} &
        \multicolumn{4}{c|}{ADNI} & 
        \multicolumn{4}{c}{PPMI}\\
        \hline

        \multicolumn{2}{c||}{Independent Runs} &
        5 & 10 & 20 & 30 &
        5 & 10 & 20 & 30 \\ 

        \hline

        \multirow{3}{*}{All ROIs} &
        Kendall's $\tau$ & 0.798 & 0.796 & 0.767 & 0.750 & 0.721 & 0.686 & 0.654 & 0.645 \\ \cline{2-10}
        &P-value & 1.13e-18 & 5.42e-16 &3.79e-11 & 2.99e-11 & 1.09e-14 &1.15e-10 & 1.41e-06 &1.07e-05 \\ \cline{2-10}
        & Kendall's $W$ & 0.871 & 0.866 & 0.838 & 0.821 & 0.834 & 0.762 & 0.734 & 0.713 \\
        \hline

        Top-20 ROIs & Jaccard Similarity & 0.893 & 0.881 & 0.835 & 0.817 & 0.824 & 0.770 & 0.746 & 0.735\\

        \Xhline{4\arrayrulewidth}
        \end{tabular}}
\label{tab:stat_rois}
\end{table*}

\noindent\textbf{Statistical Validation of ROI-level Stability.}
To evaluate whether the ROIs highlighted by \ourmodel{} reflect reproducible disease-related patterns rather than artifacts of random initializations, 
we performed a stability analysis across multiple independent runs with different random seeds, summarized in Table~\ref{tab:stat_rois}. 
For each run, the learned incidence matrix was aggregated into ROI-level importance scores, and concordance across runs was evaluated using three complementary metrics. 
First, \textit{Kendall’s $\tau$ coefficients} \cite{kendalltau} remained consistently high ($>0.6$) across ADNI and PPMI, with extremely small p-values ($<10^{-5}$). 
This indicates that the ordering of ROI importance is highly unlikely to arise by chance.
Second, \textit{Kendall’s $W$} \cite{kendallW}, which measures agreement across all runs jointly, exceeded 0.80 on ADNI and 0.70 on PPMI, reflecting stable consistency in overall importance rankings. 
Finally, \textit{Jaccard similarity} \cite{jaccard} of top-20 ROIs remained above 0.80 for ADNI and above 0.73 for PPMI, 
confirming that the most discriminative ROIs are robustly preserved across runs. 
These results show that the ROI patterns captured by \ourmodel{} are statistically stable and not attributable to random fluctuations, thereby reinforcing the interpretability and biological reliability of the learned hypergraph representations.

\begin{table}[!t]
\centering
    \caption{\small
        Regression comparison between brain network-based methods and \ourmodel{} to predict AD symptoms.
        The evaluation metrics between the predicted and true scores are CDR-SOB, ADAS11, ADAS13 and MMSE on the ADNI dataset.
    }
    \renewcommand{\arraystretch}{1.0}
    \renewcommand{\tabcolsep}{0.35cm}
    \scalebox{0.8}{
    \begin{tabular}{c|l||c|c|c|c}
    \Xhline{4\arrayrulewidth}
        \multirow{2}{*}{\shortstack{\textbf{Clinical}\\\textbf{Measures}}} & \multirow{2}{*}{\textbf{Methods}} & \multicolumn{4}{c}{\textbf{Metrics}} \\
        \cline{3-6}
        
        & & \textbf{P value $\downarrow$} &  \textbf{Correlation $\uparrow$} & \textbf{RMSE $\downarrow$} & \textbf{R Squared $\uparrow$}\\
        \hline
        
        \multirow{5}{*}{CDR-SOB} 
        & BrainGB & 4.66e-09 & 0.4857 & 0.7799 & 0.6254 \\
        & BrainNetTF & 1.22e-09 & 0.5014 & 0.7900 & 0.6717 \\
        & SGCN & 2.38e-08 & 0.4655 & 0.8000 & 0.7131 \\
        & ALTER & 7.86e-10 & 0.5064 & 0.7947 & 0.6299 \\
        & \ourmodel{}(Ours) & 2.76e-09 & 0.4920 & 0.7625 & 0.7204 \\
        \hline

        \multirow{5}{*}{ADAS11} 
        & BrainGB & 1.38e-17 & 0.6597 & 2.8198 & 0.7517 \\
        & BrainNetTF & 1.75e-20 & 0.7004 & 2.7274 & 0.4937 \\
        & SGCN & 2.60e-13 & 0.5853 & 3.0907 & 0.8470 \\
        & ALTER & 7.92e-16 & 0.6315 & 2.9718 & 0.4437 \\
        & \ourmodel{}(Ours) & 1.43e-21 & 0.7701 & 2.5205 & 0.8694 \\
        \hline
        
        \multirow{5}{*}{ADAS13} 
        & BrainGB & 4.99e-19 & 0.6808 & 4.5031 & 0.5418 \\
        & BrainNetTF & 7.29e-20 & 0.6923 & 4.3246 & 0.5197 \\
        & SGCN & 3.82e-15 & 0.6196 & 4.7065 & 0.5936 \\
        & ALTER & 7.53e-18 & 0.6637 & 4.5317 & 0.5219 \\
        & \ourmodel{}(Ours) & 5.61e-20 & 0.6340 & 4.2403 & 0.6219 \\
        \hline
        
        \multirow{5}{*}{MMSE} 
        & BrainGB & 4.00e-04 & 0.3059 & 1.5841 & 0.5296 \\
        & BrainNetTF & 6.70e-07 & 0.4196 & 1.5080 & 0.5572 \\
        & SGCN & 1.48e-03 & 0.3266 & 1.5809 & 0.6305 \\
        & ALTER & 2.85e-05 & 0.3582 & 1.5435 & 0.3272 \\
        & \ourmodel{}(Ours) & 8.27e-08 & 0.4492 & 1.4953 & 0.6265 \\
        
    \Xhline{4\arrayrulewidth}
    \end{tabular}
    }
\label{tab:clinical}
\end{table}

\section{Prediction ability for AD clinical Symptoms}
\label{sec:clinical}

% Table 4에 대한 설명 및 분석
% Lasso? sample 적고 parameter가 많아서 underdetermined 되었기 때문.
% 뇌의 구조적 및 기능적 변화와 알츠하이머병 관련 임상 증상 사이의 연관성을 확인, 
% 임상상황에서 영상 기반 예측 모델의 잠재력과 한계에 대한 인사이트를 제공하는 것을 목표
We further employed the Lasso linear regression \citep{lasso} to predict AD-related clinical symptoms including CDR-SOB, ADAS11, ADAS13, and MMSE \citep{score6,score4,score2,score1,score3,score5}.
This analysis aims to elucidate the relationship between structural/functional brain alterations and clinical scores,
offering insights into the potential and limitations of imaging-based predictive models in clinical contexts.
To predict each of these clinical measures,
we leveraged the cognitive decline representations of the last layer derived from trained \ourmodel, 
which is compared with other baselines for brain analysis.
These representations were normalized to a unit interval and subsequently used to train the regression approach via the 5-fold cross validation.

% 모델 예측력 검증: Pearson correlation coefficient와 p-value를 통해 우리의 모델이 임상 점수와 얼마나 밀접한 상관관계를 가지는지 파악하여, 예측 변수로서의 신뢰성을 검토합니다. P-value: 상관계수가 통계적으로 유의미한지
% 예측 오차 비교: RMSE를 통해 예측값과 실제 값 간의 평균 오차를 평가하여, 상대적인 오차 수준을 비교함으로써 모델의 정확도를 확인합니다.
% 설명력 확인: R-squared score는 모델이 데이터의 변동성을 얼마나 잘 설명하는지 나타내므로, 우리의 모델이 AD 관련 예측에서 기존 모델 대비 설명력이 얼마나 높은지를 확인할 수 있습니다.
In Table~\ref{tab:clinical}, we reported the performance comparison between brain network-based methods and \ourmodel.
We used various metrics such as Pearson correlation coefficient, p-value, root mean squared error (RMSE) and R-squared score
to evaluate the effectiveness of our trained model in predicting clinical scores 
% and AD-related features 
compared to other brain-based models.
\ourmodel{} 
% demonstrated strong performance across most metrics, 
consistently showed improved performance over other baselines across all metrics, particularly in the R-squared score; 
exceeding previous best baseline results by 0.73\%p (on CDR-SOB), 2.24\%p (on ADAS11) and 2.83\%p (on ADAS13).
These high R-squared measures indicate that \ourmodel{} effectively captures the variance in the data, 
enhancing its reliability in predicting various AD-related clinical features accurately.
Thus, the neurodegenerative representations of \ourmodel{} demonstrated strong predictive performance across all clinical measures, 
highlighting their underlying associations with AD symptoms.

\begin{figure*}[t!]
    \centering
    \includegraphics[width=0.8\linewidth]{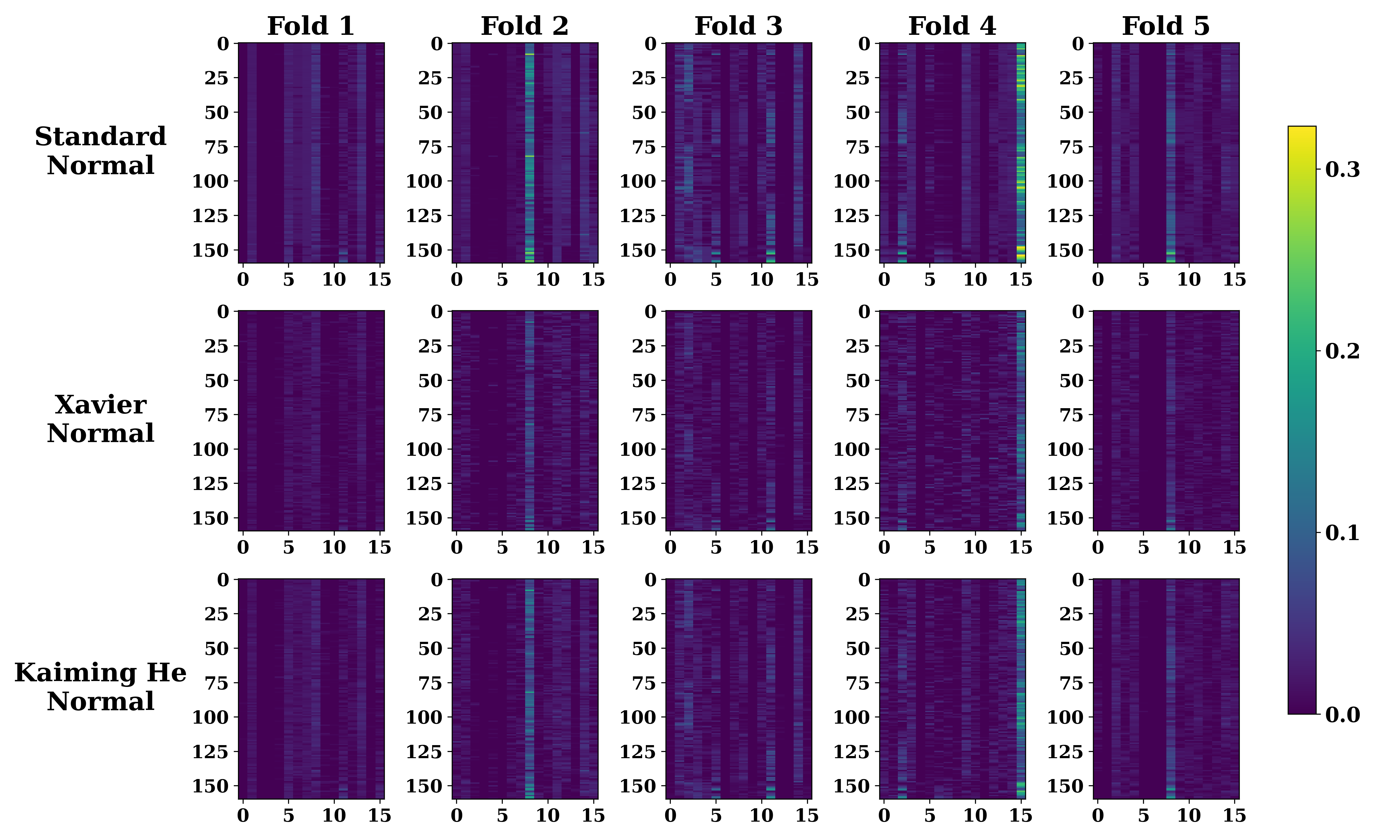}
    \vspace{-5pt} 
    \caption{\small
    Incidence matrices of learned hyperedges for three initialization schemes of $\Phi$ across five folds. The similar activation patterns across initialization schemes indicate that the learned hypergraph structure is robust to the choice of initialization.
    }
    \label{fig:stab_hyperedge_init}
\end{figure*}

\begin{figure*}[t!]
    \centering
    \includegraphics[width=0.99\linewidth]{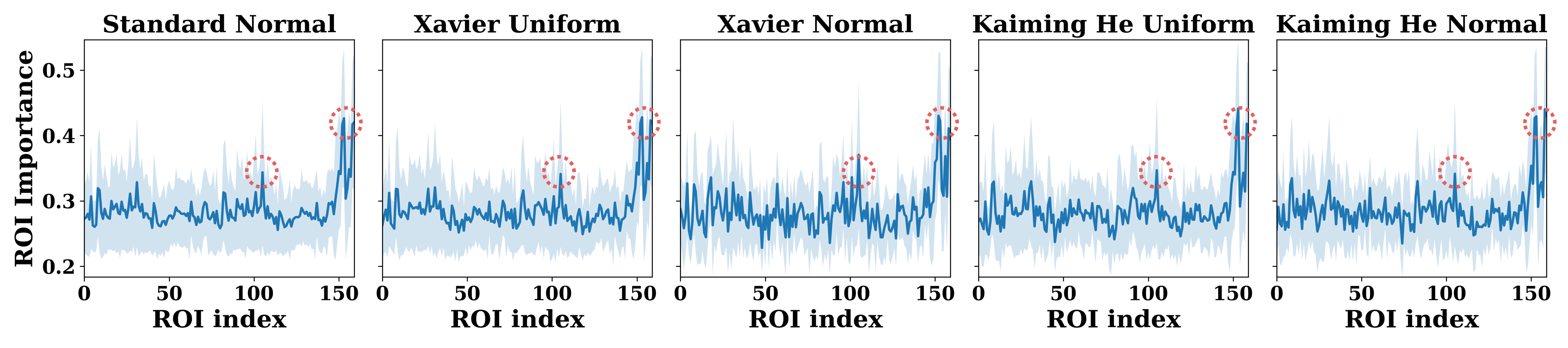}
    \vspace{-5pt} 
    \caption{\small
        Aggregated ROI importance over five folds for five weight initialization schemes of the hyperedge projection $\Phi$. 
        The main peaks and overall patterns remain consistent, 
        indicating that the learned hyperedges and resulting ROI level importance are robust to both folds and initialization.
    } 
    \label{fig:stab_init}
\end{figure*}

\section{Additional Ablation Studies}
\label{sec:add_ablation}
\subsection{Stability of Learned Hyperedges and Initialization Robustness}
\label{subsec:stab_init}
\noindent\textbf{Behavior of Learned Hyperedges.}
To examine how the learned hyperedges behave under different folds and initialization schemes such as Xavier \citep{xavier} and Kaiming He \citep{kaiming}, 
we first visualize the optimized incidence matrices in Fig.~\ref{fig:stab_hyperedge_init}. 
For a fixed dataset and training configuration, the hyperedges converge toward a similar underlying structure during optimization, leading to incidence matrices with comparable activation patterns across initialization schemes.
Reflecting the inherent biological topology of the brain, the multi-scale filtered node features $X_s$ present dominant and consistent patterns, which enable the learning process to rapidly adjust the randomly initialized $\Phi$ toward this convergence.
Across folds, the model is trained on different subsets of subjects, and the hyperedges do not have a fixed ordering, so directly matching individual hyperedge columns across folds is not meaningful. 
Nevertheless, the overall sparsity and the distribution of active regions remain similar, indicating that the structural behavior of the learned hypergraph is largely consistent across all folds.

\begin{table*}[!t]
    \caption{\small
        Classification performance of \ourmodel{} under five initialization schemes for the hyperedge projection matrix $\Phi$ on the ADNI and PPMI datasets.
    } 
    \centering
    \renewcommand{\arraystretch}{1.0}
    \renewcommand{\tabcolsep}{0.11cm}
    \scalebox{0.8}{
        \begin{tabular}{l||cccc|cccc}
        \Xhline{4\arrayrulewidth}
        
        % \multirow{2}{*}{Type} & 
        \multirow{2}{*}{Initialization} & 
        \multicolumn{4}{c|}{ADNI} & 
        \multicolumn{4}{c}{PPMI}\\ \cline{2-9}
        
        & 
        Acc $\uparrow$ & Pre $\uparrow$ & Rec $\uparrow$ & F1s $\uparrow$ & 
        Acc $\uparrow$ & Pre $\uparrow$ & Rec $\uparrow$ & F1s $\uparrow$ \\
        \hline

        Xavier Uniform & 
        $92.0 \pm 2.4$ & $93.8 \pm 3.9$ & $92.7 \pm 2.3$ & $92.2 \pm 3.0$ &
        $73.7 \pm 3.1$ & $63.1 \pm 6.3$ & $58.3 \pm 5.5$ & $60.6 \pm 6.0$ \\

        Xavier Normal & 
        $92.5 \pm 2.1$ & $94.7 \pm 2.0$ & $93.9 \pm 2.5$ & $94.2 \pm 2.5$ &
        $74.3 \pm 4.5$ & $64.1 \pm 6.8$ & $59.4 \pm 5.4$ & $61.6 \pm 5.9$ \\

        Kaiming He Uniform & 
        $92.2 \pm 3.3$ & $93.2 \pm 5.4$ & $92.5 \pm 5.9$ & $92.8 \pm 5.5$ &
        $74.2 \pm 4.2$ & $63.1 \pm 5.6$ & $59.3 \pm 5.2$ & $61.1 \pm 5.4$ \\

        Kaiming He Normal & 
        $92.1 \pm 3.3$ & $93.5 \pm 4.6$ & $92.5 \pm 5.9$ & $92.8 \pm 5.5$ &
        $75.2 \pm 5.7$ & $64.3 \pm 5.5$ & $60.2 \pm 5.2$ & $62.1 \pm 5.3$ \\

        Standard Normal & 
        \textbf{93.2 $\pm$ 2.4} & \textbf{95.4 $\pm$ 1.3} & \textbf{94.2 $\pm$ 1.6} & \textbf{94.7 $\pm$ 1.5} &
        \textbf{76.8 $\pm$ 3.7} & \textbf{66.6 $\pm$ 7.6} & \textbf{60.7 $\pm$ 6.0} & \textbf{62.4 $\pm$ 6.6} \\

        \Xhline{4\arrayrulewidth}
        \end{tabular}}
\label{tab:stab_init}
\end{table*}

Fig.~\ref{fig:stab_hyperedge_init} further shows that these patterns are remarkably stable across both cross-validation folds and initialization schemes, revealing a similar arrangement of dominant and inactive regions regardless of how training begins or how the dataset is partitioned. 
This reproducibility is supported by the multi-scale wavelet filtering, which produces smoothed node representations whose dominant relational structure is shaped by the underlying graph geometry. 
In addition, the subsequent operations, including the use of ReLU, row-wise SoftMax normalization, and Top-K sparsification, suppress small perturbations arising from initialization and emphasize persistent node–hyperedge affinities. 
Thus, the model consistently recovers similar high-order interaction patterns, reflecting the intrinsic stability and robustness of the hyperedge learning process against varying initial conditions.

\noindent\textbf{Stability of ROI-wise Importance.}
For interpretability, the primary criterion is the stability of ROI-level importance, given that the ROIs are fixed and directly comparable across runs.
To examine this stability, Fig.~\ref{fig:stab_init} summarizes ROI importance by aggregating hyperedge activations within each ROI computed by summing each row of the incidence matrices and averaging over folds for each intialization scheme.
The results in Fig.~\ref{fig:stab_init} exhibits highly consistent trends across all five initialization schemes, with the same ROIs repeatedly forming the dominant peaks.
This consistency demonstrates that, irrespective of the initial values of the hyperedge projection matrix, \ourmodel{} reliably identifies a common set of disease-related ROIs, indicating that the model provides a robust and meaningful representation of important brain regions.

\noindent\textbf{Effect of Initialization Schemes on Performance.}
Finally, to evaluate how initialization affects model performance, Table~\ref{tab:stab_init} reports results across five standard initialization schemes.
In our framework, $\Phi$ is treated as a standard learnable parameter initialized with a zero-mean unit-variance Gaussian distribution, without imposing additional structural constraints such as sparsity, non-negativity, or anatomical priors.
Its outputs are then passed through ReLU and a row-wise Softmax to construct the incidence matrix, 
which normalizes scale differences arising from initialization and enables the hypergraph structure to be learned end-to-end from the data. 
Across all initialization schemes, the overall performance remains effectively unchanged, and the gains over baseline methods are consistently preserved.
These results indicate that the model is largely insensitive to the specific choice of initialization, and the performance improvements primarily stem from the proposed multi-scale hypergraph architecture rather than from parameter initialization.

\subsection{Spectral Filter Composition and Multi-Resolution Design Choice}
\label{subsec:filter_choice}

To justify the use of one low-pass and multiple band-pass spectral filters in our multi-scale construction, we compare alternative filter configurations in Table~\ref{tab:ablation_filter}. 
Conventional graph wavelet theory \citep{sgwt} motivates this design by emphasizing that global structure should be captured by a single smoothing (low-pass) component, 
while finer structural variations require multiple band-pass components operating at different resolutions. 
To further verify that this hybrid configuration is necessary, we additionally tested two alternative settings: (1) using only low-pass filters across all scales, and (2) using only band-pass filters.
Using only low-pass filters substantially reduces performance due to excessive smoothing across scales, whereas using only band-pass filters removes the global structural anchor needed for stable hyperedge construction. 
In contrast, the proposed combination of one low-pass and multiple band-pass filters produces the best results on both ADNI and PPMI datasets, 
demonstrating that multi-resolution analysis with a balanced mixture of global and localized spectral responses is essential for effective representation learning in \ourmodel.

\begin{table*}[!t]
    \caption{\small
        Classification performance of \ourmodel{} with different combinations of low-pass and band-pass spectral filters on the ADNI and PPMI datasets.
    }
    \centering
    \renewcommand{\arraystretch}{1.0}
    \renewcommand{\tabcolsep}{0.11cm}
    \scalebox{0.8}{
        \begin{tabular}{c|c||cccc|cccc}
        \Xhline{4\arrayrulewidth}
        
        % \multirow{2}{*}{Type} & 
        \multicolumn{2}{c||}{Number of Filters} & 
        \multicolumn{4}{c|}{ADNI} & 
        \multicolumn{4}{c}{PPMI}\\ \hline
        
        Low pass & Band pass & 
        Acc $\uparrow$ & Pre $\uparrow$ & Rec $\uparrow$ & F1s $\uparrow$ & 
        Acc $\uparrow$ & Pre $\uparrow$ & Rec $\uparrow$ & F1s $\uparrow$ \\
        \hline

        3 & 0 & 
        $92.2 \pm 2.4$ & $94.4 \pm 1.8$ & $91.4 \pm 5.3$ & $92.4 \pm 4.0$ &
        $69.6 \pm 1.8$ & $62.0 \pm 6.1$ & $50.0 \pm 4.7$ & $55.3 \pm 5.1$ \\

        0 & 3 & 
        $91.5 \pm 1.9$ & $94.6 \pm 0.8$ & $91.3 \pm 3.1$ & $92.6 \pm 1.8$ &
        $72.9 \pm 3.3$ & $62.3 \pm 8.2$ & $56.8 \pm 4.5$ & $58.4 \pm 5.8$ \\

        1 & 2 & 
        \textbf{93.2 $\pm$ 2.4} & \textbf{95.4 $\pm$ 1.3} & \textbf{94.2 $\pm$ 1.6} & \textbf{94.7 $\pm$ 1.5} &
        \textbf{76.8 $\pm$ 3.7} & \textbf{66.6 $\pm$ 7.6} & \textbf{60.7 $\pm$ 6.0} & \textbf{62.4 $\pm$ 6.6} \\

        \Xhline{4\arrayrulewidth}
        \end{tabular}}
\label{tab:ablation_filter}
\end{table*}

\begin{table*}[!t]
    \caption{\small
        Comparison of \ourmodel{} variants with static or dynamic hyperedge structure and hyperedge weights, including a dwHGCN style weight-only baseline, evaluated on the ADNI dataset.
        Values in parentheses indicate performance changes relative to dwHGCN.
    }
    \centering
    \renewcommand{\arraystretch}{1.0}
    \renewcommand{\tabcolsep}{0.11cm}
    \scalebox{0.8}{
        \begin{tabular}{l|c|c||cccc}
        \Xhline{4\arrayrulewidth}
        
        % \multirow{2}{*}{Type} & 
        \multirow{2}{*}{Method} & \multirow{2}{*}{Hyperedge structure} & \multirow{2}{*}{Hyperedge weights} & 
        \multicolumn{4}{c}{ADNI} \\ \cline{4-7}
        
        & & & Acc $\uparrow$ & Pre $\uparrow$ & Rec $\uparrow$ & F1s $\uparrow$\\
        \hline

        dwHGCN & Static & Dynamic & 
        $90.2 \pm 1.3$ & $87.2 \pm 7.1$ & $86.3 \pm 6.8$ & $86.2 \pm 6.4$\\ \hline

        \multirow{2}{*}{\ourmodel} & \multirow{2}{*}{Static} & \multirow{2}{*}{Static} & 
        $88.0 \pm 2.6$ & $88.7 \pm 3.3$ & $87.9 \pm 4.3$ & $87.9 \pm 3.6$ \\
        &&& \textcolor{red}{($-2.2$)}
        & \textcolor{blue}{($+1.5$)}
        & \textcolor{blue}{($+1.6$)}
        & \textcolor{blue}{($+1.7$)} \\ \hline
        
        \multirow{2}{*}{\ourmodel} & \multirow{2}{*}{Static} & \multirow{2}{*}{Dynamic} & 
        $89.1 \pm 2.3$ & $88.1 \pm 3.8$ & $87.4 \pm 4.0$ & $87.3 \pm 3.7$ \\
        &&& \textcolor{red}{($-1.1$)}
        & \textcolor{blue}{($+0.9$)}
        & \textcolor{blue}{($+0.9$)}
        & \textcolor{blue}{($+0.9$)} \\ \hline

        \multirow{2}{*}{\ourmodel} & \multirow{2}{*}{Dynamic} & \multirow{2}{*}{Static} & 
        \textbf{93.2 $\pm$ 2.4} & \textbf{95.4 $\pm$ 1.3} & \textbf{94.2 $\pm$ 1.6} & \textbf{94.7 $\pm$ 1.5} \\
        &&&\textcolor{blue}{\textbf{($+3.0$)}}
        & \textcolor{blue}{\textbf{($+8.2$)}}
        & \textcolor{blue}{\textbf{($+7.9$)}}
        & \textcolor{blue}{\textbf{($+8.5$)}} \\

        \Xhline{4\arrayrulewidth}
        \end{tabular}}
\label{tab:static_dynamic}
\end{table*}

\subsection{Effect of Dynamic Hypergraph Structure Learning}
\label{subsec:structure_choice}

To examine the contribution of dynamic hyperedge structure learning, we evaluated \ourmodel{} variants that selectively disable either hyperedge structure updates or hyperedge weights updates, and compared them with a dwHGCN \citep{dwhgcn}. 
As shown in Table~\ref{tab:static_dynamic}, learning hyperedge structure dynamically yields the largest performance improvement across all metrics. 
When both the hyperedge structure and parameters are kept static, performance drops notably, indicating that fixed hyperedges cannot capture subject-specific higher-order interactions. 
Allowing only the hyperedge parameters to update provides limited gains, suggesting that weight refinement alone is insufficient without adjusting the underlying node to hyperedge assignments. 
In contrast, enabling dynamic structure learning while fixing the hyperedge weights still produces a substantial boost, 
demonstrating that adaptively reorganizing ROI groupings is the primary driver of discriminative power. 
These results confirm that the core advantage of \ourmodel{} lies in learning continuous and multi-scale hyperedge structures, which cannot be achieved by weight updates alone.

\section{Model Complexity}
\label{sec:model_complexity}

% Table~\ref{tab:efficiency} compares the efficiency of \ourmodel{} with representative hypergraph-based methods, including HGNN, dwHGCN, and HyperGT. 
% Efficiency is quantified as accuracy per million parameters and accuracy per GFLOP, offering a balanced view of performance relative to computational cost. 
% While \ourmodel{} shows slightly lower efficiency values compared to some baselines due to its multi-scale architecture, the results confirm that it achieves superior classification accuracy with competitive efficiency. 
% This demonstrates that the additional complexity introduced by our adaptive multi-scale design yields substantial performance gains without incurring disproportionate computational overhead.

% Table \ref{tab:efficiency} presents a comparison of efficiency across representative hypergraph-based methods. 
% While \ourmodel{} reports relatively lower values in terms of accuracy per parameter and per GFLOP due to its multi-scale hyperedge architecture, it achieves highly competitive inference runtime. 
% Notably, \ourmodel{} achieves a runtime of 9.98 ms, which is substantially faster than HyperGT (96.61 ms) and remains within a practical range compared to lighter baselines such as HGNN and dwHGCN. 
% These results suggest that the computational overhead introduced by modeling multi-scale high-order relationships does not translate into prohibitive latency. 
% Instead, \ourmodel{} effectively balances the additional structural modeling cost with practical runtime efficiency, reinforcing its suitability for real-world applications.
Table~\ref{tab:efficiency} presents a comparison of efficiency across representative hypergraph-based methods. 
In our design, the hyperedges for high-order modeling increase parameter counts, and FLOPs are mainly driven by spectral decomposition in multi-scale filtering.
Despite this, \ourmodel{} achieves a runtime of 9.98 ms, which is substantially faster than HyperGT (97.06 ms) and still comparable to lighter baselines such as HGNN and dwHGCN. 
Although this multi-scale hyperedge modeling raises memory usage, all methods, including \ourmodel{}, run smoothly on a single GPU, ensuring reproducibility. 
These findings demonstrate that the computational overhead required to capture high-order relationships does not lead to prohibitive latency or practical limitations. 
Instead, \ourmodel{} achieves a balanced trade-off between structural modeling cost and runtime efficiency, while delivering state-of-the-art performance and reinforcing its value for real-world neurodegenerative disease analysis.

\begin{table}[!h]
\centering
    \caption{\small
        Comparison of efficiency between \ourmodel{} and representative hypergraph-based methods.
        Efficiency is evaluated in terms of the number of parameters, FLOPs, and average inference runtime across 5-folds.
        % providing a fair assessment of performance relative to model complexity.
    }
    \renewcommand{\arraystretch}{1.0}
    \renewcommand{\tabcolsep}{0.3cm}
    \scalebox{0.8}{
    \begin{tabular}{c|c|c|c|c|c}
    \Xhline{4\arrayrulewidth}

    \multirow{2}{*}{\textbf{Efficiency}} & \multicolumn{4}{c}{\textbf{Methods}} \\ \cline{2-6}
    & HGNN & dwHGCN& HyperGT & HyBRiD & \ourmodel \\ \hline
    Params (M) & 3.28 & 3.28 & 3.29 & 4.31 & 5.54 \\
    FLOPs (G) & 0.43 & 0.43 & 0.52 & 0.69 & 0.88 \\
    Runtime (ms) & 1.84 & 1.72 & 97.06 & 22.13 & 9.98 \\

    \Xhline{4\arrayrulewidth}
    \end{tabular}
    }
\label{tab:efficiency}
\end{table}

% Although \ourmodel{} incorporates multiple components, the overall computational complexity remains manageable and practical for real-world applications.
% These are the three main components with high complexity 
Formally, the overall complexity consists of three main components in \ourmodel: 
1) graph wavelet transform with $O(KN)$ using Chebyshev approximation, 
2) hypergraph convolution with $O(NMd)$, and 
3) multi-head self-attention with $O(JN^2d)$, 
where $N$ denotes the number of nodes, $M$ is the number of hyperedges, $d$ represents the feature dimension, $J$ is the number of scales, and $K$ indicates the wavelet order used in the Chebyshev approximation.
Importantly, each component can be efficiently approximated or applied sparsely, thereby maintaining efficiency comparable to standard GNNs.
In our current experiments, \ourmodel{} is implemented without such approximations yet already achieves practical efficiency.
Thus, future incorporation of approximation techniques is expected to further enhance its computational efficiency.

\section{Discussions on Our Work}
\label{sec:limitations}

\noindent\textbf{Limitations.} 
Although our method adaptively learns meaningful multi-resolution representations, the choice of the number of scales $J$ remains a dataset-dependent hyperparameter that may require empirical tuning for optimal performance.
Additionally, the integration of multi-scale filtering and hypergraph learning leads to increased memory usage and computation compared to conventional GNNs. 
However, since wavelet filtering and hypergraph convolutions across scales are performed in parallel, the overall runtime remains comparable. 
Furthermore, during inference, the model maintains efficiency as scale-wise operations are precomputed and reused.
These considerations indicate that while \ourmodel{} incurs extra computational overhead, it maintains a balanced trade-off between representational capacity and practical efficiency, ensuring feasibility for real-world scenarios.

\noindent\textbf{High-Order Modeling in Brain Networks.} 
Brain dysfunction in diverse neurodegenerative disorders such as AD and PD arises from coordinated alterations across multiple ROIs, rather than from isolated pairwise connections. 
In the brain, complex functions often rely on pathways that span multiple intermediate synapses, on hub regions that gather and redistribute information, and on synchrony across spatially distant areas. These are all collective phenomena that cannot be fully explained by isolated pairwise connections.
Pairwise GNNs approximate such effects only indirectly and often at the cost of oversmoothing. 
In contrast, hyperedges model multi-ROI dependencies explicitly, and the proposed wavelet coupling across multiple scales captures how these dependencies evolve with scale. 
In this domain, the biological meaning of a hyperedge as a set of jointly altered ROIs and its scale is well defined and can be validated against known circuits and clinical outcomes. 
This makes brain networks a particularly compelling and necessary application area for high order structure learning, beyond what a generic graph formulation can provide.

\noindent\textbf{LLMs Usage.}
We employed LLMs to aid in polishing the writing of this manuscript. No part of the research design, experiments, or analysis was generated by the LLM.

\end{document}